\title{Exclusive Group Lasso for Structured Variable Selection}
\author{\name David Gregoratti \email david.gregoratti@ieee.org \\
    \addr Formerly with the Centre Tecnol\`ogic de Telecomunicacions de
    Catalunya (CTTC/CERCA)\\
    Av.\ C.\ F.\ Gauss, 7, 08860 Castelldefels (Barcelona, Spain)
    \AND
    \name Xavier Mestre \email xavier.mestre@cttc.cat \\
    \addr Centre Tecnol\`ogic de Telecomunicacions de
    Catalunya (CTTC/CERCA)\\
    Av.\ C.\ F.\ Gauss, 7, 08860 Castelldefels (Barcelona, Spain)
    \AND
    \name Carlos Buelga\\
    \addr Formerly with the Centre Tecnol\`ogic de Telecomunicacions de
    Catalunya (CTTC/CERCA)\\
    Av.\ C.\ F.\ Gauss, 7, 08860 Castelldefels (Barcelona, Spain)
  }
\newcommand{\partition}{\mathcal{G}}
\newcommand{\support}{\mathcal{J}}
\newcommand{\supportnext}{\mathcal{K}}
\newcommand{\supportgeneric}{\mathcal{I}}
\newcommand{\tc}{\mathrm{c}}
\newcommand{\tinact}{\mathrm{inact.}}
\newcommand{\toff}{\mathrm{off}}
\DeclareMathOperator{\soft}{\mathcal{S}}
\DeclareMathOperator{\prox}{\mathrm{prox}}
\DeclareMathOperator{\signsupp}{\mathbb{S}_{\pm}}
\newcommand{\ord}[1]{\langle#1\rangle_G}
\newcommand{\md}[1]{\mathbf{#1}}
\newcommand{\ournorm}{\Omega_{\mathrm{excl}}}
\newcommand{\ournormrest}{\Omega_{\mathrm{excl},\support}}
\newcommand{\ournormrestoff}{\Omega_{\mathrm{excl},{\support^\toff}}}
\newcommand{\ournormrestinact}{\Omega_{\mathrm{excl},{\support^\tinact}}}
\newcommand{\ournormrestgeneric}{\Omega_{\mathrm{excl},\supportgeneric}}
\DeclareMathOperator{\supp}{supp}
\providecommand{\norm}[1]{\lVert#1\rVert}
\providecommand{\bignorm}[1]{\bigl\lVert#1\bigr\rVert}
\providecommand{\Bignorm}[1]{\Bigl\lVert#1\Bigr\rVert}
\providecommand{\biggnorm}[1]{\biggl\lVert#1\biggr\rVert}
\providecommand{\abs}[1]{\lvert#1\rvert}
\providecommand{\bigabs}[1]{\bigl\lvert#1\bigr\rvert}
\DeclareMathOperator*{\minimize}{minimize\,}
\newcommand{\tT}{\mathrm{T}}
\DeclareMathOperator{\sign}{sign}
\newcommand{\bm}{\mathbf}
\newcommand{\bs}{\boldsymbol}
\newcommand{\fig}{Figure}
\newcommand{\inputtikz}[1]{\tikz{}}
\begin{document}
\maketitle

\begin{abstract}%
A structured variable selection problem is considered in which the covariates,
divided into predefined groups, activate according to sparse patterns with few
nonzero entries per group. Capitalizing on the concept of \emph{atomic norm}, a
composite norm can be properly designed to promote such \emph{exclusive group
sparsity} patterns. The resulting norm lends itself to efficient and flexible
regularized optimization algorithms for support recovery, like the proximal
algorithm. Moreover, an active set algorithm is proposed that builds the
solution by successively including structure atoms into the estimated support.
It is also shown that such an algorithm can be tailored to match more rigid
structures than plain exclusive group sparsity. Asymptotic consistency analysis
(with both the number of parameters as well as the number of groups growing with
the observation size) establishes the effectiveness of the proposed solution in
terms of signed support recovery under conventional assumptions. Finally, a set
of numerical simulations further corroborates the results.
\end{abstract}

\begin{keywords}
Structured sparsity, exclusive group Lasso, proximal, active set, asymptotic
consistency.
\end{keywords}

%%%%%%%%%%%%%%%%%%%%%%%%%%%%%%%%%%%%%%%%%%%%%%%%%%%%%%%%%%%%%%%%%%%%%%%%%%%%%%%%
%%%%%%%%%%%%%%%%%%%%%%%%%%%%%%%%%%%%%%%%%%%%%%%%%%%%%%%%%%%%%%%%%%%%%%%%%%%%%%%%
%%%%%%%%%%%%%%%%%%%%%%%%%%%%%%%%%%%%%%%%%%%%%%%%%%%%%%%%%%%%%%%%%%%%%%%%%%%%%%%%
\section{Introduction}
The problem of estimating a sparse signal that has some known structural properties has arisen a lot of interest in statistical inference over the past decade. The problem was originally focused on recovering certain data or signal by exploiting a given parsimonious representation that describes it \cite{Candes06,Donoho06}. The underlying idea is that the observation can be modeled as a sparse linear combination of the columns of a certain measurement matrix, a fact that can be exploited in order to retrieve the signal with a number of measurements that is essentially commensurate with the sparsity level of the model rather than the number of free parameters. 

This basic sparse representation can be generalized to include more elaborate descriptions of the data, which have traditionally been referred to as structured sparsity. According to these models, the observation can be described as a linear combination of very few columns of a measurement matrix, where the activation of a certain covariate is inherently coupled with the activation of some additional ones. The most relevant example of this type of inherent signal structure is strong group sparsity \cite{Huang10}, where groups of covariates are simultaneously activated in the underlying linear model. 

Recently, there has been an increased interest in other types of sparsity structure which have been shown to be a more accurate description of data in real applications. This is the case of exclusive group sparsity, which accounts for the fact that the covariates in the underlying model can be collected in groups, so that only few covariates per group are in practice activated. This type of structure is more general than conventional strong group sparsity and therefore more difficult to exploit. Note, in fact, that exclusive group sparsity becomes conventional group sparsity when the activation of a certain covariate in a group is linked to the activation of some fixed covariates in the other ones. This higher degree of abstraction of exclusive group sparsity models has recently spurred its application in a number of fields, as diverse as multi-label image classification \cite{Chen11}, object tracking \cite{Zhang16}, behavioral research \cite{Kok19}, parallel acquisition \cite{Chun17}, MIMO radar \cite{Dorsch17}, gene selection or nuclear magnetic resonance spectroscopy \cite{Cambell17}.

In the recent literature, exclusive group sparsity has also been referred to as ``sparsity in levels'' \cite{adcock17}. Lately, there has been considerable interest in the link between exclusive group sparsity and multi-level sampling methods based on isometries. In particular, \cite{adcock17} provided some non-uniform recovery guarantees of exclusive group sparse signals by $\ell_1$ norm minimization in combination with multi-label sampling of an isometry measurement matrix. Later, in \cite{Li19} and \cite{Bastounis17} these recovery bounds were generalized to uniform guarantees by imposing a certain restricted isometry property in levels of the underlying matrix.  

Additional efforts have recently been focused on improved recovery methods for exclusive group sparsity problems. In particular, several recent works have focused on the use of optimization penalties that mimic the effect of classical Lasso \cite{Tibshirani96} and group Lasso \cite{Yuan2006} regularizers. The equivalent problem in exclusive group sparsity models is usually referred to as exclusive Lasso, and consists of a penalty first introduced in \cite{zhou10a} in the context of multi-task feature selection. This regularizer is in fact a composite norm, where the $\ell_1$ norm is applied group-wise, the outcome of which is then combined according to a conventional $\ell_2$ norm. In this sense, the exclusive group Lasso penalty can be seen as a direct transposition of conventional group Lasso, where the $\ell_2$ norm is applied to the different columns, which are then combined according to an $\ell_1$ paradigm. In exclusive group Lasso these norms are applied in reverse order. 

One of the most extensive studies of the exclusive group sparse regularizer was presented in \cite{Cambell17}. This paper considered this type of regularization in linear regression problems and studied the statistical consistency properties of the associated estimators. The fact that the exclusive Lasso penalty is not continuously differentiable (due to the presence of the $\ell_1$ norm at the group level) has motivated the study of optimized solutions to the associated optimization problem. For example, \cite{Cambell17} proposed a coordinate descent method, whereas \cite{Kong14} and \cite{yamada17a} considered the use of iteratively re-weighted algorithms to solve the penalized problem. Recently, \cite{lin2019dual} proposed a dual Newton based preconditioned proximal point algorithm for a weighted version of the exclusive Lasso.

It should be stressed here that the exclusive group sparse regularizer can be seen as a special case of more general mixed norm regularizers, where the $\ell_p$ norm is applied at the group level and the $\ell_q$ norm is used to combine the result. \cite{Kowalski2009} considered the use of these general mixed norm regularizers in regression problems and provided an expression for the proximal of the square of the weighted exclusive sparsity norm, which was then used as the basis of a thresholded Landweber algorithm. This can be used to formulate direct proximal gradient methods for the exclusive group Lasso, see also \cite{lin2019dual}. 

The objective of this paper is to study the application of exclusive group sparse penalty in order to promote certain sparsity structures beyond covariance exclusivity. To that effect, we will propose an active set algorithm based on the use of exclusive Lasso for agglomerative  activation of covariates according to some promoted sparsity structure. We will also extend the work in \cite{Cambell17} by establishing sharp recovery bounds for signed support recovery. To that effect we will consider the signal recovery via linear regression with exclusive sparsity norm penalization and follow the approach originally established in \cite{Wainwright09}. 

The rest of the paper is organized as follows. Section \ref{sec:formulation} introduces the exclusive group sparsity norm from the mathematical perspective and compares it with some other penalties in the literature that promote structured sparsity. Section \ref{sec:properties} presents some useful properties of the exclusive group sparsity norm that will be used throughout the paper. For minimization problems that include the exclusive group sparsity norm as a regularizer, Section~\ref{sec:proximal} proposes an efficient solving algorithm based on the proximal operator. Section \ref{sec:active_set} introduces an active set algorithm that is able to promote some sparsity structures along a certain evolution path. Section \ref{sec:consistency} studies the conditions that guarantee signed support consistency of the corresponding exclusive Lasso algorithm. Section \ref{sec:numresults} provides a comparative numerical assessment of the proposed algorithm and finally Section \ref{sec:conclusions} concludes the paper. Most of the technical proofs can be found in the Appendix.

%%%%%%%%%%%%%%%%%%%%%%%%%%%%%%%%%%%%%%%%%%%%%%%%%%%%%%%%%%%%%%%%%%%%%%%%%%%%%%%%
%%%%%%%%%%%%%%%%%%%%%%%%%%%%%%%%%%%%%%%%%%%%%%%%%%%%%%%%%%%%%%%%%%%%%%%%%%%%%%%%
%%%%%%%%%%%%%%%%%%%%%%%%%%%%%%%%%%%%%%%%%%%%%%%%%%%%%%%%%%%%%%%%%%%%%%%%%%%%%%%%
\section{Exclusive group sparsity formulation} \label{sec:formulation}

Our objective is the recovery of a certain $p$-dimensional column vector $\md x$ that presents an exclusive group sparsity structure. The entries of $\md x$ will indistinctively be referred to as covariates, variables or entries, and will be denoted as $x_i$, $i \in [p]$, where $[p] = \{1,\ldots,p\}$. We will define the support of $\md x$ as $\support = \supp\{\md x\} = \{i\in[p]: x_i\ne 0\}$. The complementary of the support will be written as $\support^\tc = [p] \backslash \support$.

Let us consider a general partition $\partition$ of the set of indices $[p] = \{1,2,\dots,
p\}$, that is $\bigcup_{G \in \partition} G= [p]$ and $G\cap G' = \emptyset$ for
all $G, G' \in \partition, G\ne G'$. For any $\md x \in \mathbb{R}^p$, $p\in \mathbb{N}$, the exclusive group sparsity norm is defined as 
\begin{equation}\label{eq:norm_def}
\ournorm(\md x) = \sqrt{\sum_{G\in \partition} \norm{\md x_G}_1^2}
\end{equation}
where $\norm{\cdot}_1$ denotes the $\ell_1$ norm and where $\md x_G$ is a $|G|$-dimensional subvector of $\md x$ with entries indexed by
$G$. Observe that, as explained above, this definition corresponds to a composite norm, where the $\ell_1$ norm is applied at the group level ($\norm{\md x_G}_1$, $G \in \partition$) whereas the $\ell_2$ norm combines the contributions from the different groups. The conventional group sparsity norm follows the same approach and can simply be recovered by interchanging the order in which these two norms are applied.
\begin{remark} \label{remark:notation}
 In some parts of the paper, we will need to build a $p$-dimensional vector with all zero entries except for those indexed by $G \in \partition$, which correspond to $\md x_G$. We will denote this $p$-dimensional vector as $\md x_{\{G\}}$.
\end{remark}
The objective of this paper is to explore
the properties of norm $\ournorm(\md x)$ and to derive some efficient algorithms
to tackle optimization problems where $\ournorm(\md x)$ regularizes the solution
in favor of a characteristic sparsity pattern. Specifically, we are interested
in solution vectors where the active entries are evenly distributed among the groups of partition $\partition$ while promoting the selection of a sparse number of variables within each group (hence the name exclusive group sparsity).

The exclusive group sparsity structure enforced by $\ournorm(\md x)$ becomes
apparent once noticing that our norm is the composition between the $\ell_1$
norm (within each group) and the $\ell_2$ norm (among groups). This composite norm can also be seen as the \emph{atomic norm} (see \cite{Chandra2010}) induced by the set of atoms
$$
 \mathcal{A} = \Bigl\{\md a\in \mathbb{R}^p : \md a_G = \alpha_G \md e_i^{\abs{G}},
 \quad i\in[\abs{G}],\quad \alpha_G \in \mathbb{R}, \quad \forall G\in
 \partition, \quad\text{s.to }\sum_{G\in \partition}\alpha_G^2 = 1\Bigr\}
$$
where $\{\md e_i^{\abs{G}}\}_{i=1,2,\dots,\abs{G}}$ denotes the canonical basis of
 $\mathbb{R}^{\abs{G}}$. More formally,
 \begin{equation}\label{eq:norm_atoms}
 \ournorm(\md x) = \inf \Biggl\{ \sum_{\md a \in \mathcal{A}} c_{\md a} : \md x =
 \sum_{\md a \in \mathcal{A}} c_{\md a} \md a,\quad c_{\md a} \ge 0 \Biggr\}.
 \end{equation}
Here, we focus on practical aspects that arise when facing
the regularized optimization problems
\begin{equation}\label{eq:problems}
\minimize_{\md x\in \mathbb{R}^p} L(\md x) + \lambda \ournorm(\md x)
\quad\text{or}\quad
\minimize_{\md x\in \mathbb{R}^p} L(\md x) + \frac{\mu}{2} \ournorm^2(\md x)
\end{equation}
where $L: \mathbb{R}^p \to \mathbb{R}_+$ is a convex and continuously
differentiable loss function. It is worth recalling that the two problems above
can always be made equivalent by properly choosing the values of the two
regularization parameters $\lambda, \mu \in \mathbb{R}_+$, see \cite{Bach12}.

%%%%%%%%%%%%%%%%%%%%%%%%%%%%%%%%%%%%%%%%%%%%%%%%%%%%%%%%%%%%%%%%%%%%%%%%%%%%%%%%
%%%%%%%%%%%%%%%%%%%%%%%%%%%%%%%%%%%%%%%%%%%%%%%%%%%%%%%%%%%%%%%%%%%%%%%%%%%%%%%%
\subsection{Other approaches to structured sparsity}
A noticeable duality exists between the norm in (\ref{eq:norm_def}) and the
group-Lasso penalty $\sum_{G\in\partition} \norm{\md x_G}_2$, which also
consists in the
composition of the $\ell_1$ and $\ell_2$ norms but in the opposite order to
$\ournorm(\md x)$ (see, e.g., \cite{Yuan2006}). As one may expect, the effect is
profoundly different: In the group-Lasso penalty, the group is the structural
atom and, as such, all its elements are either active or set to zero
(\cite{Huang11}). Conversely, as we will prove in the next sections, penalty
$\ournorm(\md x)$ ensures that all groups are active and contain a comparable
number of nonzero regression coefficients.

\cite{Obozinski2011} extends the group-Lasso approach by applying the
$\ell_1/\ell_2$ norm to groups that may overlap (that is, they do not form a
partition of $[p]$ any more, even though all entries of $\md x$ belong to at
least one group). Their approach is similar to the one developed here, in the
sense that both approaches can be formulated as a minimum-weight atomic
representation analogous to the Minkowski functional.
Specifically, let $\mathcal{H}$ denote the family of overlapping groups and
$\md a_H$ be the generic vector supported by $H\in\mathcal{H}$ (the latent vectors
in the nomenclature of that paper). Then, the \emph{latent group Lasso} penalty
is defined as
\begin{equation} \label{eq:latentgrupLASSO}
\Omega_{\textrm{latent}}(\md x) = \inf_{\{\md a_H, c_H\}_{H\in\mathcal{H}}} \Biggl\{ \sum_{H \in \mathcal{H}} c_H :
  \md x = \sum_{H \in \mathcal{H}} c_{H} \md a_H,\quad c_{H} \ge 0,
  \quad \norm{\md a_H}_2 = d_H^{-1}\Biggr\}.
\end{equation}
where $d_H > 0$ is a fixed weight assigned to group $H\in\mathcal{H}$. The benefit
of latent group Lasso over classic group Lasso is the possibility to consider
sparsity patterns with more flexible structure than well-localized groups of
correlated covariates.

Interestingly enough, the affinity between the latent group Lasso and our
penalty (\ref{eq:norm_def}) is not limited to the underlying atomic formulation.
Indeed, it is sometimes possible to define the families of groups $\partition$,
associated to the exclusive group sparse penalty, and $\mathcal{H}$, for the latent
group Lasso, so that the induced sparsity structures are almost equivalent. For
instance, in \cite[Section 6.3]{Obozinski2011}, it is shown that the latent
group Lasso norm in the case $p=4$ and $\mathcal{H} = \{\{1,2\}, \{1, 3\}, \{2, 4\},
\{3, 4\}\}$ is equivalent to the exclusive group sparsity norm with groups $\partition =
\{\{1, 4\}, \{2, 3\}\}$, namely
$$
\Omega_{\textrm{latent}}(\md x) = \ournorm(\md x) = \sqrt{(\abs{x_1} +
\abs{x_4})^2 + (\abs{x_2} + \abs{x_3})^2}.
$$
Even if it is usually not possible to draw an exact equivalence between the two
norms as in this example, one can sometimes map the latent groups $\mathcal{H}$ into
a family $\partition$ that induces a similar, though slightly relaxed, sparsity
structure.
It is worth noticing that, conversely to the exclusive group sparsity norm
in~(\ref{eq:norm_def}), the latent group Lasso penalty usually lacks a
closed-form expression.

Several other works deal with structured sparsity, although there is no direct
correspondence between their models and ours (see, e.g.,
\cite{Shervashidze15,Rao16,Bayram17}).
A special mention is
deserved to \cite{Jenatton11b}, since it inspired the active set algorithm
derived in Section~\ref{sec:active_set}. In that paper the focus is still on
overlapping groups, with the considered norm being
\begin{equation} \label{eq:svspenalty}
\Omega_{\textrm{svs}}(\md x) = \sum_{H\in \mathcal{H}} \norm{\md d_H \odot \md
x}_2
\end{equation}
where $\md d_H$ is a vector supported by $H$ of positive weights and $\odot$
denotes the Hadamard element-wise product (and ``svs'' stands for
\emph{structured variable selection}). The main characteristic of the structured variable selection approach is that it
induces a solution support that is the intersection of a subset of the groups
contained in $\mathcal{H}$. Conversely, the latent group Lasso in (\ref{eq:latentgrupLASSO}) promotes solutions whose support is the
union of a subset of groups in $\mathcal{H}$. The outcome of both approaches is fundamentally different from the support patterns that are favored by the exclusive group sparse penalty, which promotes the activation of very few elements within each group of $\partition$, while guaranteeing that their contribution is evenly distributed across all groups. 
As we
will see in Section~\ref{sec:numresults}, this fact has a significant impact on the
support structures that can be detected.

%%%%%%%%%%%%%%%%%%%%%%%%%%%%%%%%%%%%%%%%%%%%%%%%%%%%%%%%%%%%%%%%%%%%%%%%%%%%%%%%
%%%%%%%%%%%%%%%%%%%%%%%%%%%%%%%%%%%%%%%%%%%%%%%%%%%%%%%%%%%%%%%%%%%%%%%%%%%%%%%%
%%%%%%%%%%%%%%%%%%%%%%%%%%%%%%%%%%%%%%%%%%%%%%%%%%%%%%%%%%%%%%%%%%%%%%%%%%%%%%%%
\section{Norm proprieties} \label{sec:properties}
In this section we provide some basic properties of norm $\ournorm(\md x)$ which
will be useful in the rest of the paper. Besides, they offer a different
perspective for understanding the behavior of the norm as a regularizer and
for grasping more insight into the promoted sparsity structure.

%%%%%%%%%%%%%%%%%%%%%%%%%%%%%%%%%%%%%%%%%%%%%%%%%%%%%%%%%%%%%%%%%%%%%%%%%%%%%%%%
%%%%%%%%%%%%%%%%%%%%%%%%%%%%%%%%%%%%%%%%%%%%%%%%%%%%%%%%%%%%%%%%%%%%%%%%%%%%%%%%
\subsection{Dual norm and subdifferential}
To begin with, we give a closed-form expression for the dual norm associated to
$\ournorm(\md x)$ that, by definition, is given by $\ournorm^*(\md u) = \sup\{\md
u^\tT \md x: \md u \in \mathbb{R}^p, \ournorm(\md x) \le 1\}$. Routine computation
shows that, for the norm in (\ref{eq:norm_def}), the dual norm is
\begin{equation}\label{eq:dual_norm}
\ournorm^*(\md u) = \sqrt{\sum_{G\in\partition} \norm{\md x_G}_\infty^2}
\end{equation}
where we recall that $\mathcal{G}$ is a partition of $[p]$. 

Knowing the dual norm allows the following convenient characterization of the
subdifferential of the norm, that is the set of all subgradients of $\ournorm(\cdot)$
at $\md x  \in \mathbb{R}^p$ (see, e.g., \cite[Remark 1.1]{Bach12}):
\begin{align}
\partial \ournorm(\md x)
&= \{\md u\in \mathbb{R}^p: \ournorm(\md y) \ge \ournorm(\md x) +
  \md u^\tT (\md y - \md x), \forall \md y \in \mathbb{R}^p\} \nonumber \\
&= \{\md u\in \mathbb{R}^p: \md u^\tT \md x = \ournorm(\md x) \text{ and }
    \ournorm^*(\md u) \le 1\}. \label{eq:subdiff}
\end{align}
Indeed, the Fenchel--Young inequality states that, for $\md u, \md x \in \mathbb{R}^p$, we have $\md x^\tT \md u \leq \ournorm(\md x) + \mathbb{I}_{\ournorm^\ast(\md u) \leq 1}$, where $\mathbb{I}_{A}$ is the identity function associated to the event $A$ ($\mathbb{I}_{A}=0$ when $A$ is true and $\mathbb{I}_{A}=+\infty$ otherwise). Furthermore, strict equality holds if and only if $\md u \in \partial\ournorm(\md x)$. We can therefore identify the elements of $\partial\ournorm(\md x)$ with the elements $\md u \in \mathbb{R}^p$ for which Fenchel--Young holds with equality, that is (\ref{eq:subdiff}).

The following characterization of the subdifferential will be helpful in
deriving the results of the next sections.
\begin{lemma}\label{lemma:norm_dual}
For all $\md x \in \mathbb{R}^p\setminus\{\md 0\}$, $\md u \in \partial
\ournorm(\md x)$ if and only if
\begin{align*}
& \abs{u_i} = \frac{\norm{\md x_{G_i}}_1}{\ournorm(\md x)}
\qquad\text{and}\qquad \sign(u_i) = \sign(x_i) &&\forall i: x_i \ne 0\\
&\hspace{3cm}\abs{u_i} \le \frac{\norm{\md x_{G_i}}_1}{\ournorm(\md x)}
&&\forall i: x_i = 0
\end{align*}
where $G_i$ is the (unique) group containing index $i$.
\end{lemma}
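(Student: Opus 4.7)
The plan is to apply the characterization of the subdifferential given in (\ref{eq:subdiff}): $\md u \in \partial \ournorm(\md x)$ if and only if $\md u^\tT \md x = \ournorm(\md x)$ and $\ournorm^*(\md u) \le 1$. The task then reduces to showing that these two conditions, taken together, are equivalent to the pointwise description in the lemma. I would handle the two implications separately.

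For the easier ``if'' direction, I would assume the stated conditions on $\md u$ and directly verify both properties. For the inner product, since $\sign(u_i) = \sign(x_i)$ on the support of $\md x$, I can write $\md u^\tT \md x = \sum_{i:x_i\ne 0} \abs{u_i}\abs{x_i}$, substitute $\abs{u_i} = \norm{\md x_{G_i}}_1/\ournorm(\md x)$, and regroup indices by their group $G$ to obtain $\ournorm(\md x)^{-1}\sum_G \norm{\md x_G}_1\sum_{i\in G}\abs{x_i} = \ournorm(\md x)^{-1}\sum_G\norm{\md x_G}_1^2 = \ournorm(\md x)$. For the dual-norm bound, the hypothesis $\abs{u_i}\le \norm{\md x_{G_i}}_1/\ournorm(\md x)$ on every $i\in G$ gives $\norm{\md u_G}_\infty \le \norm{\md x_G}_1/\ournorm(\md x)$, which, plugged into (\ref{eq:dual_norm}), yields $\ournorm^*(\md u)\le 1$.

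For the ``only if'' direction, I would exploit the chain of inequalities
\[
\ournorm(\md x) = \md u^\tT\md x \;\le\; \sum_i \abs{u_i}\abs{x_i} \;\le\; \sum_{G\in\partition}\norm{\md u_G}_\infty\norm{\md x_G}_1 \;\le\; \ournorm^*(\md u)\,\ournorm(\md x) \;\le\; \ournorm(\md x),
\]
where the second step is Hölder applied group-wise and the third is Cauchy--Schwarz on the vectors $(\norm{\md u_G}_\infty)_G$ and $(\norm{\md x_G}_1)_G$. Since the two ends agree, every inequality is an equality, and I would read off the equality cases one by one: the first forces $\sign(u_i)=\sign(x_i)$ whenever $x_i\ne 0$; the third forces proportionality $\norm{\md u_G}_\infty = \alpha\norm{\md x_G}_1$, and combined with $\ournorm^*(\md u)=1$ (from the last inequality) gives $\alpha = 1/\ournorm(\md x)$; the second (Hölder) forces $\abs{u_i} = \norm{\md u_G}_\infty = \norm{\md x_G}_1/\ournorm(\md x)$ on every active index $i\in G$. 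Any $i$ with $x_i=0$ still lies in some group $G_i$, so the uniform bound $\abs{u_i}\le\norm{\md u_{G_i}}_\infty = \norm{\md x_{G_i}}_1/\ournorm(\md x)$ yields the second bullet.

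The main subtle point I expect to have to nail down carefully is the edge case where an entire group $G$ is inactive, i.e.\ $\norm{\md x_G}_1 = 0$: the proportionality then forces $\norm{\md u_G}_\infty = 0$, so $u_i = 0$ for all $i\in G$, which is exactly what the statement requires (the inequality $\abs{u_i}\le 0$ collapses to equality). Beyond that, everything is a direct application of Hölder and Cauchy--Schwarz together with their respective equality conditions, so no real obstacle is anticipated.
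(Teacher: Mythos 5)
Your proposal is correct and follows essentially the same route as the paper: characterize the subdifferential via (\ref{eq:subdiff}), verify sufficiency directly, and for necessity force equality throughout the chain of H\"older and Cauchy--Schwarz inequalities and read off the equality cases (including the proportionality constant $\alpha = 1/\ournorm(\md x)$). Your explicit treatment of the all-zero-group edge case is a welcome extra detail that the paper leaves implicit.
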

\begin{proof}
Sufficiency is direct once we observe that the conditions on $\abs{u_i}$ imply
that
$$
\norm{\md u_G}_\infty = \frac{\norm{\md x_G}_1}{\ournorm(\md x)}
$$
for all $G\in \partition$.

To prove necessity, we can proceed as follows.
By (\ref{eq:subdiff}) and the definition of dual norm, we have
$$
\ournorm(\md x) = \md u^\tT \md x \le \ournorm(\md x) \ournorm^*(\md u) \le
\ournorm(\md x)
$$
which implies that $\md u \in \partial \ournorm(\md x)$, $\md x \ne \md 0$, if
and only if equality holds everywhere. But,
$$
\md u^\tT \md x = \sum_{G\in \partition} \md u_G^\tT \md x_G
\stackrel{\mathrm{(a)}}{\le} \sum_{G\in \partition} \norm{\md u_G}_\infty
  \norm{\md x_G}_1
\stackrel{\mathrm{(b)}}{\le} \sqrt{\sum_{G\in \partition} \norm{\md u_G}_\infty^2}
  \sqrt{\sum_{G\in \partition} \norm{\md x_G}_1^2}
= \ournorm(\md x) \ournorm^*(\md u)
$$
and, by the previous observation, equality must hold everywhere.

An equality in (a) implies that $\abs{u_i} = \norm{\md u_G}_\infty$ and
$\sign(u_i) = \sign(x_i)$ for all
$i\in G$ such that $x_i \ne 0$, as well as $\abs{u_i} \le \norm{\md u_G}_\infty$
for all $i\in G$ such that $x_i = 0$. Also, (b)~follows from the Cauchy--Schwarz
inequality and holds with equality if and only if $\norm{\md u_G}_\infty =
\alpha \norm{\md x_G}_1$ for some $\alpha > 0$ and for all $G\in \partition$. In
summary, if $\md u \in \partial \ournorm(\md x)$ then, for all $G$, $\md u_G$
has constant amplitude on the support of $\md x_G$ and
$$
1 = \bigl(\ournorm^*(\md u)\bigr)^2
= \sum_{G\in \partition} \norm{\md u_G}_\infty^2
= \alpha^2 \sum_{G\in \partition} \norm{\md x_G}_1^2
= \alpha^2 \ournorm^2(\md x).
$$
Whence, $\alpha = 1/\ournorm(\md x)$ and
$$
\norm{\md u_G}_\infty = \frac{\norm{\md x_G}_1}{\ournorm(\md x)}\qquad \forall G\in
\partition
$$
which yields the requirements on $\abs{u_i}$.
\end{proof}

%%%%%%%%%%%%%%%%%%%%%%%%%%%%%%%%%%%%%%%%%%%%%%%%%%%%%%%%%%%%%%%%%%%%%%%%%%%%%%%%
%%%%%%%%%%%%%%%%%%%%%%%%%%%%%%%%%%%%%%%%%%%%%%%%%%%%%%%%%%%%%%%%%%%%%%%%%%%%%%%%
\subsection{Variational formulations}
Next, we introduce an expression for norm $\ournorm(\md x)$ that is
alternative to the closed form in (\ref{eq:norm_def}) and to the gauge function
of the atom set in (\ref{eq:norm_atoms}), which may also be considered a
variational formulation.

\begin{lemma}[Second variational formulation]\label{lemma:variational2}
For any $\md x\in\mathbb{R}^p$, norm $\ournorm(\md x)$ can also be expressed in
terms of the following maximization problem:
\begin{equation}\label{eq:variational2}
\ournorm(\md x) = \max_{\md u\in \mathbb{R}^p, \{t_G\}} \md u^\tT \md x \quad
  \text{s.to } u_i^2 \le t_G\quad \forall i\in G, \forall G\in \partition \text{ and }
  \sum_{G\in \partition} t_G \le 1.
\end{equation}
\end{lemma}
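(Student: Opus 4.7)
The plan is to recognize that the auxiliary variables $\{t_G\}$ can be eliminated to show the feasible set in the right‑hand side is exactly the unit ball of the dual norm $\ournorm^\ast$, after which the claim reduces to the standard bidual identity for norms.

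First, I would observe that for any fixed $\md u$, increasing any $t_G$ only makes the objective stay the same while the constraints $u_i^2 \le t_G$ become easier; therefore at optimum we can always decrease each $t_G$ down to $\norm{\md u_G}_\infty^2$ (the smallest value compatible with the inner constraints). Equivalently, the feasible set in $\md u$ alone is
\begin{equation*}
\Bigl\{\md u \in \mathbb{R}^p : \sum_{G\in \partition}\norm{\md u_G}_\infty^2 \le 1\Bigr\}
\end{equation*}
which, by the closed form of the dual norm in (\ref{eq:dual_norm}), is precisely $\{\md u : \ournorm^\ast(\md u) \le 1\}$. Hence the right‑hand side of (\ref{eq:variational2}) equals
\begin{equation*}
\sup\bigl\{ \md u^\tT \md x : \ournorm^\ast(\md u) \le 1 \bigr\},
\end{equation*}
which is the definition of $\ournorm^{\ast\ast}(\md x)$.

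Finally, since $\ournorm$ is a norm on $\mathbb{R}^p$ (in particular a closed, proper, positively homogeneous, sublinear function), the bidual identity $\ournorm^{\ast\ast} = \ournorm$ holds, so $\ournorm^{\ast\ast}(\md x) = \ournorm(\md x)$. It remains to justify that the supremum is attained (so that ``$\max$'' is legitimate): the feasible set for $\md u$ is compact (closed and bounded, as the dual norm is a norm), $t_G$ can be taken to equal $\norm{\md u_G}_\infty^2$, and the objective $\md u^\tT \md x$ is continuous; hence the maximum is achieved. The only mildly delicate point is the reduction step that eliminates the $\{t_G\}$ variables while keeping optimality — once that is handled cleanly, the rest is just invoking duality.
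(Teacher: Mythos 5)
Your proof is correct and follows essentially the same route as the paper: the paper writes $\ournorm$ as the dual of its dual norm and then introduces the slack variables $t_G \ge \norm{\md u_G}_\infty^2$ to obtain the conic form, whereas you simply run the same equivalence in reverse by eliminating the $\{t_G\}$. The additional remarks on attainment of the maximum and the bidual identity are fine and only make explicit what the paper leaves implicit.
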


\begin{proof}
This result proceeds directly from writing $\ournorm(\md x)$ as the dual norm
of its dual norm~(\ref{eq:dual_norm})
and casting the underlying maximization problem as a conic program:
\begin{align*}
\ournorm(\md x) &= \max_{\md u \in \mathbb{R}^p} \md u^\tT \md x\quad\text{s.to }
  \ournorm^*(\md u) \le 1 \\
&= \max_{\md u \in \mathbb{R}^p} \md u^\tT \md x\quad\text{s.to }
  \sum_{G\in\partition} \norm{\md u_G}_\infty^2 \le 1 \\
&= \max_{\md u \in \mathbb{R}^p, \{t_G\}} \md u^\tT \md x\quad\text{s.to }
  \sum_{G\in\partition} t_G \le 1\text{ and }\norm{\md u_G}_\infty^2 \le t_G,
  \forall G.
\end{align*}
\end{proof}
This formulation allows us to introduce the scalars $\{t_G\}_{G \in \mathcal{G}}$, which can be seen as an upper bound on the square of the infinity norm of the dual variables, that is $\|\md u_G\|_\infty^2$. 
By formulating the Karush-Khun-Tucker conditions of the problem in  (\ref{eq:variational2}), one readily sees that, inside any group $G$,
$u_i^2 = t_G$ whenever $x_i \ne 0$, $i \in G$, and $u_i^2 \le t_G$ otherwise.
Also, as a consequence of the above lemma, it is evident that the set of solutions of
(\ref{eq:variational2}) is given by the subdifferential $\partial \ournorm(\md x)$
in (\ref{eq:subdiff}).

%%%%%%%%%%%%%%%%%%%%%%%%%%%%%%%%%%%%%%%%%%%%%%%%%%%%%%%%%%%%%%%%%%%%%%%%%%%%%%%%
%%%%%%%%%%%%%%%%%%%%%%%%%%%%%%%%%%%%%%%%%%%%%%%%%%%%%%%%%%%%%%%%%%%%%%%%%%%%%%%%
%%%%%%%%%%%%%%%%%%%%%%%%%%%%%%%%%%%%%%%%%%%%%%%%%%%%%%%%%%%%%%%%%%%%%%%%%%%%%%%%
\section{Proximal operator}\label{sec:proximal}
\begin{remark}
The algorithm described in this section was developed autonomously. However, we
later became aware that similar techniques had already been published, see e.g.\
\cite{lin2019dual}. For this reason, we decided not to include it in the
peer-reviewed version of the paper. In any case, we point out that previously
derived expressions for the proximal refer to the square of the exclusive 
sparsity norm, rather than the norm itself (which is the one derived here). 
\end{remark}
Having assessed the main properties of the exclusive group sparsity norm $\ournorm(\md x)$,
we now steer our focus towards the regularized optimization problems in
(\ref{eq:problems}), starting from
$$
\minimize_{\md x\in \mathbb{R}^p} L(\md x) + \lambda \ournorm(\md x).
$$

Proximal gradient methods (or simply proximal methods) are a very common and
successful approach to this type of problems, where the objective function is
composed by a convex differentiable term (i.e., the loss function $L(\md x)$)
and a convex but nonsmooth term (i.e., the penalty $\lambda \ournorm(\md x)$), as
indicated by the conspicuous number of related works (e.g., \cite{Villa2014,
Jenatton11}). This is especially true in high dimensional settings, where
other approaches like the interior-point method become too complex for practical
applications.

To be more specific, let us first recall the definition of the proximal operator
for norm $\ournorm(\md x)$, namely
\begin{equation}\label{eq:proximal}
\prox_{\ournorm}(\md x) = \arg\min_{\md z} \frac{1}{2}\norm{\md z - \md x}_2^2 +
\ournorm(\md z).
\end{equation}
which suggests a strong connection to the projection of $\md x$ onto a level set
of $\ournorm(\cdot)$ (i.e., $\arg\min_{\md z} \frac{1}{2} \norm{\md z - \md
x}_2^2$ subject to $\ournorm(\md z) \le \beta$ for some $\beta\ge 0$). Indeed, the
proximal method can be seen as an extension of the projected gradient method
and, as such, it shares most of the appealing features first introduced by
\cite{Nesterov83}. In particular, a convergence rate of $O(k^{-2})$, with $k$
the iteration index, can be proven for the iteration
\begin{subequations}\label{eq:fista}
\begin{align}
\md x^k &= \prox_{\frac{\lambda}{\gamma_L}\ournorm}\biggl(\md w^k -
\frac{1}{\gamma_L} \nabla L(\md x^{k-1})\biggr) \label{eq:fista1} \\
\xi^{k+1} &= \biggl(1+\sqrt{1+4(\xi^k)^2}\biggr) / 2 \label{eq:fista2} \\
\md w^{k+1} &= \md x^k + \frac{\xi^k-1}{\xi^{k+1}}(\md x^k - \md x^{k-1})
\end{align}
\end{subequations}
when $L(\md x)$ is Lipschitz continuous gradient with constant
$\gamma_L$ (see \cite{Beck09} or, for a
different yet equivalent step (\ref{eq:fista2}), \cite{Nesterov2007}). Note that
the proximal operator for the scaled norm in (\ref{eq:fista1}) is related
to the original one in (\ref{eq:proximal}) through the trivial identity
$\prox_{\lambda\ournorm}(\md x) = \lambda \prox_{\ournorm}(\lambda^{-1}\md x)$.

Given that all other operations in (\ref{eq:fista}) are pretty straightforward,
the computational bottleneck of the proximal method is the complexity of the
proximal operator itself. Fortunately, the exclusive group sparsity norm results in a
fairly simple proximal operator.

\begin{theorem}
For any $\md x\in\mathbb{R}^p$, the proximal operator of the norm defined by
(\ref{eq:norm_def}) maps $\md x$ to the vector $\prox_{\ournorm}(\md x)$ with
components
$$
(\prox_{\ournorm}(\md x))_G = \soft_{t_G}(\md x_G)
$$
for all $G\in\partition$, where $\soft_t(\md x)$ is the soft-thresholding
operator applied entry-wise, i.e.,
$$
(\soft_t(\md x))_i = \begin{cases}
\frac{x_i}{\abs{x_i}}(\abs{x_i} - t) & \text{if } \abs{x_i} \ge t \\
0 & \text{otherwise}
\end{cases}
$$
where thresholds $\{t_G\}_{G\in\partition}$ are given by
$$
t_G = \frac{1}{n_G + \eta} \sum_{i\in \bar{G}} \abs{x_i}
$$
and where $\bar{G} = \{i\in G: \abs{x_i}\ge t_G\}$, $n_G = \abs{\bar{G}}$ and
$\eta$ is a positive constant such that
\begin{equation}\label{eq:tot_constraint}
\sum_{G\in \partition} t_G^2 = \sum_{G\in\partition}
  \frac{\biggl(\sum_{i\in\bar{G}}\abs{x_i}\biggr)^2}{(n_G + \eta)^2} = 1.
\end{equation}
\end{theorem}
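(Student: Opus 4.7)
The plan is to attack this via the first-order optimality condition for the convex program~(\ref{eq:proximal}). The minimizer $\md z^\ast = \prox_{\ournorm}(\md x)$ is characterized by $\md 0 \in (\md z^\ast - \md x) + \partial\ournorm(\md z^\ast)$, i.e.\ $\md x - \md z^\ast \in \partial\ournorm(\md z^\ast)$. In the generic case $\md z^\ast \neq \md 0$ I will apply Lemma~\ref{lemma:norm_dual} with $\md u := \md x - \md z^\ast$ and $\md z^\ast$ playing the role of $\md x$ there. Setting $t_G := \norm{\md z^\ast_G}_1 / \ournorm(\md z^\ast)$, the lemma yields, group by group: if $z_i^\ast \neq 0$ then $|x_i - z_i^\ast| = t_G$ and $\sign(x_i - z_i^\ast) = \sign(z_i^\ast)$; if $z_i^\ast = 0$ then $|x_i| \leq t_G$.

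From these conditions I read off the soft-thresholding structure. The equality $z_i^\ast = x_i - t_G\sign(z_i^\ast)$ combined with the sign constraint forces $\sign(z_i^\ast) = \sign(x_i)$ and $|x_i| > t_G$ on the support of $\md z^\ast_G$, giving $z_i^\ast = \sign(x_i)(|x_i| - t_G)$; off the support, $|x_i| \leq t_G$ matches the kill-branch of $\soft_{t_G}$. Hence $\md z^\ast_G = \soft_{t_G}(\md x_G)$. Plugging this back with $\bar G = \{i\in G : |x_i|\geq t_G\}$ and $\eta := \ournorm(\md z^\ast)$, the identity $t_G \eta = \norm{\md z^\ast_G}_1 = \sum_{i\in\bar G}|x_i| - n_G t_G$ rearranges into the announced expression $t_G = (n_G + \eta)^{-1}\sum_{i\in\bar G}|x_i|$, while self-consistency $\sum_G t_G^2 = \sum_G \norm{\md z^\ast_G}_1^2/\eta^2 = \ournorm^2(\md z^\ast)/\eta^2 = 1$ reproduces the normalization~(\ref{eq:tot_constraint}).

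The main technical obstacle is the apparent circularity between $\bar G$, $t_G$ and $\eta$. To dispatch it, I would argue that for each fixed $\eta > 0$ the per-group equations admit a unique solution (by ordering the entries of $|\md x_G|$ and checking that the candidate threshold is continuous across transitions of the active set $\bar G$), and that the resulting map $\eta \mapsto \sum_G t_G^2(\eta)$ is continuous and strictly decreasing, with limits $\ournorm^{\ast 2}(\md x)$ as $\eta \to 0^+$ and $0$ as $\eta \to \infty$. Whenever $\ournorm^\ast(\md x) > 1$, there is thus a unique $\eta > 0$ satisfying the normalization. The complementary regime $\ournorm^\ast(\md x) \leq 1$ simply yields $\md z^\ast = \md 0$ directly from $\md x \in \partial\ournorm(\md 0) = \{\md u : \ournorm^\ast(\md u) \leq 1\}$ and falls outside the scope of the stated closed form.
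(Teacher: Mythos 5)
Your proof is correct, but it takes a genuinely different route from the paper's. The paper first invokes the Moreau decomposition $\prox_{\ournorm}(\md x) = \md x - \Pi_{\ournorm^*}(\md x)$, recasts the projection onto the dual-norm ball as a constrained problem in auxiliary variables $\{t_G\}$, and then works through the Lagrangian and the KKT conditions, with $\eta$ appearing as the multiplier of the constraint $\sum_G t_G^2 \le 1$; the piecewise functions $f_G(\eta)$ in (\ref{eq:fG}) are then analyzed to establish existence and uniqueness of $\eta$ and to justify Algorithm~\ref{alg:proximal}. You instead go straight to the optimality condition $\md x - \md z^* \in \partial\ournorm(\md z^*)$ and read everything off Lemma~\ref{lemma:norm_dual}, which yields the soft-thresholding form immediately and, as a bonus, the transparent identifications $t_G = \norm{\md z^*_G}_1/\ournorm(\md z^*)$ and $\eta = \ournorm(\md z^*)$ --- an interpretation of the multiplier that the paper's derivation never makes explicit. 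Your route is shorter and arguably more illuminating; the paper's route does more of the work needed to make the result constructive, since the piecewise structure of $f_G$ is precisely what drives the waterfilling iteration. The one place where your argument remains a sketch is the existence and uniqueness of $\eta$ (continuity and strict monotonicity of $\eta \mapsto \sum_G t_G^2(\eta)$ across changes of the active sets $\bar{G}$); this is exactly the content of the paper's analysis of $f_G$, so the claim is true, but a self-contained writeup would have to carry it out. Your handling of the degenerate regime $\ournorm^*(\md x) \le 1$, where $\md z^* = \md 0$ follows from $\md x \in \partial\ournorm(\md 0)$, is likewise consistent with what the paper obtains from the $\eta = 0$ branch of its KKT analysis.
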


Before getting into the details of the proof, it is worth pointing out that, in
spite of the lack of a closed-form expression, thresholds $\{t_G\}_{G\in
\partition}$ can be easily computed by the \emph{waterfilling-like} procedure \cite[Example 5.2]{Boyd04} in Algorithm~\ref{alg:proximal}. 

\begin{algorithm}
\caption{Compute $\prox_{\ournorm}(\md x)$}\label{alg:proximal}
\begin{algorithmic}[1]
\State \textbf{Input:} vector $\md x\in\mathbb{R}^p$;
\State \textbf{Output:} vector $\prox_{\ournorm}(\md x)\in\mathbb{R}^p$;
\State \textbf{Initialize:} $\bar{G} \gets \{\arg\max_{i\in G} \{\abs{x_i}\}\}$ for all
$G\in \partition$;
\State compute $\eta$ from (\ref{eq:tot_constraint}) by means of, e.g., the
Newton--Raphson method;
\While{condition (\ref{eq:eta_bounds}) does not hold for all groups}
\State choose the group $G^*$ that, by activating the largest inactive
entry, corresponds to the ``change of piece'' in (\ref{eq:fG}) that is closest
to the current value of $\eta$
\State $\bar{G^*} \gets \bar{G^*} \cup \{\arg\max_{i\in G^*\setminus\bar{G^*}}
\abs{x_i}\}$
\State compute $\eta$ from (\ref{eq:tot_constraint}) by means of, e.g., the
Newton--Raphson method;
\EndWhile
\item $t_G \gets \frac{1}{n_G + \eta} \sum_{i\in \bar{G}} \abs{x_i}$ for all
$G\in \partition$;
\item $\prox_{\ournorm}(\md x) \gets (\soft_{t_G}(\md x_G))_{G\in\partition}$.
\end{algorithmic}
\end{algorithm}

\begin{proof}
Leveraging the well known result (see, e.g., \cite{Moreau62, Combettes11}),
$$
\prox_{\ournorm}(\md x) = \md x - \Pi_{\ournorm^*}(\md x)
$$
where $\Pi_{\ournorm^*}(\md x):\mathbb{R}^p \to \mathbb{R}^p$ is the
projection of $\md x$ onto the unitary ball of $\ournorm^*(\cdot)$, i.e.\ the dual norm
of $\ournorm(\cdot)$, the computation of the proximal operator reduces to solving
the optimization problem
$$
\Pi_{\ournorm^*}(\md x) = \arg\min_{\md z\in\mathbb{R}^p} \frac{1}{2}\norm{\md z - \md x}_2^2\quad
\text{s.to } \ournorm^*(\md z) \le 1.
$$

Plugging (\ref{eq:dual_norm}) into the previous identity and introducing the
auxiliary variables $\{t_G\}_{G\in\partition}$, the projection can be
equivalently written as
$$
\Pi_{\ournorm^*}(\md x) = \arg_{\md z}\min_{\md z\in\mathbb{R}^p, \{t_G\}_{G\in\partition}}
\frac{1}{2}\norm{\md z - \md x}_2^2\quad
\text{s.to } \sum_{G\in \partition} t_G^2 \le 1 \text{ and } \norm{\md
z_G}_{\infty} \le t_G, \forall G\in \partition.
$$
The minimization with respect to $\bm z$ yields
$$
(\Pi_{\ournorm^*}(\md x))_G = \md x_G - \soft_{t_G^*}(\md x_G)
$$
where
\begin{align}
\{t_G^*\}_{G\in\partition} &= \arg\min_{\{t_G\}_{G\in\partition}}
\frac{1}{2}\sum_{G\in\partition}\norm{\soft_{t_G}(\md x_G)}_2^2\quad
\text{s.to } \sum_{G\in \partition} t_G^2 \le 1 \nonumber \\
&= \arg\min_{\{t_G\}_{G\in\partition}}
\frac{1}{2}\sum_{G\in\partition} \sum_{i\in G: \abs{x_i} \ge t_G} (\abs{x_i} -
t_G)^2 \quad\text{s.to } \sum_{G\in \partition} t_G^2 \le 1.
\label{eq:proj_equiv}
\end{align}

Let $\ord{i}:[\abs{G}]\to G$ be a bijection such that $\abs{x_{\ord{1}}} \ge
\abs{x_{\ord{2}}} \ge \dots \ge \abs{x_{\ord{|G|}}}$. Then, the Lagrangian of
the above minimization problem reads
$$
\mathcal{L}(\{t_G\}, \eta) = \frac{1}{2} \sum_{G\in\partition} \sum_{i:
\abs{x_{\ord{i}}}\ge t_G} (\abs{x_{\ord{i}}}-t_G)^2 +
\frac{\eta}{2}\Biggl(\sum_{G\in\partition} t_G^2 - 1\Biggr).
$$
Taking derivatives with respect to $t_G$ we obtain
\begin{equation}\label{eq:lagrangian}
\frac{\partial\mathcal{L}}{\partial t_G} = \begin{cases}
\eta t_G & \text{if } t_G > \abs{x_{\ord{1}}} \\
\sum_{i=1}^{n} (t_G - \abs{x_{\ord{i}}}) + \eta t_G & \text{if } \abs{x_{\ord{n+1}}} < t_G \le
\abs{x_{\ord{n}}}
\end{cases}
\end{equation}
for $n=1,2,\dots,\abs{G}$ and where we have introduced the definition $x_{\ord{\abs{G}+1}} = 0$. Note that
$\mathcal{L}(\{t_G\}, \eta)$ is piecewise-defined, continuous and
differentiable (right and left derivatives at $t_G = \abs{x_{\ord{i}}}$ coincide) with respect
to any of the variables $t_G$.

By examining the Karush-Khun-Tucker conditions of the above problem we can reach the conclusion
that either $\eta=0$ and $\sum_{G\in\partition}\norm{\md x_G}_\infty^2 < 1$, or
$\eta > 0$ and $t^\ast_G \le \norm{\md x_G}_\infty$ for all $G\in\partition$. 
Indeed, consider first the case $\eta =0$ and assume that there exists an integer $n_G, 1 \leq n_G \leq |G|$,
such that $\abs{x_{\ord{n_G+1}}} < t^\ast_G \le \abs{x_{\ord{n_G}}}$. Using the 
stationarity conditions obtained by forcing (\ref{eq:lagrangian}) to zero, we know that $t^\ast _G = \frac{1}{n_G} \sum_{i=1}^{n_G}|x_{\ord{i}}| \geq |x_{\ord{n_G}}|$. But since 
$t^\ast_G \leq |x_{\ord{n_G}}|$ by assumption, we see that one must have
$t^\ast_G=\abs{x_{\ord{n_G}}}= \ldots = \abs{x_{\ord{1}}} = \| \md x_{G}\|_\infty$.
When it is not possible to select such an $n_G$, we will have $t^\ast_G > |x_{\ord{1}}| = \|x_G\|_\infty$, 
so that either way we can write $t^\ast_G \geq \|x_G\|_\infty$ for all $G \in\partition$.
Now, by the complementary slackness condition we see that  $\sum_{G\in\partition}\norm{\md x_G}_\infty^2 \leq \sum_{G\in\partition}(t^\ast_G)^2 < 1$ as we wanted to show. 

Consider now the case $\eta>0$. In this
second situation, we focus again on the stationarity conditions obtained by forcing (\ref{eq:lagrangian}) to zero. The case $t^\ast_G > \|\md x_G\|_\infty$ would imply 
$t^\ast_G = 0$, meaning that necessarily $t^\ast_G = \|\md x_G\|_\infty = 0$, a
contradiction. Therefore, one must always have $t^\ast_G \leq \|\md x_G\|_\infty$ if $\eta >0$. Consider the situation where $\abs{x_{\ord{n_G+1}}} < t^\ast_G \le \abs{x_{\ord{n_G}}}$, implying that 
$$
t_G^* = \frac{1}{n_G + \eta}\sum_{i=1}^{n_G} \abs{x_{\ord{i}}}
$$
where here again $n_G$ denotes the number of active entries in group $G$, namely $n_G = \abs{\{i\in G: \abs{x_i} \ge t^\ast_G\}}$ for all $G$. 
Note that, because of
(\ref{eq:lagrangian}), we must have
$$
\abs{x_{\ord{n_G+1}}} < \frac{1}{n_G + \eta} \sum_{i=1}^{n_G} \abs{x_{\ord{i}}} \le
\abs{x_{\ord{n_G}}}
$$
or, equivalently,
\begin{equation}\label{eq:eta_bounds}
\frac{\sum_{i=1}^{n_G-1} \abs{x_{\ord{i}}}}{\abs{x_{\ord{n_G}}}} - (n_G-1)\le
\eta < \frac{\sum_{i=1}^{n_G} \abs{x_{\ord{i}}}}{\abs{x_{\ord{n_G+1}}}} - n_G
\end{equation}
which confirms that $\eta>0$ and, thus, $\eta$ is dual feasible.
Moreover, primal feasibility and complementary
slackness imply that
$$
\sum_{G\in\partition} (t_G^*)^2 = \sum_{G\in\partition}
\frac{\biggl(\sum_{i=1}^{n_G}\abs{x_{\ord{i}}}\biggr)^2}{(n_G + \eta)^2} = 1.
$$

Finally, let us consider the following family of piecewise-defined functions $\{f_G(\cdot)\}$ as
\begin{equation}\label{eq:fG}
f_G(\eta) =
\frac{\biggl(\sum_{i=1}^{k}\abs{x_{\ord{i}}}\biggr)^2}{(k + \eta)^2}\qquad
\text{if }
\frac{\sum_{i=1}^{k-1} \abs{x_{\ord{i}}}}{\abs{x_{\ord{k}}}} - (k-1)\le
\eta < \frac{\sum_{i=1}^{k} \abs{x_{\ord{i}}}}{\abs{x_{\ord{k+1}}}} - k
\end{equation}
for all $k=1,2,\dots,\abs{G}$.
One readily sees that each function $f_G(\eta)$ is continuous and decreasing in
$\eta$ in each definition interval. Then, $f_G(\eta)$ is decreasing in $\eta$
over $[0,+\infty)$, with $f_G(0)=\norm{\md x_G}_\infty^2$ and $f_G(\eta)\to 0$ as
$\eta\to\infty$. Since the sum of decreasing functions is a decreasing function
and since we are considering the case $\sum_{G\in\partition} \norm{\md
x_G}_\infty^2 \ge 1$, we can ensure that $\sum_{G\in\partition} f_G(\eta) = 1$ has
a unique solution and, in turn, that the minimum point of
problem~(\ref{eq:proj_equiv}) can be computed by Algorithm~\ref{alg:proximal}.
\end{proof}

%%%%%%%%%%%%%%%%%%%%%%%%%%%%%%%%%%%%%%%%%%%%%%%%%%%%%%%%%%%%%%%%%%%%%%%%%%%%%%%%
%%%%%%%%%%%%%%%%%%%%%%%%%%%%%%%%%%%%%%%%%%%%%%%%%%%%%%%%%%%%%%%%%%%%%%%%%%%%%%%%
%%%%%%%%%%%%%%%%%%%%%%%%%%%%%%%%%%%%%%%%%%%%%%%%%%%%%%%%%%%%%%%%%%%%%%%%%%%%%%%%
\section{Active set}\label{sec:active_set}

Active set algorithms typically offer substantial computational advantages
in comparison to proximal methods. 
In an active set algorithm, the support of the estimate $\hat{\md x}$ is
recovered iteratively, starting from the empty set (that is $\hat{\md x} = \md
0$). At each successive step, the current support $\support$ is checked against
an optimality condition: If the condition is satisfied, the algorithm ends;
otherwise, new inactive variables are included in the support and a new
iteration is carried out. They constitute an extremely appealing option when
the support of the solution is expected to be fairly small. We present next an active set algorithm for the minimization problem with the
squared-norm regularizer, namely
\begin{equation}\label{eq:problem_norm2}
\minimize_{\md x\in \mathbb{R}^p} L(\md x) + \frac{\mu}{2} \ournorm^2(\md x).
\end{equation}

Note that the proposed active set method is a forward, greedy algorithm, meaning that variables may only enter the
active set and in no case are we allowed to change our mind and set an active
variable back to~0. Therefore, the evolution path must be strategically designed
so that it reflects the way the solution support actually grows. When, as in our
case, the regularizer
enforces a sparsity structure that admits
an ``atomic'' representation [see (\ref{eq:norm_atoms})], a natural approach
consists in assuming that the support expands one atom at a time, from the null
set to the entire $[p]$. In other words, the solution vector is built by
including, at each iteration of the active set algorithm, the atom that most
improves the quality of the solution in terms of duality gap, as explained next. The evolution of
the support throughout the algorithm can thus be represented by a \emph{directed
acyclic graph}, as the
one depicted in \fig~\ref{fig:evolution} for $p=6$ and groups $\partition=\{\{1,3,5\},\{2,4,6\}\}$. 

Let us explain a bit more the example in \fig~\ref{fig:evolution} in order to
illustrate the mechanics of the proposed active set method. The algorithm starts by
assuming $\mathcal{J} = \emptyset$ and $ \hat{\md x}=0$, which corresponds to the 
first node on the left. From this node, one can progress to the right by
following one of the outgoing edges into nine possible support updates. These
new supports are associated to the nodes that are depicted in the second
column of \fig~\ref{fig:evolution}. They correspond to potential solutions with exactly one active element in each of the
two groups of $\partition$, that is one active element in an even position and 
another active element in an odd position. Assume that in the first step the 
algorithm selects the node at the top of the second column, i.e. $\mathcal{J} =
\{5,6\}$ (see also the cyan arrows in \fig~\ref{fig:evolution}).
At the next step, these two entries are assumed active, and the algorithm considers
expansions of the support that require the activation of either one additional 
element of one of the two groups of $\partition$ (giving rise to the supports
$\{4,5,6\},\{2,5,6\},\{3,5,6\},\{1,5,6\}$) or two additional elements of
either one of these two groups (giving rise to the supports $\{3,4,5,6\},\{2,3,5,6\},\{1,4,5,6\},\{1,2,5,6\}$). The algorithm then 
proceeds by selecting the appropriate edge according to a criterion that will
be discussed below until some stopping condition is met. 

Interestingly, this interpretation also implies that we
can artificially modify the evolution graph (for instance, by allowing only a
subset of the evolution path at each step) to enforce sparsity structures that
are not specifically addressed by the regularizer alone. For instance, the
framed configurations in \fig~\ref{fig:evolution} correspond to the evolution
path of the latent group Lasso with overlapping groups $\mathcal{H}=\{\{1,2\},
\{2,3\},\{3,4\},\{4,5\},\{5,6\}\}$. This means that we can apply the proposed
algorithm in order to deal with the sparsity structures that are enhanced in
the latent group Lasso algorithm. We will further investigate
this approach by comparing the two algorithms in the numerical results section.

\begin{figure}
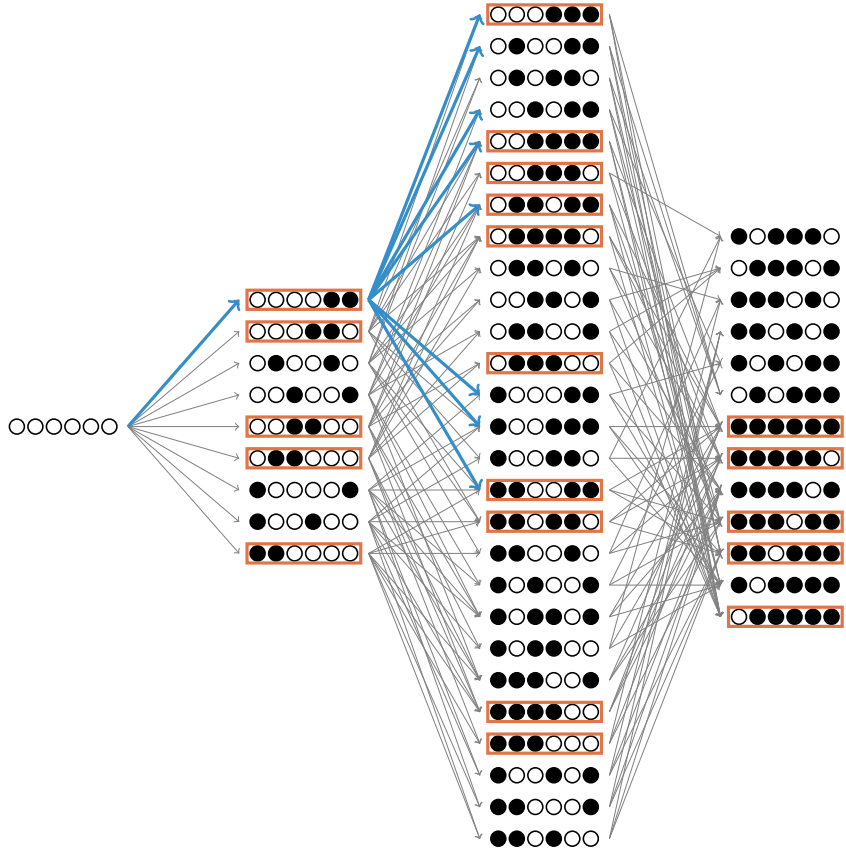

\centering
%\tikzset{external/export=false}
\inputtikz{./figs/single/dag}
\caption{Evolution path of $\ournorm(\cdot)$ for $p=6$ and
$\partition=\{\{1,3,5\}, \{2,4,6\}\}$. The cyan edges refer to the example in the
main text. The framed configurations correspond to
the evolution path of the latent group norm with groups $\mathcal{H}=\{\{1,2\},
\{2,3\},\{3,4\},\{4,5\},\{5,6\}\}$.} \label{fig:evolution}
\end{figure}

Now that we have a suitable evolution path, we only need to specify the optimality
rule that allows, at each iteration, growing the support so that the quality of
the solution improves. To that end, we propose a suboptimal approach that consists in a
two-step rule: a necessary condition and a sufficient condition. The algorithm
starts by checking the necessary condition only, which ensures that the current
support is included in the true one. Once the necessary condition is satisfied, new
variables are included in the solution support until we can ensure that the
duality gap is sufficiently small.

Before proceeding further, let us introduce some helpful notation.
For any set of indices $\mathcal{I}\subseteq [p]$, let $\partition_{\mathcal{I}}
\subseteq \partition$ be the family of groups $G$ with at least one element in
$\mathcal{I}$, that is
$$
\partition_{\mathcal{I}} = \{G \in \partition: G\cap\mathcal{I}\ne \emptyset\}.
$$
Also, let $\support^\tc$ be the complementary set of $\support$ in $[p]$.
The next proposition establishes a necessary property that a solution $\hat{\md x}$
to the problem in (\ref{eq:problem_norm2}) should have in terms of its support, 
which we denote as $\support$. The proof is provided in
Appendix~\ref{apdx:proof_active_set}.

\begin{proposition}[Necessary Condition]\label{prop:active_necessary}
If vector $\hat{\md x}$ with support $\support$ is an optimal point for
(\ref{eq:problem_norm2}), then necessarily
\begin{equation}\label{eq:active_nec_1}
\max_{\supportnext\in\Pi(\support)}
\max_{G\in\partition_{\support}\cap\partition_{\supportnext\setminus\support}}
\frac{\Bignorm{[\nabla L(\hat{\md x})]_{G\cap(\supportnext\setminus\support)}}_{\infty}}
{\norm{\hat{\md x}_{G\cap\support}}_1} \le \mu
\end{equation}
and
\begin{equation}\label{eq:active_nec_2}
[\nabla L(\hat{\md x})]_{G} = \md 0\qquad\forall G: G\cap \support = \emptyset
\end{equation}
where $\Pi(\support)$ is the set of possible active sets that can be reached
from $\support$ following the evolution path. %\dg{See XM's remark.}
\end{proposition}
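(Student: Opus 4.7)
The plan is to derive both conditions directly from the first-order optimality relation for the convex problem~(\ref{eq:problem_norm2}). Since $L$ is differentiable and $\tfrac{1}{2}\ournorm^2$ is convex, $\hat{\md x}$ is optimal if and only if $-\nabla L(\hat{\md x}) \in \partial(\tfrac{\mu}{2}\ournorm^2)(\hat{\md x})$. The first step is to invoke the standard chain rule $\partial(\tfrac{1}{2}f^2) = f\cdot\partial f$ for a nonnegative convex $f$, rewriting the optimality condition as
$$
\nabla L(\hat{\md x}) \;=\; -\mu\,\ournorm(\hat{\md x})\,\md u \qquad\text{for some } \md u\in\partial\ournorm(\hat{\md x}).
$$
In the degenerate case $\hat{\md x}=\md 0$ one has $\support=\emptyset$ and the relation collapses to $\nabla L(\md 0)=\md 0$, which makes (\ref{eq:active_nec_1}) vacuous (empty outer maximum) and (\ref{eq:active_nec_2}) immediate; so I may assume $\hat{\md x}\ne\md 0$ in what follows.

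The next step is to read off the coordinates of $\md u$ using Lemma~\ref{lemma:norm_dual}. After cancellation of the factor $\ournorm(\hat{\md x})$, three regimes arise. If $i\in\support$ lies in group $G_i$, then $[\nabla L(\hat{\md x})]_i = -\mu\,\norm{\hat{\md x}_{G_i}}_1\,\sign(\hat{x}_i)$. If $i\notin\support$ but $G_i\in\partition_{\support}$, then
$$
\bigabs{[\nabla L(\hat{\md x})]_i} \;\le\; \mu\,\norm{\hat{\md x}_{G_i}}_1 \;=\; \mu\,\norm{\hat{\md x}_{G_i\cap\support}}_1,
$$
using that $\hat{\md x}$ vanishes off $\support$. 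Finally, if $i$ lies in a group $G$ with $G\cap\support=\emptyset$, then $\norm{\hat{\md x}_G}_1=0$ forces the subgradient bound $|u_i|\le 0$ and hence $[\nabla L(\hat{\md x})]_i=0$; this third regime is exactly condition (\ref{eq:active_nec_2}).

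To recover (\ref{eq:active_nec_1}) I would specialize the second regime to those inactive indices that could be activated along some edge of the evolution graph. Fix $\supportnext\in\Pi(\support)$ and $G\in\partition_{\support}\cap\partition_{\supportnext\setminus\support}$; then $G\cap(\supportnext\setminus\support)\subseteq G\setminus\support$, so applying the coordinate-wise bound above and taking the $\ell_\infty$ norm over this sub-block yields $\bignorm{[\nabla L(\hat{\md x})]_{G\cap(\supportnext\setminus\support)}}_\infty \le \mu\,\norm{\hat{\md x}_{G\cap\support}}_1$. The denominator is strictly positive because $G\in\partition_{\support}$, so dividing through and taking the maxima over $\supportnext$ and $G$ gives~(\ref{eq:active_nec_1}).

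The main obstacle I anticipate is not in the chain of inequalities themselves but in carefully justifying the chain-rule identity $\partial(\tfrac{1}{2}\ournorm^2)=\ournorm\cdot\partial\ournorm$ at the origin, where $\partial\ournorm(\md 0)$ is the whole dual-norm unit ball rather than a singleton yet is multiplied by $\ournorm(\md 0)=0$, and in aligning the index sets $\partition_{\support}$, $\partition_{\supportnext\setminus\support}$ and $G\cap(\supportnext\setminus\support)$ with the precise ratio appearing in (\ref{eq:active_nec_1}). Both are bookkeeping issues rather than substantive difficulties; everything else is a direct application of Lemma~\ref{lemma:norm_dual}.
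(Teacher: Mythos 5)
Your proof is correct, but it follows a genuinely different route from the paper's. The paper never invokes the subdifferential chain rule here: instead, it computes the one-sided directional derivative of $L(\md x)+\frac{\mu}{2}\ournorm^2(\md x)$ at $\hat{\md x}$ explicitly in an arbitrary direction $\md w$ (inequality~(\ref{eq:limit_equiv}) in the proof of Lemma~\ref{lemma:active_basic}), then restricts to directions supported on $\supportnext\setminus\support$ for $\supportnext\in\Pi(\support)$ and takes group-wise infima over $\md w$, which produces the $\ell_\infty/\ell_1$ ratio in~(\ref{eq:active_nec_1}) as the dual of the group-wise $\ell_1$ term; condition~(\ref{eq:active_nec_2}) is quoted from Lemma~\ref{lemma:active_basic}. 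You instead write the optimality condition as $\nabla L(\hat{\md x})=-\mu\,\ournorm(\hat{\md x})\,\md u$ with $\md u\in\partial\ournorm(\hat{\md x})$ and read off the three coordinate regimes from Lemma~\ref{lemma:norm_dual}; after cancelling $\ournorm(\hat{\md x})$ this recovers exactly the conditions of Lemma~\ref{lemma:active_basic} (indeed your second regime gives the bound on all of $G\cap\support^{\mathrm{c}}$ for active $G$, of which~(\ref{eq:active_nec_1}) is the restriction to $G\cap(\supportnext\setminus\support)$), and your treatment of the origin and of the positivity of the denominator is sound. What each approach buys: yours is shorter and leans on standard subdifferential calculus plus the already-proved Lemma~\ref{lemma:norm_dual}, and it additionally yields the exact equality $[\nabla L(\hat{\md x})]_i=-\mu\norm{\hat{\md x}_{G_i\cap\support}}_1\sign(\hat{x}_i)$ on the support for free; the paper's directional-derivative computation is more self-contained and its intermediate inequality~(\ref{eq:limit_equiv}) is reused to prove the remaining assertions of Lemma~\ref{lemma:active_basic} and Proposition~\ref{prop:active_sufficient}, so the authors get more mileage out of that single calculation.
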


The necessary condition in (\ref{eq:active_nec_2}) basically states that the gradient
of the loss function at the solution (i.e., $\nabla L(\hat{\md x})$) is zero at positions
corresponding to inactive groups (groups that do not intersect with the solution support).
On the other hand, the necessary condition in (\ref{eq:active_nec_1}) focuses on inactive
entries of active groups (i.e. entries not contained in the solution support $\support$
but still contained in a group with active entries). The gradient of the loss function at
the positions corresponding to these entries is, up to a constant $\mu$, upper bounded by 
the 1-norm of the active entries of that group.

The following proposition provides a sufficient condition for a particular $\hat{\md x}$, 
with support $\support$, to be sufficiently close to the solution of the problem in
(\ref{eq:problem_norm2}) in the sense that the duality gap is below a certain threshold. 
To that effect, we will assume that the $|\support|$-dimensional vector 
$\hat{\md x}_\support$ is the solution to the problem constrained to the support
$\support$ (see also Appendix~\ref{apdx:proof_general}) and establish conditions that
guarantee that this is close enough to the solution of the general problem, in the sense
that the duality gap is below a certain $\epsilon>0$. The proof is given in  Appendix~\ref{apdx:proof_active_set}. To formulate this result, we define $L_\support(\md x_\support)$ and $\ournormrest(\md x_\support)$ as the restriction of $L(\md x)$ and $\ournorm(\md x)$ to arguments with support in $\support$.

\begin{proposition}[Sufficient Condition]\label{prop:active_sufficient}
Let $\hat{\md x}$ be a vector of support $\support$ such that $\hat{\md x}_\support$ is the
solution to the reduced problem
$$
\minimize_{\md x_\support\in \mathbb{R}^{\abs{\support}}}
  L_\support(\md x_\support) + \frac{\mu}{2}
  \ournormrest^2(\md x_\support).
$$
Then, if
$$
\begin{cases}
\displaystyle
\sum_{G\in\partition_\support} \Bignorm{\bigl[\nabla
L(\hat{\md x})\bigr]_{G\cap\support^\tc}}_{\infty}^2 \le
  2\mu\epsilon + \mu^2 \ournormrest^2(\hat{\md x}_\support) \\
\displaystyle
\sum_{G\in(\partition_\support)^\tc} \Bignorm{\bigl[\nabla
L(\hat{\md x})\bigr]_{G}}_{\infty}^2 \le 
  2\mu\epsilon +  \mu^2 \ournormrest^2(\hat{\md x}_\support)
\end{cases}
$$
vector $\hat{\md x}$ is an approximate solution to the full problem
(\ref{eq:problem_norm2}) that achieves a duality gap not larger than
$\epsilon>0$.
\end{proposition}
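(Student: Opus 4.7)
The plan is to upper bound the duality gap by evaluating it at a carefully chosen dual point. Since the convex conjugate of $\frac{\mu}{2}\ournorm^2(\cdot)$ equals $\frac{1}{2\mu}(\ournorm^*(\cdot))^2$, the Fenchel dual of problem~(\ref{eq:problem_norm2}) is $\max_{\md u\in\mathbb{R}^p}\bigl\{-L^*(-\md u) - \frac{1}{2\mu}(\ournorm^*(\md u))^2\bigr\}$, and the duality gap at a primal-dual pair $(\hat{\md x},\md u)$ reads $L(\hat{\md x}) + \frac{\mu}{2}\ournorm^2(\hat{\md x}) + L^*(-\md u) + \frac{1}{2\mu}(\ournorm^*(\md u))^2$. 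I would evaluate at $\md u = -\nabla L(\hat{\md x})$: because $L$ is differentiable, the Fenchel--Young inequality holds with equality at $\hat{\md x}$, producing $L(\hat{\md x}) + L^*(\nabla L(\hat{\md x})) = \langle \nabla L(\hat{\md x}),\hat{\md x}\rangle$.

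To simplify this inner product I would invoke the Karush--Kuhn--Tucker conditions of the reduced problem: since $\hat{\md x}_\support$ is its minimizer, there exists $\md s \in \partial\ournormrest(\hat{\md x}_\support)$ with $[\nabla L(\hat{\md x})]_\support = -\mu\ournormrest(\hat{\md x}_\support)\md s$, and Lemma~\ref{lemma:norm_dual} applied to $\ournormrest$ gives both $\md s^\tT\hat{\md x}_\support = \ournormrest(\hat{\md x}_\support)$ and $|[\nabla L(\hat{\md x})]_i| = \mu\|\hat{\md x}_{G_i\cap\support}\|_1$ for every $i\in\support$. Pairing the reduced KKT identity with $\hat{\md x}_\support$ yields $\langle \nabla L(\hat{\md x}),\hat{\md x}\rangle = -\mu\ournormrest^2(\hat{\md x}_\support) = -\mu\ournorm^2(\hat{\md x})$, and substitution collapses the gap to $-\frac{\mu}{2}\ournormrest^2(\hat{\md x}_\support) + \frac{1}{2\mu}(\ournorm^*(\nabla L(\hat{\md x})))^2$.

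The remaining task is to bound $(\ournorm^*(\nabla L(\hat{\md x})))^2 = \sum_{G\in\partition}\|[\nabla L(\hat{\md x})]_G\|_\infty^2$ by splitting the sum along active ($G\in\partition_\support$) and inactive ($G\in(\partition_\support)^\tc$) groups and, within each active $G$, along the sub-blocks $G\cap\support$ and $G\cap\support^\tc$. The pointwise identity above pins down $\|[\nabla L(\hat{\md x})]_{G\cap\support}\|_\infty = \mu\|\hat{\md x}_{G\cap\support}\|_1$ for every active $G$, whose squares sum to exactly $\mu^2\ournormrest^2(\hat{\md x}_\support)$. Combining this with the two hypotheses, which bound the leakage $\sum_{G\in\partition_\support}\|[\nabla L(\hat{\md x})]_{G\cap\support^\tc}\|_\infty^2$ and the inactive contribution $\sum_{G\in(\partition_\support)^\tc}\|[\nabla L(\hat{\md x})]_G\|_\infty^2$, would yield an estimate of the form $(\ournorm^*(\nabla L(\hat{\md x})))^2 \le \mu^2\ournormrest^2(\hat{\md x}_\support) + 2\mu\epsilon$; inserted into the collapsed gap expression, the $\mu^2\ournormrest^2$ pieces cancel and exactly $\epsilon$ remains.

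The hardest point is this last step: on each active group the exact identity reads $\|[\nabla L(\hat{\md x})]_G\|_\infty^2 = \max\bigl(\mu^2\|\hat{\md x}_{G\cap\support}\|_1^2,\|[\nabla L(\hat{\md x})]_{G\cap\support^\tc}\|_\infty^2\bigr)$, and a naive sum-bound $\max(a,b)\le a+b$ would double count the $\mu^2\ournormrest^2(\hat{\md x}_\support)$ baseline that has already been absorbed into the right-hand sides of the two hypotheses. Care is needed to thread the max identity through the two hypotheses so that the baseline is counted exactly once and only the $2\mu\epsilon$ slack propagates, yielding the sharp $\epsilon$ bound rather than the looser $\mu\ournormrest^2+2\epsilon$ estimate that a blunt sum-bound would give.
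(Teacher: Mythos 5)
Your setup is sound and coincides with the paper's: you evaluate the Fenchel gap at the dual candidate $\nabla L(\hat{\md x})$, use differentiability of $L$ to kill the first Fenchel--Young term, and use the optimality of $\hat{\md x}_\support$ for the reduced problem to get $\langle\nabla L(\hat{\md x}),\hat{\md x}\rangle=-\mu\ournormrest^2(\hat{\md x}_\support)$ and $\bigabs{[\nabla L(\hat{\md x})]_i}=\mu\norm{\hat{\md x}_{G_i\cap\support}}_1$ for $i\in\support$. This reproduces exactly the paper's expression (\ref{eq:gap}) together with (\ref{eq:primal-dual}), and up to that point everything checks out.

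The last step, however, is not merely delicate --- it cannot be completed. Set $a_G=\mu^2\norm{\hat{\md x}_{G\cap\support}}_1^2$ and $b_G=\bignorm{[\nabla L(\hat{\md x})]_{G\cap\support^\tc}}_\infty^2$ for $G\in\partition_\support$, and $c_G=\bignorm{[\nabla L(\hat{\md x})]_G}_\infty^2$ for $G\in(\partition_\support)^\tc$. Your target bound is $\sum_G\max(a_G,b_G)+\sum_G c_G\le\sum_G a_G+2\mu\epsilon$, but the hypotheses only control $\sum_G b_G$ and $\sum_G c_G$, each with its \emph{own} full budget of $\mu^2\ournormrest^2(\hat{\md x}_\support)+2\mu\epsilon$. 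Taking $b_G=a_G+\delta_G$ with $\delta_G>0$ and $\sum_G\delta_G=2\mu\epsilon$ (first hypothesis holds), and letting the inactive groups saturate the second hypothesis, gives a left-hand side of $2\sum_G a_G+4\mu\epsilon$, exceeding your target by $\mu^2\ournormrest^2(\hat{\md x}_\support)+2\mu\epsilon$. No rethreading of the max identity removes this: the off-support and inactive-group hypotheses each independently carry a copy of the baseline, so the sum over all groups genuinely double-counts it. The paper avoids the issue by never expanding $[\ournorm^*(\cdot)]^2$ as a sum over groups: its Lemma~\ref{lemma:dual_upper_bound} majorizes $\ournorm^*(\md u)$ by the \emph{maximum} of the three block-restricted dual norms $\ournormrest^*(\md u_\support)$, $\ournormrestoff^*(\md u_{\support^\toff})$, $\ournormrestinact^*(\md u_{\support^\tinact})$, so that each hypothesis is used in a separate branch of the max and the baseline appears exactly once. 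That non-additive bound is the ingredient your argument is missing. Be aware, though, that your exact identity $[\ournorm^*(\md u)]^2=\sum_{G\in\partition}\norm{\md u_G}_\infty^2$ is in tension with that lemma (for singleton groups it gives $\norm{\md u}_2$ on the left versus $\norm{\md u}_\infty$ on the right), so the discrepancy you uncovered is worth raising rather than papering over.
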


The sufficient conditions are established on the magnitude of the gradient of the loss
function evaluated at inactive entries (i.e. entries outside the support $\support$). 
More specifically, the first condition is established on the magnitude of the gradient at
inactive entries of active groups, whereas the second is established on the magnitude of
the gradient at inactive groups (i.e. groups that do not intersect with the support). In
both cases, we consider the sum of squares of maximum magnitude of the gradient at the
inactive entries of each group. It is sufficient to establish that this sum is upper 
bounded by the squared exclusive group sparse penalty plus a small constant that is 
proportional to the duality gap associated to the intended solution $\hat{\md x}$.

Roughly speaking, the two conditions are supporting the idea that we can solve
a simpler, typically well-conditioned version of the problem with a known
support $\support$. This restricted solution $\hat{\md x}$ provides a good
approximation of the solution to original problem as long as the gradient of the
loss function, namely $\nabla L(\cdot)$, is well behaved at $\hat{\md x}$,
meaning that no entry is excessively large as compared to the magnitude of
$\hat{\md x}$. Algorithm~\ref{alg:active} applies this approach to determine the
support of the solution to problem (\ref{eq:problem_norm2}).

\begin{algorithm}
\caption{Active set algorithm.}\label{alg:active}
\begin{algorithmic}[1]
\State \textbf{Input:} Cost function $L$, regularization parameter $\mu$ and
  sparsity upper bound $s$;
\State \textbf{Output:} Active set $\support$ and approximated solution
  $\hat{\md x}$;
\State \textbf{Initialize:} $\support\gets \emptyset$, $\hat{\md x} \gets 0$;
\While{the necessary condition in Proposition~\ref{prop:active_necessary} fails
and $\abs{\support} < s$}
\State find the group that violates Proposition~\ref{prop:active_necessary} the most
  (either the $G\notin \partition_{\support}$ with the highest $\Bignorm{\bigl[\nabla
  L(\hat{\md x})\bigr]_G}_\infty$ or the $G\in\partition_{\support}$ with the largest product
  $\Bignorm{\bigl[\nabla L(\hat{\md x})\bigr]_{G\cap \support^\tc}}_\infty
  \norm{\hat{\md x}_{G\cap\support}}_1^{-1}$);
\State $\support \gets \support \cup \Bigl\{i=\arg\max_{i\in G} \bigabs{\bigl[\nabla
  L(\hat{\md x})\bigr]_i}\Bigr\}$;
\State compute $\hat{\md x}$ given the current active set $\support$;
\EndWhile
\While{the sufficient condition in Proposition~\ref{prop:active_sufficient}
  fails and $\abs{\support} < s$}
\State $\support \gets \support \cup \Bigl\{i=\arg\max_{i\notin \support}
  \bigabs{\bigl[\nabla L(\hat{\md x})\bigr]_i}\Bigr\}$;
\State compute $\hat{\md x}$ given the current active set $\support$.
\EndWhile
\end{algorithmic}
\end{algorithm}

%%%%%%%%%%%%%%%%%%%%%%%%%%%%%%%%%%%%%%%%%%%%%%%%%%%%%%%%%%%%%%%%%%%%%%%%%%%%%%%%
%%%%%%%%%%%%%%%%%%%%%%%%%%%%%%%%%%%%%%%%%%%%%%%%%%%%%%%%%%%%%%%%%%%%%%%%%%%%%%%%
%%%%%%%%%%%%%%%%%%%%%%%%%%%%%%%%%%%%%%%%%%%%%%%%%%%%%%%%%%%%%%%%%%%%%%%%%%%%%%%%
\section{Support consistency}\label{sec:consistency}
Having described some efficient approaches to the minimization
problems in (\ref{eq:problems}), we investigate next whether their solutions
provide consistent estimates of the true parameter vector $\md x^*$ in the
classic linear regression framework where
\begin{equation} \label{eq:lossLS}
L(\md x) = \frac{1}{2n}\norm{\md y - \md A \md x}_2^2
         = \frac{1}{2n}\norm{\md A (\md x^* - \md x) + \md w}_2^2.
\end{equation}
Here, the observations $\md y \in \mathbb{R}^n$ are assumed to depend on $\md
x^*$ according to the linear model $\md y = \md A \md x^* + \md w$, where $\md
A\in \mathbb{R}^{n\times p}$ is a (known) feature matrix and $\md w\in
\mathbb{R}^n$ is a noise vector.

As for other Lasso-like estimators, it is expected that (\ref{eq:problems})
promotes sparsity in the parameter vector at the cost of some shrinkage of the
magnitude of their entries \cite{Oymak13}. For this reason, our primary interest is not in bounding the
estimation error but rather in whether the regularized problems in
(\ref{eq:problems}) are capable of reconstructing the signed support of $\md
x^*$, which we will hereafter denote by $\signsupp(\md x^*)$, with
$$
\bigl[\signsupp(\md x)\bigr]_i = \begin{cases}
  +1 &\text{if } x_i > 0 \\
  -1 &\text{if } x_i < 0 \\
   0 &\text{if } x_i = 0.
\end{cases}
$$
Indeed, if the support is recovered correctly, a more precise estimate can be
obtained by solving a restricted optimization problem without regularization
terms.

Our interest will be on the probability of recovering $\signsupp(\md x)$ when $n$
grows large. To that effect, we consider a sequence of feature matrices $(\md A_n)_{n \geq 1}$ 
and observation vectors $(\md y_n)_{n \geq 1}$ of growing dimensions, indexed by $n$. To simplify the
notation, we sometimes omit the subscript $n$ in all these quantities when their dimension is clear from the context. Moreover, we allow $p$ to also depend on $n$,
in the sense that $p=p(n) \rightarrow \infty$ as $n \to \infty$. For each $n$
we consider the estimated parameter vector
\begin{equation}\label{eq:regression}
\hat{\md x}_n = \arg\min_{\md x\in \mathbb{R}^p} \frac{1}{2n}\norm{\md y_n - \md A_n \md
x}_2^2 + \lambda_n \ournorm(\md x)
\end{equation}
where $\lambda_n \to 0$ as $n \to \infty$. The underlying partition $\partition$ in
 the definition of the norm $\ournorm$ is also allowed to depend on $n$, so that both the number of groups and their elements are allowed to depend on $n$ (we choose to obviate this dependence in the notation for simplicity). 

In the following, we will denote by $\md A_{\supportgeneric}$ the submatrix of $\md A_n$ corresponding to
the columns indexed by the set $\supportgeneric \subset [p]$ (this index set is allowed to depend on $n$, even if this is omitted in the notation). Moreover, we will 
make the following assumptions.
\begin{description}
\item[(A1)] The observation $\md y_n$ can be expressed as $\md y_n = \md A_n \md x_n^\ast + \md w_n$ where the entries of $\md w_n$ are independent and identically distributed subgaussian random variables with zero mean and variance $\sigma^2$.

\item[(A2)] Denoting $\support = \supp(\md x_n^*)$, which may generally depend on $n$ (even if we choose to omit this in the notation). Then, the minimum and maximum eigenvalues of
$n^{-1} \md A_{\support}^\tT \md A_{\support}$ are bounded away from zero. In other words, there exists a positive constant $C_{\min}$ such that
$$
\inf_n \lambda_{\min}\biggl(\frac{1}{n} \md A_{\support}^\tT \md A_{\support} \biggr) \geq C_{\min} > 0
$$
where $\lambda_{\min}$ denotes the minimum eigenvalue of an Hermitian matrix. We additionally require that there exists a positive $C_\infty$ constant such that
\begin{equation} \label{eq:norminftyinverse}
    \sup_n \left\| \left( \frac{1}{n} \md A_{\support}^\tT \md A_{\support} \right)^{-1} \right\|_{\ournormrest^\ast \to  \infty} < C_\infty
    < +\infty
\end{equation}
where $\|\cdot\|_{\ournormrest^\ast \to  \infty}$ is the operator norm between the spaces $(\mathbb{R}^{|\support|},\ournormrest^\ast)$ and $(\mathbb{R}^{|\support|},\norm{\cdot}_\infty)$.

\item[(A3)] The absolute value of the non-zero entries of the true parameter vector $\md x_n^\ast$ are 
contained in a compact interval of the positive real axis independent of $n$, which is denoted as $[|x^\ast|_{\min} ,|x^\ast|_{\max} ]$. In other words,
$$
0<|x^\ast|_{\min} \leq \inf_n \min_{j \in \support} |x_{n,j}^\ast| \leq 
\sup_n \max_{j \in \support} |x_{n,j}^\ast| \leq |x^\ast|_{\max} < +\infty
$$
where $x^\ast_{n,j}$ is the $j$th entry of $\md x^\ast_n$. Furthermore, the support of 
$\md x_n^\ast$ does not asymptotically privilege any of the groups, in the sense that 
\begin{equation} \label{eq:balancedblocks}
    \limsup_{n \to \infty} \frac{ \max_{G \in \partition} |G \cap \support|}{\min_{G \in \partition} |G \cap \support|} < +\infty.
\end{equation}

\item[(A4)]
There exists a sequence of positive numbers $\gamma_n \in (0,1) $ such that 
\begin{equation}\label{eq:incoherence}
\min_{G \in \partition} \left( \frac{\norm{\md x_G^\ast}_1}{\ournorm(\md x_n^\ast)} - \left\| \md A_{G \cap \support^\tc}^\tT \md A_{\support} \left(\md A_\support^\tT \md A_\support\right)^{-1} \right\|_{\ournormrest^* \to \infty} \right) > \gamma_n > 0.
\end{equation}
\end{description}

The above assumptions are similar in nature to those that are typically imposed in this type of analysis (see, e.g., \cite{Wainwright09,
Jenatton11b, Obozinski2011}), and the conventional assumptions can be recovered under the 
particular case where the partition $\partition$ consists of only one group. In particular,
the condition in (\ref{eq:norminftyinverse}) generalizes the conventional assumption that 
the $\ell_\infty$-norm of $n (\md A_\support^\tT \md A_\support)^{-1}$ is bounded to the 
general case where the partition $\partition$ is not trivial. More specifically, if we 
use the shorthand notation $\md B = n (\md A_\support^\tT \md A_\support)^{-1}$, we see that
we are interested in matrices $\md B = [\bs\beta_1\quad \bs\beta_2 \quad \cdots \quad
\bs\beta_{|\support|}]^\tT \in \mathbb{R}^{\abs{\support}\times \abs{\support}}$ such that
$$
\norm{\md B}_{\ournormrest^* \to \infty} = \sup_{\substack{\md z\in
\mathbb{R}^{\abs{\support}}\\ \ournormrest^*(\md z) \le 1}}
\norm{\md B \md z}_\infty = \max_{1 \leq i \leq \abs{\support}}\ournormrest(\bs\beta_i)
$$
is bounded in $n$. By Jensen's inequality and the fact that $\sum_i{\abs{x_i}^2}<(\sum_i{\abs{x_i}})^2$, we see that $\abs{\partition}^{-1/2} \norm{\md x_\support}_1 \leq \ournormrest(\md x_\support) \leq \norm{\md x_\support}_1$. In particular, $\norm{\md B}_{\ournormrest^* \to \infty} \leq \norm{\md B}_{\infty\to\infty}$, meaning
that we are generally considering a wider set of matrices as compared to the conventional $\ell_1$ case.

Assumption (A3) implies that the amplitude of
the optimum parameter vector should not completely vanish or grow without bound on any of the groups of the
partition. Furthermore, the support that this parameter vector shares with each of the 
groups of the partition should be asymptotically balanced, in the sense that the support
is not allowed to grow asymptotically faster in any of the groups. In the trivial case where $\partition$ has only one group, this means that the support should not grow at the same 
rate as the potential number of variables $p$.

Finally, assumption (A4) is perhaps the most technical and difficult to interpret. In the
trivial case where $\partition$ has only one group, this condition takes the familiar form
$$
\norm{\md A_{\support^\tc}^\tT \md A_{\support} \left(\md A_\support^\tT \md
A_\support\right)^{-1} }_{\infty\to\infty} < 1- \gamma_n
$$
which has been widely used in e.g.\ \cite{Wainwright09} when $\gamma_n$ is a fixed constant.
In our case, the assumption reflects the fact that we want to allow for the situation where the number of groups of the 
partition $|\partition|$ increases without bound. In this situation, we may have 
$G \in \partition$ for which
$$
\frac{\norm{\md x^\ast_G}_1}{\ournorm(\md x_n^\ast)} \to 0
$$
as $n \to \infty$. Hence, according to (A4) we need to impose that
$$ 
\norm{\md A_{G \cap \support^\tc}^\tT \md A_{\support} \left(\md A_\support^\tT \md A_\support\right)^{-1}}_{\ournormrest^\ast \to \infty} < \frac{\norm{\md x^\ast_G}_1}{\ournorm(\md x_n^\ast)} \to 0
$$ 
so that obviously $\gamma_n \to 0$ as well. The sequence $\gamma_n$ controls the rate at
which the two quantities on either side of the above inequality stay bounded away from one another as $n$ grows large. 

Having presented the main technical assumptions, we now present the sufficiency result for consistency in signed support recovery. 

\begin{theorem}\label{thm:consist_suffic}
Consider the assumptions (A1)--(A4), 
and assume that $\lambda_n$ is chosen so that $\lambda_n \rightarrow 0$ while $\log p / n \to 0$ as $n \to \infty$. Furthermore, assume that there exists an $\eta>0$ such that
\begin{equation}\label{eq:consist_lambda}
\lambda_n^2 \gamma_n^2 n^{1-\eta} \rightarrow \infty.
\end{equation}
Then, for all $n$ sufficiently large,
$$
\Pr (\signsupp(\hat{\md x}_n) = \signsupp(\md x_n^\ast) ) \geq 1 - \exp(-n^\eta).
$$
\end{theorem}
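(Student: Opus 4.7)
The plan is to follow the primal-dual witness (PDW) approach pioneered by Wainwright, adapted to the exclusive group Lasso via the subdifferential characterization in Lemma~\ref{lemma:norm_dual}. We construct a candidate $\hat{\md x}_n$ with the correct signed support, together with a dual certificate $\md u \in \partial \ournorm(\hat{\md x}_n)$, and show that with probability at least $1-\exp(-n^\eta)$ the pair satisfies the KKT conditions of~(\ref{eq:regression}), which (together with the invertibility guaranteed by assumption (A2)) forces any solution of~(\ref{eq:regression}) to share this signed support.

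First, I would introduce the \emph{restricted minimizer} $\tilde{\md x}_\support$ of the loss plus $\lambda_n \ournormrest(\cdot)$ restricted to arguments supported on $\support$, and define $\hat{\md x}_n$ by zero-padding. By the KKT conditions of the restricted problem there exists $\tilde{\md u}_\support \in \partial \ournormrest(\tilde{\md x}_\support)$ with
\begin{equation*}
\tilde{\md x}_\support - \md x_\support^\ast = \left(\tfrac{1}{n}\md A_\support^\tT \md A_\support\right)^{-1}\left(\tfrac{1}{n}\md A_\support^\tT \md w - \lambda_n \tilde{\md u}_\support\right).
\end{equation*}
I would then \emph{define} $\md u_{\support^\tc}$ from the off-support stationarity equation, yielding
\begin{equation*}
\md u_{\support^\tc} = \md A_{\support^\tc}^\tT \md A_\support\left(\md A_\support^\tT \md A_\support\right)^{-1}\tilde{\md u}_\support + \tfrac{1}{n\lambda_n}\md A_{\support^\tc}^\tT(\md I - \md P_\support)\md w,
\end{equation*}
where $\md P_\support$ is the orthogonal projector onto $\mathrm{range}(\md A_\support)$. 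By construction $(\hat{\md x}_n,\md u)$ satisfies stationarity; the remaining task is to verify that $\md u \in \partial \ournorm(\hat{\md x}_n)$ in the sense of Lemma~\ref{lemma:norm_dual}.

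Next, I would bound the on-support error. Using assumption (A2) and $\ournormrest^\ast(\tilde{\md u}_\support) \le 1$, I would obtain
\begin{equation*}
\norm{\tilde{\md x}_\support - \md x_\support^\ast}_\infty \le \Bignorm{\bigl(\tfrac{1}{n}\md A_\support^\tT \md A_\support\bigr)^{-1}\tfrac{1}{n}\md A_\support^\tT \md w}_\infty + \lambda_n C_\infty.
\end{equation*}
The noise term is a zero-mean sub-Gaussian vector of length $|\support| \le p$ with per-coordinate variance proxy of order $\sigma^2/n$, so a union bound gives $\norm{\tfrac{1}{n}\md A_\support^\tT \md w}_\infty = O(\sigma\sqrt{\log p / n})$ with probability at least $1-\exp(-c n^\eta)$ using $\log p / n \to 0$. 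Combined with $\lambda_n \to 0$ and assumption (A3), this ensures $\norm{\tilde{\md x}_\support - \md x_\support^\ast}_\infty < |x^\ast|_{\min}$ for large $n$, so signs on $\support$ are preserved. A similar quantitative argument then shows that $\norm{\hat{\md x}_G}_1/\ournorm(\hat{\md x}_n) \to \norm{\md x_G^\ast}_1/\ournorm(\md x_n^\ast)$ uniformly in $G$; assumption (A3), and in particular~(\ref{eq:balancedblocks}), is what keeps this ratio comparable to $\norm{\md x_G^\ast}_1/\ournorm(\md x^\ast_n)$ and prevents any group ratio from degenerating faster than $\gamma_n$.

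The critical step is the off-support dual feasibility. For each group $G$ and any $i \in G\cap\support^\tc$, the triangle inequality applied in the $\ournormrest^\ast \to \infty$ operator norm yields
\begin{equation*}
\abs{u_i} \le \Bignorm{\md A_{G\cap\support^\tc}^\tT \md A_\support\left(\md A_\support^\tT \md A_\support\right)^{-1}}_{\ournormrest^\ast \to \infty} + \tfrac{1}{n\lambda_n}\Bignorm{\md A_{G\cap\support^\tc}^\tT(\md I - \md P_\support)\md w}_\infty.
\end{equation*}
Assumption (A4) bounds the first term by $\norm{\md x_G^\ast}_1/\ournorm(\md x_n^\ast) - \gamma_n$. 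For the second term, $(\md I-\md P_\support)\md w$ is sub-Gaussian with parameter $\sigma$, so each coordinate of $\tfrac{1}{n}\md A_{\support^\tc}^\tT(\md I-\md P_\support)\md w$ is sub-Gaussian with variance proxy of order $\sigma^2/n$; a union bound over the at most $p$ off-support coordinates yields
\begin{equation*}
\Pr\!\left(\tfrac{1}{n}\bignorm{\md A_{\support^\tc}^\tT(\md I-\md P_\support)\md w}_\infty \ge \tfrac{\lambda_n \gamma_n}{4}\right) \le 2p\exp\!\left(-c\, n\lambda_n^2\gamma_n^2/\sigma^2\right).
\end{equation*}
Under the scaling assumption~(\ref{eq:consist_lambda}), $n\lambda_n^2\gamma_n^2 \ge n^\eta$ for $n$ large, so this probability is at most $\exp(-n^\eta)$ up to constants and to the $\log p = o(n)$ factor. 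Combining the on- and off-support bounds, $\abs{u_i}$ is strictly less than $\norm{\hat{\md x}_{G_i}}_1/\ournorm(\hat{\md x}_n)$ for every $i \in \support^\tc$, while on $\support$ the equality and sign requirements of Lemma~\ref{lemma:norm_dual} hold by construction. This certifies the PDW pair as a KKT solution with the correct signed support on the complement of an event of probability at most $\exp(-n^\eta)$.

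The main obstacle will be keeping the error propagation through $\ournorm$ uniform across groups: a careful use of (A3), in particular~(\ref{eq:balancedblocks}), is needed to guarantee that the shrinkage caused by $\lambda_n \tilde{\md u}_\support$ does not perturb any group ratio $\norm{\md x_G^\ast}_1/\ournorm(\md x^\ast_n)$ by more than $\gamma_n/2$, so that the incoherence margin in (A4) survives the transition from $\md x_n^\ast$ to $\hat{\md x}_n$. The second delicate point is dimensional accounting in the concentration step: since $|\support|$ and $p$ may both grow with $n$, the sub-Gaussian tails must be balanced against $\log p$, which is precisely why (\ref{eq:consist_lambda}) is assumed in the form it is.
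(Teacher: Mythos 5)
Your proposal is correct and follows essentially the same route as the paper's proof: the Wainwright primal--dual witness construction with the restricted minimizer, the explicit formula for $\check{\md u}_{\support^\tc}$, the split into a sign-consistency event and a group-wise dual-feasibility event, and the same use of (A2) for the $\ell_\infty$ error, (A4) plus sub-Gaussian concentration of $\md A_{\support^\tc}^\tT \Pi_{\md A_\support^\perp}\md w_n$ for the off-support certificate. The one step you leave as a sketch --- showing that the perturbation of each group ratio $\norm{\md x_G^\ast}_1/\ournorm(\md x_n^\ast)$ under the transition to $\check{\md x}_n$ stays below $\gamma_n/2$ --- is precisely the part the paper works out quantitatively via the quantity $\Phi_\support(\partition)$ and the bound $\gamma_n < \abs{\partition}^{-1/2}$, and you correctly identify both the obstacle and the assumption ((A3), in particular~(\ref{eq:balancedblocks})) that resolves it.
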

\begin{proof} 
See Appendix \ref{apdx:proof_consistency}.
\end{proof}

The above result imposes some important requirements on the sequence $\lambda_n$ to 
guarantee the asymptotic recovery of the signed support. In particular, the sequence
$\lambda_n$ must converge to zero at a sufficiently slow rate, so that (\ref{eq:consist_lambda}) can be ensured for some $\eta >0$. Let us analyze this
condition under some specific scenarios.

Consider first the case where the number of groups $\abs{\partition}$ is fixed and does
not depend on~$n$. Observe that for each $G \in \partition$ we can write
$$
\frac{\norm{\md x_G^\ast}_1}{\ournormrest(\md x^\ast_n)} \geq \frac{\abs{x^\ast}_{\min}}{\abs{x^\ast}_{\max}}\frac{\abs{G \cap \support}}{\Phi_\support(\partition)}
$$
where we have introduced the quantity
\begin{equation} \label{eq:defPhi}
    \Phi_\support(\partition) = \sqrt{\sum_{G \in \partition} |G \cap \support|^2}.
\end{equation}
According to (\ref{eq:balancedblocks}), we can find a positive constant $K \geq 1$ (independent of $n$) such that
$\max_{G \in \partition} \abs{G \cap \support} \leq K \min_{G \in \partition} \abs{G \cap \support}$ for all large $n$, and consequently 
$$
\frac{\norm{\md x_G\ast}_1}{\ournormrest(\md x^\ast_n)} \geq \frac{\abs{x^\ast}_{\min}}{\abs{x^\ast}_{\max}}\frac{1}{K \sqrt{|\partition|}}>0
$$
for all large $n$. In this situation, (A4) holds provided that we can select a constant $\gamma \in (0, \frac{\abs{x^\ast}_{\min}}{\abs{x^\ast}_{\max}}\frac{1}{K \sqrt{|\partition|}})$ independent of $n$ such that
\begin{equation} \label{eq:sufficientgroups}
\max_{G \in \partition} \left\| \md A_{G \cap \support^\tc}^\tT \md A_{\support} \left(\md A_\support^\tT \md A_\support\right)^{-1} \right\|_{\ournormrest^* \to \infty} < \frac{\abs{x^\ast}_{\min}}{\abs{x^\ast}_{\max}}\frac{1}{K \sqrt{|\partition|}} - \gamma.
\end{equation}
Assuming that the measurement matrix $\md A$ is properly designed so that the above 
incoherence condition holds, we can build $\lambda_n$ as any sequence such that $\lambda_n \to 0$ while $\lambda_n^2 n^{1-\eta} \to \infty$, which essentially are the conventional conditions that need to be imposed in the trivial case where $\partition$ has only one group. In particular, $\lambda_n = O(n^{-k})$ for $0<k<(1-\eta)/2$ will work. 

The case where the number of groups $\abs{\partition}$ is allowed to increase with $n$ is 
far more interesting. Assume that the measurement matrix $\md A$ is designed so that 
the incoherence condition in (\ref{eq:sufficientgroups}) holds with $\gamma$ replaced by
a positive sequence decreasing to zero such that $\gamma_n < \frac{\abs{x^\ast}_{\min}}{\abs{x^\ast}_{\max}}\frac{1}{K \sqrt{|\partition|}}$ for each $n$. In particular, we see that $\gamma_n$ should go to zero at a rate at least faster than
$O(\abs{\partition}^{-1/2})$. Now, assuming that we choose $\lambda_n = O(n^{-k})$ and $\gamma_n = O(n^{-r})$ for
some $k,r>0$, in order to verify (\ref{eq:consist_lambda}) we must have 
that $k+r < (1-\eta)/2$. In particular, one must have $r < 1/2$ and $\gamma_n = o(n^{-1/2})$, so that necessarily one must have $\abs{\partition} = o(n)$ in order
to guarantee asymptotic signed support recovery.

%%%%%%%%%%%%%%%%%%%%%%%%%%%%%%%%%%%%%%%%%%%%%%%%%%%%%%%%%%%%%%%%%%%%%%%%%%%%%%%%
%%%%%%%%%%%%%%%%%%%%%%%%%%%%%%%%%%%%%%%%%%%%%%%%%%%%%%%%%%%%%%%%%%%%%%%%%%%%%%%%
%%%%%%%%%%%%%%%%%%%%%%%%%%%%%%%%%%%%%%%%%%%%%%%%%%%%%%%%%%%%%%%%%%%%%%%%%%%%%%%%
\section{Numerical results}\label{sec:numresults}

We consider here a numerical evaluation of the structured linear regression problem in which the columns of the measurement matrix $\md A$ consisted of $p = 200$ columns independently generated according to a uniform distribution in the unit sphere. The number of measurements $n$ is either $n=110$ or $n=180$ and for each choice we ran $200$ experiments changing the corresponding measurement matrix $\md A$. The parameter vector $\md x$ consisted of two strings of all ones with length equal to $10$ starting at positions $4$ and $173$. The observation was generated according to the equation $\md y = \md A \md x + \md w$ where $\md w$ was zero-mean Gaussian distributed with zero mean and covariance $\sigma^2 \md I_n$, with $\sigma^2=1$ or $\sigma^2=0.01$.

Different recovery algorithms were tested, all of them penalized versions of the loss function in (\ref{eq:lossLS}) with  regularizer parameter ranging from $5e-3$ to $2.5$. Following the notation in (\ref{eq:problems}), the regularizer parameter is denoted as $\lambda$ when the penalty is a norm and $\mu$ when the penalty is the square of a norm. We considered also the following algorithms.

\begin{description}
    \item[Exclusive (Proximal):] The exclusive group Lasso regularizer in (\ref{eq:norm_def}), solved with the FISTA-inspired proximal algorithm of Section~\ref{sec:proximal}.
    \item[Exclusive (Active Set):] The exclusive group Lasso regularizer in (\ref{eq:norm_def}), implemented according to the proposed active set algorithm in Section~\ref{sec:active_set}, without enforcing the string structure of the parameters in the evolution path.
    \item[Exclusive (Active Strings):] The exclusive group Lasso regularizer in
    (\ref{eq:norm_def}), also implemented with the proposed active set algorithm
    in Section~\ref{sec:active_set} but now enforcing the string structure in
    the evolution path (i.e. only connecting the orange boxes in \fig~\ref{fig:evolution}).
    \item[Classic:] The conventional Lasso regularizer ($\ell_1$ norm) implemented through the fast iterative shrinkage/thresholding algorithm (FISTA) \cite{Beck09}.
    \item[Latent:] The latent Group Lasso penalty in (\ref{eq:latentgrupLASSO}) solved according to the FISTA approach (see \cite{Villa2014}).
    \item[Structured (Active Set):] The group Lasso regularizer in (\ref{eq:svspenalty}) implemented according to the corresponding active set algorithm in \cite{Jenatton11}.
\end{description}

The groups are taken differently depending on the algorithm. In the exclusive group Lasso approaches, the groups are selected as the $10$ subsets of $[p]$ such that, for $i=0,\ldots,9$, the $i$th subset contains all possible indices of $[p]$ congruent with $i$ modulo $10$. In the latent group Lasso approach, the groups are taken as the $p$ possible subsets of $[p]$ containing exactly $10$ consecutive indices (modulo $p$). In the classic Lasso approach, obviously, no underlying structure is specified. For the structured Lasso, the groups are those described in \cite{Jenatton11}.

Recalling the activation pattern of the parameter vector (see above), one
readily sees that this consists of two active entries per group in the case of
the exclusive group Lasso, and two active groups in the case of the latent group
Lasso, while there is no real match for the structured Lasso.

\begin{figure}
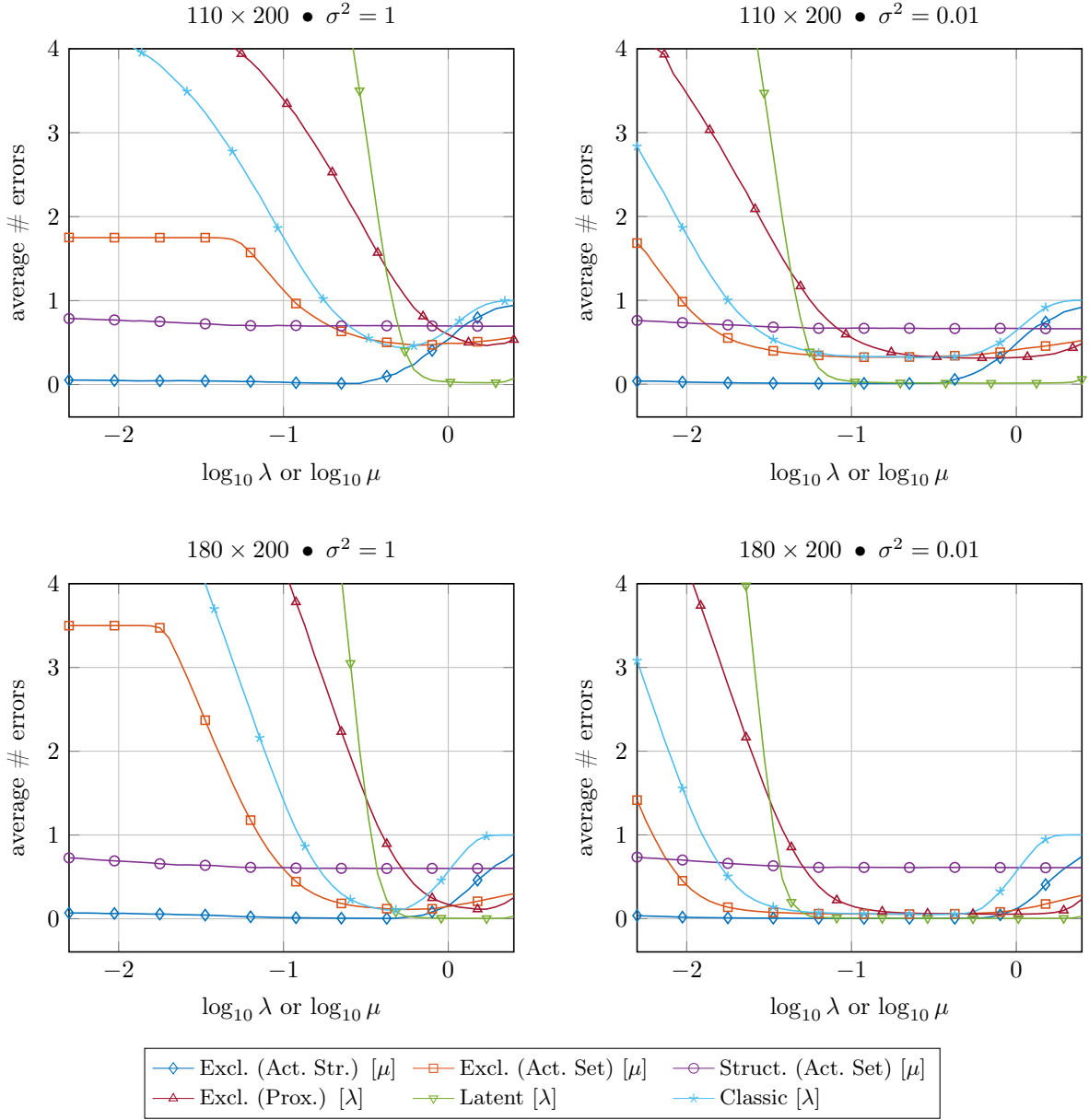

%\tikzset{external/export=false}
\centering
\inputtikz{figs/single/errors_db20}
\caption{Average number of errors in the recovery of the signed support for
different numbers of observations (top--bottom) and different noise variances
(left--right).}\label{fig:errors}
\end{figure}

\fig~\ref{fig:errors} shows the average number of errors in the recovery of the
signed support for the six strategies above. We can observe that, when the
regularizer parameter (either $\lambda$ or~$\mu$) is properly set, both
the algorithms presented in this paper---``Exclusive (Proximal)'' and ``Exclusive
(Active Set)''---perform at least as well as the ``Classic'' Lasso regularizer.
Note that no major gains were expected since the structure promoted by the
exclusive group sparsity norm (that is, few uncorrelated active entries inside
each group) is rather weak and not much more constraining than plain sparsity.

Nevertheless, the exclusive group sparsity is quite flexible and more rigid
structures can be built upon it by including extra information. For instance,
looking at the activation pattern of the parameter vector, we see that not only
does it imply two active entries per group, but it also requires that entries
from different groups are activated consecutively, forming a string. The
``Active String'' algorithm (a minor modification of the ``Active Set'') has
been designed to enforce this structure and, even though empirical, it provides
the same performance as the ``Latent'' group Lasso, the natural choice for the
activation pattern at hand.

As a final comment, the poor performance of the ``Structured'' Lasso algorithm
was also expected since, as mentioned before, the promoted sparsity pattern (a
single, continuous string) is not a match with the simulated activation pattern.

%%%%%%%%%%%%%%%%%%%%%%%%%%%%%%%%%%%%%%%%%%%%%%%%%%%%%%%%%%%%%%%%%%%%%%%%%%%%%%%%
%%%%%%%%%%%%%%%%%%%%%%%%%%%%%%%%%%%%%%%%%%%%%%%%%%%%%%%%%%%%%%%%%%%%%%%%%%%%%%%%
%%%%%%%%%%%%%%%%%%%%%%%%%%%%%%%%%%%%%%%%%%%%%%%%%%%%%%%%%%%%%%%%%%%%%%%%%%%%%%%%
\section{Conclusions} \label{sec:conclusions}
The paper provides a thorough characterization of the exclusive group sparsity
norm and derives two efficient methods for solving optimization problems that
include such norm as a regularizer. The effectiveness of the norm in promoting
the desired sparsity is proven both theoretically, by studying asymptotic
estimation consistency, and through simulations. Even though the exclusive group
sparsity by itself does not offer a strong structure to exploit for support
recovery, its flexibility allows tailoring the proposed
optimization algorithms (specifically, the active set one) to fit more
complex sparsity patterns and achieve performances that are as good as those of
rigid, dedicated regularizers.

%%%%%%%%%%%%%%%%%%%%%%%%%%%%%%%%%%%%%%%%%%%%%%%%%%%%%%%%%%%%%%%%%%%%%%%%%%%%%%%%
%%%%%%%%%%%%%%%%%%%%%%%%%%%%%%%%%%%%%%%%%%%%%%%%%%%%%%%%%%%%%%%%%%%%%%%%%%%%%%%%
%%%%%%%%%%%%%%%%%%%%%%%%%%%%%%%%%%%%%%%%%%%%%%%%%%%%%%%%%%%%%%%%%%%%%%%%%%%%%%%%
\acks{This work has been supported by the Spanish Government under grants
TEC2014-59255-C3-1-R and RTI2018-099722-B-I00.}

\appendix
%%%%%%%%%%%%%%%%%%%%%%%%%%%%%%%%%%%%%%%%%%%%%%%%%%%%%%%%%%%%%%%%%%%%%%%%%%%%%%%%
%%%%%%%%%%%%%%%%%%%%%%%%%%%%%%%%%%%%%%%%%%%%%%%%%%%%%%%%%%%%%%%%%%%%%%%%%%%%%%%%
%%%%%%%%%%%%%%%%%%%%%%%%%%%%%%%%%%%%%%%%%%%%%%%%%%%%%%%%%%%%%%%%%%%%%%%%%%%%%%%%
\section{General results}\label{apdx:proof_general}
This appendix provides a set of theoretical tools which the proofs of
Appendices~\ref{apdx:proof_active_set} and~\ref{apdx:proof_consistency} are
built upon.

%%%%%%%%%%%%%%%%%%%%%%%%%%%%%%%%%%%%%%%%%%%%%%%%%%%%%%%%%%%%%%%%%%%%%%%%%%%%%%%%
%%%%%%%%%%%%%%%%%%%%%%%%%%%%%%%%%%%%%%%%%%%%%%%%%%%%%%%%%%%%%%%%%%%%%%%%%%%%%%%%
\subsection{Notation and preliminary facts}\label{apdx:preliminary}
Aiming at a self-contained paper, we review here some basic properties of the
problems of the form
$$
\minimize_{\md x\in \mathbb{R}^p} L(\md x) + \frac{\mu}{2} \Phi^2(\md x)
$$
where $\Phi(\cdot)$ may be any norm in $\mathbb{R}^p$ and not
necessarily the one defined in (\ref{eq:norm_def}). By doing so, we will also
introduce some notation that will prove useful in the following. It is worth
mentioning that the results of this section are well known and a proof can be
found in, e.g.,~\cite{Jenatton11b}.

Being interested in solutions with sparse support, we often deal with restricted
versions of the loss function and of the norm. Namely, for a generic index set
$\supportgeneric \subseteq [p]$, the restricted loss function $L_\supportgeneric(\md
x_\supportgeneric): \mathbb{R}^{\abs{\supportgeneric}} \to \mathbb{R}_+$
maps $\md x_\supportgeneric$ to $L_\supportgeneric(\md x_\supportgeneric) =
L({\md x_{\{\supportgeneric\}}})$, where we recall that ${\md x_{\{\supportgeneric\}}}\in\mathbb{R}^p$ is the ``zero-padded''
extension of $\md x_\supportgeneric$, that is $(\md x_{\{\supportgeneric\}})_\supportgeneric =
\md x_\supportgeneric$ and $(\md x_{\{\supportgeneric\}})_{\supportgeneric^\tc} = \md 0$. It is
straightforward to see that $L_\supportgeneric(\cdot)$ preserves the convexity
and differentiability of $L(\cdot)$.

The restricted norm $\Phi_\supportgeneric(\md x_\supportgeneric)$, which is a
proper norm, is defined accordingly. It is worth mentioning that, for the
specific case of the exclusive group sparsity norm defined in (\ref{eq:norm_def}), the
restricted norm takes the form
$$
\ournormrestgeneric(\md x_\supportgeneric) = \sqrt{\sum_{G\in\partition_\supportgeneric}
\norm{\md x_{G\cap\supportgeneric}}_1^2}
$$
where we recall that $\partition_\supportgeneric = \{G \in \partition : G\cap \supportgeneric
\ne \emptyset\}$ is the subset of $\partition$ with only groups that contain at
least one index of $\supportgeneric$.

Finally, for a dual characterization of the optimization problem, we also need
to introduce two new functions $\mathbb{R}^{\abs{\supportgeneric}}\to
\mathbb{R}_+$, namely
\begin{align*}
L_\supportgeneric^*(\md u_\supportgeneric) &= \sup_{\md x_\supportgeneric \in
  \mathbb{R}^{\abs{\supportgeneric}}} \md u_\supportgeneric^\tT \md
  x_\supportgeneric - L_\supportgeneric(\md x_\supportgeneric) \\
\Phi_\supportgeneric^*(\md u_\supportgeneric) &= \sup_{\md x_\supportgeneric :
  \Phi_\supportgeneric(\md x_\supportgeneric) \le 1} \md u_\supportgeneric^\tT
  \md x_\supportgeneric
\end{align*}
that is the Fenchel conjugate of $L_\supportgeneric(\md x_\supportgeneric)$ and
the dual norm of $\Phi_\supportgeneric(\md x_\supportgeneric)$, respectively.
Note that, even though $L_\supportgeneric(\cdot)$ and
$\Phi_\supportgeneric(\cdot)$ extend to $L(\cdot)$ and $\Phi(\cdot)$ in the
sense explained above, the same property does not hold, in general, for the
Fenchel conjugates, $L_\supportgeneric^*(\cdot)$ and $L^*(\cdot)$, or for the
dual norms, $\Phi_\supportgeneric^*(\cdot)$ and $\Phi^*(\cdot)$.

We are now in possession of all the tools needed to analyze the restricted
minimization problem and its Fenchel dual, which read
\begin{gather*}
\minimize_{\md x_\supportgeneric\in \mathbb{R}^{\abs{\supportgeneric}}}
  L_\supportgeneric(\md x_\supportgeneric) + \frac{\mu}{2}
  \Phi_\supportgeneric^2(\md x_\supportgeneric) \\
\minimize_{\md u_\supportgeneric\in \mathbb{R}^{\abs{\supportgeneric}}}
  L_\supportgeneric^*(\md u_\supportgeneric) + \frac{1}{2\mu}
  [\Phi_\supportgeneric^*(\md u_\supportgeneric)]^2
\end{gather*}
where we used the fact that the Fenchel conjugate of $\frac{\mu}{2}
\Phi_\supportgeneric^2(\cdot)$ is
$\frac{1}{2\mu}[\Phi_\supportgeneric^*(\cdot)]^2$ \cite[Example 3.27]{Boyd04}.
From here, one can
straightforwardly show (e.g., \cite[Proposition~5.3.8]{Bertsekas_Convex}) that
strong duality holds and that the primal--dual
optimal points $(\hat{\md x_\supportgeneric}, \hat{\md u_\supportgeneric})$
are the only ones for which
\begin{equation}\label{eq:casesFY}
\begin{cases}
\hat{\md u}_{\supportgeneric} = \nabla L_{\supportgeneric}(\hat{\md x}_{\supportgeneric})\\
\hat{\md u}_{\supportgeneric} \in - \partial\Bigl[\frac{\mu}{2}
    \Phi_\supportgeneric^2\Bigr](\hat{\md x}_{\supportgeneric})
  = - \mu \Phi_\supportgeneric(\hat{\md x}_{\supportgeneric})
    \partial\Phi_\supportgeneric(\hat{\md x}_{\supportgeneric}).
\end{cases}
\end{equation}
To understand the above point, we can simply realize that the duality gap of this problem
is given by 
\begin{equation} \label{eq:dualitygap2terms}
\left( L_{\supportgeneric}(\md x_{\supportgeneric}) + L^\ast_{\supportgeneric}(\md u_{\supportgeneric}) - \md x_{\supportgeneric}^\tT \md u_{\supportgeneric} \right)
+ \left( \frac{\mu}{2} \Phi_\supportgeneric^2 \left({\md x}_{\supportgeneric}\right) +
\frac{1}{2\mu} \left[\Phi^\ast_\supportgeneric \left({\md u}_{\supportgeneric}\right)\right]^2 + \md x_{\supportgeneric}^\tT \md u_{\supportgeneric}  \right)
\end{equation}
where the two terms inside parentheses are non-negative due to the Fenchel-Young 
inequality. Strong duality holds when we have equality in the two terms, which
happens when the dual variable of each term ($\md u_\supportgeneric$ and $-\md 
u_\supportgeneric$ respectively) belongs to the subdifferential of the corresponding
primal functions. In particular, the fist and second terms in parentheses respectively
vanish if and only if the first and second conditions in (\ref{eq:casesFY}) hold true. 

Next, we reason that the second condition in (\ref{eq:casesFY}) is equivalent to
\begin{equation} \label{eq:3ident}
    -\hat{\md x}_{\supportgeneric}^\tT \hat{\md u}_{\supportgeneric} = \mu \Phi_\supportgeneric^2 \left(\hat{\md x}_{\supportgeneric}\right) =
\frac{1}{\mu} \left[\Phi^\ast_\supportgeneric \left(\hat{\md u}_{\supportgeneric}\right)\right]^2. 
\end{equation}
On the one hand, as a direct consequence of the Fenchel-Young inequality, this condition 
holds if and only if
\begin{equation} \label{eq:FYequality}
-\hat{\md x}_{\supportgeneric}^\tT \hat{\md u}_{\supportgeneric} = \frac{\mu}{2} \Phi_\supportgeneric^2 \left(\hat{\md x}_{\supportgeneric}\right) +
\frac{1}{2\mu} \left[\Phi^\ast_\supportgeneric \left(\hat{\md u}_{\supportgeneric}\right)\right]^2 
\end{equation}
On the other hand, we recall from (\ref{eq:subdiff}) that
$$
\partial\Phi_\supportgeneric(\hat{\md x}_{\supportgeneric}) = \{\md
z_{\supportgeneric} \in \mathbb{R}^{\abs{\supportgeneric}}:
\md z_{\supportgeneric}^\tT \hat{\md x}_{\supportgeneric} =
\Phi_\supportgeneric(\hat{\md x}_{\supportgeneric}),\quad \Phi_\supportgeneric^*(\md
z_\supportgeneric) \le 1\}.
$$
Therefore, we have $\hat{\md u}_\supportgeneric \in -\mu \Phi_\supportgeneric(\hat{\md x}_{\supportgeneric}) \partial\Phi_\supportgeneric(\hat{\md x}_{\supportgeneric})$ if and
only if $\hat{ \md u}_\supportgeneric$ is such that $\hat{ \md u}_\supportgeneric ^\tT 
\hat{ \md x}_\supportgeneric = - \mu \Phi^2_\supportgeneric(\hat{\md x}_\supportgeneric)$ 
and $ \Phi_\supportgeneric^*(\hat{\md u}_\supportgeneric) \le \mu 
\Phi_\supportgeneric(\hat{\md x}_\supportgeneric)$. 

In summary, we see that the second condition in (\ref{eq:casesFY}) implies that 
$$
\frac{\mu}{2} \Phi_\supportgeneric^2(\hat{\md x}_\supportgeneric)
  + \frac{1}{2\mu}[\Phi_\supportgeneric^*(\hat{\md u}_\supportgeneric)]^2
= -\hat{\md x}_\supportgeneric^\tT \hat{\md u}_\supportgeneric
= \mu \Phi_\supportgeneric^2(\hat{\md x}_{\supportgeneric}).
$$
which directly leads to (\ref{eq:3ident}). Conversely, to see that the identity in 
(\ref{eq:3ident}) implies the second condition in (\ref{eq:casesFY}) we only need to
see that $ \Phi_\supportgeneric^*(\hat{\md u}_\supportgeneric) \le \mu 
\Phi_\supportgeneric(\hat{\md x}_\supportgeneric)$. But this follows directly from 
the definition of dual norm, which implies that $ -\hat{\md x}_{\supportgeneric}^\tT \hat{\md u}_{\supportgeneric} \leq  \Phi_\supportgeneric(\hat{\md x}_\supportgeneric) \Phi_\supportgeneric^*(\hat{\md u}_\supportgeneric)$, so that using (\ref{eq:FYequality})
we obtain 
$$
\frac{1}{\mu} \left[\Phi^\ast_\supportgeneric \left(\hat{\md u}_{\supportgeneric}\right)\right]^2 = -\hat{\md x}_{\supportgeneric}^\tT \hat{\md u}_{\supportgeneric} \leq  \Phi_\supportgeneric(\hat{\md x}_\supportgeneric) \Phi_\supportgeneric^*(\hat{\md u}_\supportgeneric)
$$
as we wanted to show. Consequently, the primal--dual optimal points $(\hat{\md x}_\supportgeneric,
\hat{\md u}_\supportgeneric)$ must satisfy
\begin{equation}\label{eq:primal-dual}
\begin{cases}
\hat{\md u}_{\supportgeneric} = \nabla L_{\supportgeneric}(\hat{\md x}_{\supportgeneric})\\
\hat{\md x}_{\supportgeneric}^\tT \hat{\md u}_{\supportgeneric} = -\frac{1}{\mu}
  [\Phi_{\supportgeneric}^*(\hat{\md u}_{\supportgeneric})]^2 =
  -\mu\Phi_{\supportgeneric}^2(\hat{\md x}_{\supportgeneric}).
\end{cases}
\end{equation}
In more general terms, assume that we have a pair of values 
$(\md x_\supportgeneric, \md u_\supportgeneric)$ that are not optimal points, but 
still chosen so that $\md u_\supportgeneric = \nabla L_\supportgeneric(\md x_\supportgeneric)$.
According to the above reasoning, the first term in (\ref{eq:dualitygap2terms}) is zero,
so that the duality gap of the reduced problem takes the form 
$$
\frac{\mu}{2}\Phi_\supportgeneric^2(\md x_\supportgeneric) +
   \frac{1}{2\mu}[\Phi_\supportgeneric^*(\md u_\supportgeneric)]^2
   + \md x_\supportgeneric^\tT \md u_\supportgeneric.
$$

To conclude this section, we point out that these results hold for any index set
$\supportgeneric$ and, in particular, also for $\supportgeneric = [p]$, which corresponds
to the original (complete) problem. Then, we
can characterize how well a primal--dual optimal point $(\hat{\md x}_\supportgeneric,
\hat{\md u}_\supportgeneric)$ of the restricted problem behaves as a solution to the
original problem. We recall here that $\hat{\md x}_{\{\supportgeneric\}}$ and 
$\hat{\md u}_{\{\supportgeneric\}}$ contain the primal/dual variable solutions of the
restricted problem in the positions indexed by $\supportgeneric$ and zeros elsewhere. 
Now, since $\md u_\supportgeneric = \nabla L_\supportgeneric(\md x_\supportgeneric)$ we
must have $\md u_{\{\supportgeneric\}}
= \nabla L(\md x_{\{\supportgeneric\}})$.
Therefore, the solution to the restricted problem will achieve a duality gap:
\begin{align}
    \frac{\mu}{2}\Phi^2(\hat{\md x}_{\{\supportgeneric\}}) &+
  \frac{1}{2\mu}[\Phi^*(\hat{\md u}_{\{\supportgeneric\}})]^2
  + \hat{\md x}_{\{\supportgeneric\}}^\tT \hat{\md u}_{\{\supportgeneric\}} 
  = \frac{\mu}{2}\Phi^2(\hat{\md x}_{\{\supportgeneric\}}) +
  \frac{1}{2\mu}[\Phi^*(\hat{\md u}_{\{\supportgeneric\}})]^2
  + \hat{\md x}_{\supportgeneric}^\tT \hat{\md u}_{\supportgeneric} \nonumber \\
  &= \frac{\mu}{2}\Phi^2(\hat{\md x}_{\{\supportgeneric\}}) +
  \frac{1}{2\mu}[\Phi^*(\hat{\md u}_{\{\supportgeneric\}})]^2
  - \frac{\mu}{2} \Phi_\supportgeneric^2(\hat{\md x}_\supportgeneric)
  - \frac{1}{2\mu} [\Phi_\supportgeneric^*(\hat{\md u}_\supportgeneric)]^2 \label{eq:dualitygaprest} \\
   &= \frac{1}{2\mu}[\Phi^*(\hat{\md u}_{\{\supportgeneric\}})]^2
   - \frac{1}{2\mu} [\Phi_\supportgeneric^*(\hat{\md u}_\supportgeneric)]^2 \label{eq:gap}
\end{align}
where in (\ref{eq:dualitygaprest}) we have used the fact that the solution to
the restricted problem achieves a zero duality gap and where in (\ref{eq:gap}) we have 
used the definition of restricted norm.

%%%%%%%%%%%%%%%%%%%%%%%%%%%%%%%%%%%%%%%%%%%%%%%%%%%%%%%%%%%%%%%%%%%%%%%%%%%%%%%%
%%%%%%%%%%%%%%%%%%%%%%%%%%%%%%%%%%%%%%%%%%%%%%%%%%%%%%%%%%%%%%%%%%%%%%%%%%%%%%%%
\subsection{Solution characterization}
The following lemma further characterizes the solutions to~(\ref{eq:problem_norm2}) in
terms of their support.

\begin{lemma}\label{lemma:active_basic}
Vector $\hat{\md x}$, with support $\support = \supp\{\hat{\md x}\}$, is a solution to
$$
\minimize_{\md x\in \mathbb{R}^p} L(\md x) + \frac{\mu}{2} \ournorm^2(\md x)
$$
if and only if
\begin{equation}\label{eq:basic_first}
\begin{cases}
[\nabla L(\hat{\md x})]_{\support} + \mu \hat{\md r}_{\support} = \md 0\\
\max_{G\in\partition_{\support}} \frac{\bignorm{[\nabla L(\hat{\md x})]_{G\cap
    \support^\toff}}_{\infty}}{\lVert \hat{\md x}_{G\cap\support} \rVert_1}
  \le \mu \\
[\nabla L(\hat{\md x})]_{\support^\tinact} = \md 0
\end{cases}
\end{equation}
where $\hat{\md r}\in\mathbb{R}^p$ is a vector with entries
\begin{equation}\label{eq:vector_r}
\hat{r}_i = \begin{cases}
\lVert \hat{\md x}_{G_i\cap\support} \rVert_1 \sign(\hat{x}_i)
  &\text{if } i \in \support\\
0 &\text{if } i \in \support^\tc
\end{cases}
\end{equation}
with $G_i$ the (unique) group containing index $i$, and where $\support^\toff = \{i\in
\support^\tc\cap G, \forall G \in \partition_{\support}\}$ is the set of
inactive entries belonging to active groups. Finally, we denote by
$\support^\tinact = \support^\tc\setminus\support^\toff$ the set of indices
belonging to inactive groups.

Moreover, solution $\hat{\md x}$ also satisfies the following inequalities:
$$
\begin{cases}
\ournormrest^*\bigl([\nabla L(\hat{\md x})]_{\support}\bigr) \le \mu
 \ournorm(\hat{\md x})\\
\ournormrestoff^*\bigl([\nabla L(\hat{\md x})]_{\support^\toff}\bigr) \le
 \mu \ournorm(\hat{\md x})\\
\ournorm^*(\nabla L(\hat{\md x})) \le \mu \ournorm(\hat{\md x}).
\end{cases}
$$
\end{lemma}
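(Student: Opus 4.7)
The starting point is the first-order optimality condition for the unconstrained convex problem: $\hat{\md x}$ is optimal if and only if $\md 0 \in \partial\bigl[L + \tfrac{\mu}{2}\ournorm^2\bigr](\hat{\md x})$. Since $L$ is differentiable, this reads $-\nabla L(\hat{\md x}) \in \mu\,\partial\bigl[\tfrac{1}{2}\ournorm^2\bigr](\hat{\md x})$, and by the standard chain rule for the square of a nonnegative convex function one has $\partial\bigl[\tfrac{1}{2}\ournorm^2\bigr](\hat{\md x}) = \ournorm(\hat{\md x})\,\partial\ournorm(\hat{\md x})$ whenever $\hat{\md x}\neq\md 0$; at $\hat{\md x}=\md 0$ the subdifferential collapses to $\{\md 0\}$, so the condition reduces to $\nabla L(\md 0)=\md 0$, which is consistent with (\ref{eq:basic_first}) when $\support=\emptyset$.

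\textbf{Pointwise translation.} I would then invoke Lemma~\ref{lemma:norm_dual} to convert the membership $-\nabla L(\hat{\md x})/(\mu\ournorm(\hat{\md x})) \in \partial\ournorm(\hat{\md x})$ into an entry-wise characterization. Inside the support, the sign and magnitude conditions from that lemma give $[\nabla L(\hat{\md x})]_i = -\mu\norm{\hat{\md x}_{G_i\cap\support}}_1 \sign(\hat{x}_i)$, which is precisely the first equation of~(\ref{eq:basic_first}) once we identify $\hat{r}_i$ from~(\ref{eq:vector_r}) and recall that $\hat{\md x}_{G_i}=\hat{\md x}_{G_i\cap\support}$. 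For $i\in\support^\toff$, the inequality part of Lemma~\ref{lemma:norm_dual} delivers $\abs{[\nabla L(\hat{\md x})]_i}\leq\mu\norm{\hat{\md x}_{G_i\cap\support}}_1$; taking the maximum over $i\in G\cap\support^\toff$ and dividing by $\norm{\hat{\md x}_{G\cap\support}}_1$ (which is strictly positive for $G\in\partition_\support$) yields the second condition. For $i\in\support^\tinact$, the enclosing group satisfies $\norm{\hat{\md x}_{G_i}}_1=0$, forcing $[\nabla L(\hat{\md x})]_i=0$, i.e.\ the third condition. The converse direction follows by re-assembling these pointwise identities into a bona fide element of $\partial\ournorm(\hat{\md x})$ via Lemma~\ref{lemma:norm_dual}.

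\textbf{Dual-norm bounds.} The three stated inequalities are direct consequences of~(\ref{eq:basic_first}) combined with the explicit formula for the dual of the (restricted) exclusive norm. For the first, observe that the entries of $\hat{\md r}_\support$ within each block $G\cap\support$ share the common magnitude $\norm{\hat{\md x}_{G\cap\support}}_1$, so $\norm{\hat{\md r}_{G\cap\support}}_\infty=\norm{\hat{\md x}_{G\cap\support}}_1$; squaring and summing gives $\ournormrest^*(\hat{\md r}_\support)=\ournorm(\hat{\md x})$, and the first equation of~(\ref{eq:basic_first}) then yields $\ournormrest^*([\nabla L(\hat{\md x})]_\support)=\mu\ournorm(\hat{\md x})$ (with equality). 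The second bound is obtained by squaring the second condition block-wise and summing over $G\in\partition_\support$, noting that only groups with $G\cap\support^\toff\neq\emptyset$ contribute to $\ournormrestoff^*$. The third bound follows by splitting the sum defining $\ournorm^*(\nabla L(\hat{\md x}))$ over active and inactive groups: inactive groups vanish by the third condition, while on each active group the $\ell_\infty$ norm of $\nabla L(\hat{\md x})$ equals $\mu\norm{\hat{\md x}_{G\cap\support}}_1$ thanks to~(1)--(2).

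\textbf{Main obstacle.} The only real subtlety is justifying the chain rule for $\tfrac{1}{2}\ournorm^2$ at the origin (where Lemma~\ref{lemma:norm_dual} does not directly apply) and keeping the partition bookkeeping straight---in particular, distinguishing between $\partition_\support$, the subcollection of groups intersecting $\support^\toff$, and the inactive groups indexed by $\support^\tinact$, while consistently exploiting $\hat{\md x}_G=\hat{\md x}_{G\cap\support}$.
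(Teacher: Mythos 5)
Your proof is correct, but it follows a genuinely different route from the paper's. You apply the Fermat rule together with the chain rule $\partial\bigl[\tfrac{1}{2}\ournorm^2\bigr](\hat{\md x})=\ournorm(\hat{\md x})\,\partial\ournorm(\hat{\md x})$ and then read off the entry-wise conditions from the characterization of $\partial\ournorm$ in Lemma~\ref{lemma:norm_dual}; the dual-norm bounds then drop out of the explicit formula for the (restricted) dual norm, and in fact you obtain the first and third of them with equality, which is stronger than what is claimed. The paper instead never invokes Lemma~\ref{lemma:norm_dual} here: it expands $\ournorm^2(\hat{\md x}+\epsilon\md w)$ to first order in $\epsilon$, characterizes optimality by nonnegativity of the one-sided directional derivative in every direction $\md w$ --- the inequality (\ref{eq:limit_equiv}) --- and then extracts (\ref{eq:basic_first}) by taking infima over $\md w$ supported on $\support$, $\support^\toff$ and $\support^\tinact$, and the dual-norm bounds by Cauchy--Schwarz majorizations of the same inequality. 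Your version is shorter and makes better reuse of the machinery already established in Section~\ref{sec:properties}; the paper's version is self-contained at this point and, more importantly, the directional-derivative inequality (\ref{eq:limit_equiv}) it sets up is reused verbatim as the starting point of the proof of Proposition~\ref{prop:active_necessary}, which your approach would not provide. The two subtleties you flag (the behavior at $\hat{\md x}=\md 0$, where the subdifferential of the squared norm collapses to $\{\md 0\}$ and the claim reduces to the third line of (\ref{eq:basic_first}) with $\support^\tinact=[p]$, and the identification $\norm{\hat{\md x}_{G_i}}_1=\norm{\hat{\md x}_{G_i\cap\support}}_1$) are handled correctly.
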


\begin{proof}
For the purpose of this proof, let $F(\md x) = L(\md x) +
\frac{\mu}{2}\ournorm^2(\md x)$. Then, by convexity, vector $\hat{\md x}$ is a
solution to (\ref{eq:problem_norm2}) if and only if
$$
\lim_{\epsilon\to 0^+} \frac{F(\hat{\md x} + \epsilon \md w) - F(\hat{\md x})}{\epsilon} \ge
0
$$
for all directions $\md w\in \mathbb{R}^p$. In order to see the implications of
the above inequality, we first need to show that it can be equivalently
rewritten as
\begin{equation}\label{eq:limit_equiv}
\md w^\tT \nabla L(\hat{\md x}) + \mu \hat{\md r}_\support^\tT \md w_\support +
\mu \sum_{G\in\partition_{\support}} \norm{\hat{\md x}_{G\cap\support}}_1
\norm{\md w_{G\cap\support^\tc}}_1 \ge 0.
\end{equation}
Indeed, recalling the definition of $\ournorm(\cdot)$, one has
\begin{align*}
\ournorm^2(\hat{\md x}+\epsilon \md w) &= \sum_{G\in\partition}\Biggl(\sum_{i\in G} \abs{\hat{x}_i
+ \epsilon w_i}\Biggr)^2\\
&= \sum_{G\in\partition_{\support}}\Biggl(\sum_{i\in G\cap\support} \abs{\hat{x}_i
+ \epsilon w_i} + \epsilon\sum_{i\in G\cap\support^\tc}\abs{w_i}\Biggr)^2
+ \epsilon^2\sum_{G\in\partition_{\support}^\tc}\Biggl(\sum_{i\in
G}\abs{w_i}\Biggr)^2.
\end{align*}
Now, let us define $\md s=\sign(\hat{\md x})$ element-wise. Moreover, since
$\epsilon$ is arbitrarily small, we can assume that $\hat{x}_i + \epsilon w_i$
keeps the same sign as $\hat{x}_i$. It follows that
\begin{align*}
\ournorm^2(\hat{\md x}+\epsilon \md w)
&= \sum_{G\in\partition_{\support}}\Bigl(\norm{\hat{\md x}_{G\cap\support}}_1
+ \epsilon \md s_{G\cap\support}^\tT \md w_{G\cap\support} +
\epsilon\norm{\md w_{G\cap\support^\tc}}_1\Bigr)^2 +
\epsilon^2\sum_{G\in\partition_{\support}^\tc}\norm{\md w_G}_1^2\\
&= \ournormrest^2(\hat{\md x}_{\support}) + 2\epsilon
\sum_{G\in\partition_{\support}}
\norm{\hat{\md x}_{G\cap\support}}_1 \Bigl(\md s_{G\cap\support}^\tT \md w_{G\cap\support} +
\norm{\md w_{G\cap\support^\tc}}_1\Bigr) + o(\epsilon) \\
&= \ournormrest^2(\hat{\md x}_{\support})
+ 2\epsilon \hat{\md r}_\support^\tT \md w_\support + 2\epsilon
\sum_{G\in\partition_{\support}} \norm{\hat{\md x}_{G\cap\support}}_1
\norm{\md w_{G\cap\support^\tc}}_1 + o(\epsilon)
\end{align*}
where we have used the definition of $\hat{\md r}$ given in~(\ref{eq:vector_r}).
By plugging the last line into the definition of $F(\hat{\md x} + \epsilon \md
w)$, one readily obtains~(\ref{eq:limit_equiv}).

Now, (\ref{eq:basic_first}) is a straightforward consequence of the fact that
(\ref{eq:limit_equiv}) must hold for all $\md w$ and, specifically,
$$
\inf_{\md w:\supp\{\md w\}\subseteq\mathcal{I}}\Bigl\{\md w^\tT \nabla L(\hat{\md x})
  + \mu \hat{\md r}_\support^\tT \md w_\support +
  \mu \sum_{G\in\partition_{\support}} \norm{\hat{\md x}_{G\cap\support}}_1
  \norm{\md w_{G\cap\support^\tc}}_1\Bigr\} \ge 0
$$
for $\mathcal{I}\in\{\support, \support^\toff, \support^\tinact\}$.

To prove the second part of the lemma, let us start by noting that
\begin{align}
\md w_{\support}^\tT \hat{\md r}_{\support}
&= \sum_{G\in\partition_{\support}} \lVert \hat{\md x}_{G\cap\support}\rVert_1
  \sum_{i\in G\cap\support} w_i \sign(\hat{x}_i) \nonumber\\
&\le \sum_{G\in\partition_{\support}} \lVert \hat{\md x}_{G\cap\support}\rVert_1
  \sum_{i\in G\cap\support} \lvert w_i \rvert \nonumber\\
&= \sum_{G\in\partition_{\support}} \lVert \hat{\md x}_{G\cap\support}\rVert_1
  \lVert \md w_{G\cap\support} \rVert_1 \label{eq:lemma1a}\\
&\overset{(\mathrm{a})}{\le} \sqrt{\sum_{G\in\partition_{\support}} \lVert \hat{\md x}_{G\cap\support}\rVert_1^2}
  \sqrt{\sum_{G\in\partition_{\support}} \lVert \md w_{G\cap\support} \rVert_1^2} \nonumber\\
&= \ournormrest(\hat{\md x}_{\support}) \ournormrest(\md w_{\support}) \nonumber
\end{align}
where $(\mathrm{a})$ is an application of the Cauchy--Schwartz inequality, which
also applies to the last term of (\ref{eq:limit_equiv}) yielding
$$
\sum_{G\in\partition_{\support}} \lVert \hat{\md x}_{G\cap\support} \rVert_1
\lVert \md w_{G\cap\support^\tc} \rVert_1
\le \ournormrest(\hat{\md x}_{\support}) \ournormrestoff(\md
w_{\support^\toff}).
$$
As a result, the left-hand side of (\ref{eq:limit_equiv}) can be upper-bounded
by
\begin{align*}
\md w^\tT\nabla L(\hat{\md x}) &+ \mu \ournormrest(\hat{\md x}_{\support})
\Bigl(\ournormrest(\md w_{\support}) +
\ournormrestoff(\md w_{\support^\toff})\Bigr)\\
&\ge \md w^\tT\nabla L(\hat{\md x}) +
  \mu \hat{\md r}_{\support}^\tT \md w_{\support} + \mu
  \sum_{G\in\partition_{\support}} \lVert \hat{\md x}_{G\cap\support} \rVert_1
  \lVert \md w_{G\cap\support^\tc} \rVert_1 \ge 0.
\end{align*}
Since, once again, the above inequality must hold for all $\md w \in
\mathbb{R}^p$ and, by taking the infimum over all $\md w$ with support in $\support$
and $\support^\toff$, we obtain
$$
\begin{cases}
\ournormrest^*\bigl([\nabla L(\hat{\md x})]_{\support}\bigr) \le \mu
 \ournorm(\hat{\md x})\\
\ournormrestoff^*\bigl([\nabla L(\hat{\md x})]_{\support^\toff}\bigr) \le
 \mu \ournorm(\hat{\md x})
\end{cases}
$$
respectively. The same trick can be applied to show
$$
\ournorm^*(\nabla L(\hat{\md x})) \le \mu \ournorm(\hat{\md x}).
$$
Indeed, one has
\begin{align*}
\ournorm(\md w) &= \sqrt{\sum_{G\in\partition} \norm{\md w_G}_1^2}\\
&\ge  \sqrt{\sum_{G\in\partition_{\support}} \norm{\md w_G}_1^2}\\
&= \frac{1}{\ournormrest(\hat{\md x}_{\support})}
  \sqrt{\sum_{G\in\partition_{\support}} \lVert \hat{\md x}_{G\cap\support}\rVert_1^2}
  \sqrt{\sum_{G\in\partition_{\support}} \lVert \md w_{G} \rVert_1^2} \\
&\ge \frac{1}{\ournormrest(\hat{\md x}_{\support})}
  \sum_{G\in\partition_{\support}} \lVert \hat{\md x}_{G\cap\support}\rVert_1
  \lVert \md w_{G} \rVert_1 \\
&= \frac{1}{\ournormrest(\hat{\md x}_{\support})}
  \sum_{G\in\partition_{\support}} \lVert \hat{\md x}_{G\cap\support}\rVert_1
  \Bigl(\norm{\md w_{G\cap\support}}_1 + \norm{\md w_{G\cap\support^\tc}}_1\Bigr) \\
&\ge \frac{\hat{\md r}_\support^\tT \md w_\support}{\ournormrest(\hat{\md x}_{\support})}
  + \frac{1}{\ournormrest(\hat{\md x}_{\support})}
  \sum_{G\in\partition_{\support}} \lVert \hat{\md x}_{G\cap\support}\rVert_1
  \norm{\md w_{G\cap\support^\tc}}_1
\end{align*}
which implies
$$
\md w^\tT\nabla L(\hat{\md x}) + \mu \ournormrest(\hat{\md x}_{\support})
\ournorm(\md w)
\ge \md w^\tT\nabla L(\hat{\md x}) +
  \mu \hat{\md r}_{\support}^\tT \md w_{\support} + \mu
  \sum_{G\in\partition_{\support}} \lVert \hat{\md x}_{G\cap\support} \rVert_1
  \lVert \md w_{G\cap\support^\tc} \rVert_1 \ge 0.
$$
Finally, we only need to take the infimum over all possible $\md w$ and recall
that $\ournormrest(\hat{\md x}_{\support}) = \ournorm(\hat{\md x})$ since
$\supp\{\hat{\md x}\} = \support$.
\end{proof}

For the proof of Proposition~\ref{prop:active_sufficient}, we will also make use
of the following simple result.
\begin{lemma}\label{lemma:dual_upper_bound}
For any vector $\md u\in\mathbb{R}^p$ and any index subset $\support \in [p]$,
the dual norm $\ournorm^*(\md u)$ can be upper-bounded by
$$
\ournorm^*(\md u) \le \max\{\ournormrest^*(\md u_{\support}),
\ournormrestoff^*(\md u_{\support^\toff}),
\ournormrestinact^*(\md u_{\support^\tinact})\}.
$$
\end{lemma}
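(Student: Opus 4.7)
The plan is to work directly from the closed-form dual norm in (\ref{eq:dual_norm}), namely $[\ournorm^*(\md u)]^2 = \sum_{G\in\partition} \|\md u_G\|_\infty^2$, and exploit the fact that $\support$, $\support^\toff$, $\support^\tinact$ form a partition of $[p]$. The first observation I would use is that each $G\in\partition$ lies \emph{entirely} inside one of two super-sets: either $G\in\partition_\support$, in which case $G \subseteq \support \cup \support^\toff$, or $G\notin\partition_\support$, in which case $G\subseteq \support^\tinact$. Consequently, for every group,
\[
\|\md u_G\|_\infty \;=\; \max\bigl\{\|\md u_{G\cap\support}\|_\infty,\ \|\md u_{G\cap\support^\toff}\|_\infty,\ \|\md u_{G\cap\support^\tinact}\|_\infty\bigr\},
\]
where at least one of the three terms is always zero by the above disjointness.

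Next, I would apply the elementary inequality $\max(a^2,b^2,c^2)\le a^2+b^2+c^2$ group-wise, so that
\[
\sum_{G\in\partition}\|\md u_G\|_\infty^2 \;\le\; \sum_{G\in\partition}\|\md u_{G\cap\support}\|_\infty^2 + \sum_{G\in\partition}\|\md u_{G\cap\support^\toff}\|_\infty^2 + \sum_{G\in\partition}\|\md u_{G\cap\support^\tinact}\|_\infty^2.
\]
The three sums on the right are then recognized as the squares of the three restricted dual norms appearing in the statement: terms with $G\cap\support=\emptyset$ (resp.\ $G\cap\support^\toff=\emptyset$, $G\cap\support^\tinact=\emptyset$) contribute zero, which reduces each sum to one over $\partition_\support$, $\partition_\support$, and $\partition\setminus\partition_\support$ respectively, matching the definitions of $\ournormrest^*$, $\ournormrestoff^*$, $\ournormrestinact^*$. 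This yields the sum-of-squares bound
\[
[\ournorm^*(\md u)]^2 \;\le\; [\ournormrest^*(\md u_\support)]^2 + [\ournormrestoff^*(\md u_{\support^\toff})]^2 + [\ournormrestinact^*(\md u_{\support^\tinact})]^2.
\]

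Finally, I would upper-bound each square on the right by the square of the maximum of the three quantities to reach the stated conclusion (up to a constant, which I expect the authors to sharpen by a more careful group-by-group accounting). The step I expect to be the main obstacle is obtaining the \textbf{max} bound as written rather than a sum-of-squares bound; the likely route is to note that, for each single $G$, only \emph{one} of the three restricted infinity norms is nonzero, so one can partition $\partition$ into three classes (according to which of the three restricted vectors contains the entries of $G$) and assign each group's contribution to exactly one of the three restricted norms, avoiding any combinatorial overcounting. Everything else is bookkeeping about how $G$ intersects the three pieces of the partition of $[p]$.
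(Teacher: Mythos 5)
Your derivation up to the sum-of-squares inequality
$$
[\ournorm^*(\md u)]^2 \le [\ournormrest^*(\md u_{\support})]^2
+ [\ournormrestoff^*(\md u_{\support^\toff})]^2
+ [\ournormrestinact^*(\md u_{\support^\tinact})]^2
$$
is correct, and it lands on exactly the same inequality that the paper's own proof actually establishes, by a more elementary route. The paper works on the primal side: it relaxes the constraint $\ournorm(\md v)\le 1$ group by group via $\bigl(\norm{\md v_{G\cap\support}}_1+\norm{\md v_{G\cap\support^\toff}}_1\bigr)^2 \ge \norm{\md v_{G\cap\support}}_1^2+\norm{\md v_{G\cap\support^\toff}}_1^2$ and then splits the unit budget as $a+b+c\le 1$; since the three pieces of $\md v$ decouple, the relaxed maximum equals $\max_{a+b+c\le1}\bigl(\sqrt{a}\,A+\sqrt{b}\,B+\sqrt{c}\,C\bigr)=\sqrt{A^2+B^2+C^2}$, where $A,B,C$ denote the three restricted dual norms. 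Your entry-wise argument on the dual side (each $G$ lies entirely in $\support\cup\support^\toff$ or entirely in $\support^\tinact$, then $\max\{a^2,b^2,c^2\}\le a^2+b^2+c^2$ group-wise) is a clean shortcut to the same place.

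The step you flag as the main obstacle is, however, not an obstacle you can overcome: the bound with $\max$ as stated in the lemma is false. Take $p=4$, $\partition=\{\{1,2\},\{3,4\}\}$, $\support=\{1,3\}$ (so $\support^\toff=\{2,4\}$ and $\support^\tinact=\emptyset$), and $\md u=(1,0,0,1)^\tT$. Then $\ournorm^*(\md u)=\sqrt{2}$, while $\ournormrest^*(\md u_{\support})=\ournormrestoff^*(\md u_{\support^\toff})=1$, so the claimed right-hand side equals $1<\sqrt{2}$; the sum-of-squares bound gives $\sqrt{2}$ and is tight here. Your proposed repair, assigning each group to exactly one of three classes, cannot work either: an active group may carry nonzero entries of $\md u$ on both $G\cap\support$ and $G\cap\support^\toff$, and even when it does not, the class-wise sums you obtain are three \emph{disjoint pieces} whose total is still the sum $A^2+B^2+C^2$, not a single one of the three terms. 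The constant $\sqrt{3}$ in your fallback $\ournorm^*(\md u)\le\sqrt{3}\max\{A,B,C\}$ is therefore not removable in general. The defect is in the paper, not in your argument: the concluding sentence of the paper's proof silently replaces an $\ell_2$-combination by a maximum, apparently by analogy with \cite[Lemma~15]{Jenatton11b}, which concerns a norm whose outer combination across groups is an $\ell_1$-sum (so that its dual combines as a maximum); here the outer combination is $\ell_2$, and the dual combines in $\ell_2$ as well. The usable statement is your sum-of-squares inequality, and the proof of Proposition~\ref{prop:active_sufficient}, which invokes the $\max$ form, would need its conditions adjusted accordingly (e.g., bounding $[\ournormrestoff^*(\md u_{\support^\toff})]^2+[\ournormrestinact^*(\md u_{\support^\tinact})]^2$ rather than each term separately).
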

\begin{proof}
By definition of dual norm, we have
\begin{align*}
\ournorm^*(\md u) &= \max_{\md v\in\mathbb{R}^p} \md u^\tT \md v\quad\text{s.to }
    \ournorm(\md v)\le 1\\
  &= \max_{\md v\in\mathbb{R}^p} \md u^\tT \md v\quad\text{s.to }
    \sum_{G\in\partition}\norm{\md v_G}_1^2 \le 1\\
  &= \max_{\md v\in\mathbb{R}^p} \md u^\tT \md v\quad\text{s.to }
    \sum_{G\in\partition_\support}\norm{\md v_G}_1^2 +
    \sum_{G\in\partition_{\support^\tinact}}\norm{\md v_G}_1^2 \le 1\\
  &= \max_{\md v\in\mathbb{R}^p} \md u^\tT \md v\quad\text{s.to }
    \sum_{G\in\partition_\support}\Bigl(\norm{\md v_{G\cap\support}}_1
      + \norm{\md v_{G\cap\support^\toff}}_1\Bigr)^2 +
    \sum_{G\in\partition_{\support^\tinact}}\norm{\md v_G}_1^2 \le 1\\
  &\le \max_{\md v\in\mathbb{R}^p} \md u^\tT \md v\quad\text{s.to }
    \sum_{G\in\partition_\support}\norm{\md v_{G\cap\support}}_1^2
    + \sum_{G\in\partition_\support} \norm{\md v_{G\cap\support^\toff}}_1^2 +
    \sum_{G\in\partition_{\support^\tinact}}\norm{\md v_G}_1^2 \le 1 \\
  &= \max_{\md v\in\mathbb{R}^p,a,b,c} \md u^\tT \md v \\
    &\phantom{= mt}\text{s.to }
    \sum_{G\in\partition_\support}\norm{\md v_{G\cap\support}}_1^2 \le a,\quad
    \sum_{G\in\partition_\support} \norm{\md v_{G\cap\support^\toff}}_1^2 \le b,
    \quad \sum_{G\in\partition_{\support^\tinact}}\norm{\md v_G}_1^2 \le c,\\
    &\phantom{= mt\text{s.to }} a+b+c \le 1.
\end{align*}
Since sets $\support$, $\support^\toff$ and $\support^\tinact$ form a partition
of $[p]$, the result is straightforward (see also \cite[Lemma~15]{Jenatton11b}).
\end{proof}

%%%%%%%%%%%%%%%%%%%%%%%%%%%%%%%%%%%%%%%%%%%%%%%%%%%%%%%%%%%%%%%%%%%%%%%%%%%%%%%%
%%%%%%%%%%%%%%%%%%%%%%%%%%%%%%%%%%%%%%%%%%%%%%%%%%%%%%%%%%%%%%%%%%%%%%%%%%%%%%%%
%%%%%%%%%%%%%%%%%%%%%%%%%%%%%%%%%%%%%%%%%%%%%%%%%%%%%%%%%%%%%%%%%%%%%%%%%%%%%%%%
\section{Active set: Proofs}\label{apdx:proof_active_set}
This section provides a proof of Propositions~\ref{prop:active_necessary}
and~\ref{prop:active_sufficient}, the two building blocks of the active set
algorithm described in Section~\ref{sec:active_set}.

%%%%%%%%%%%%%%%%%%%%%%%%%%%%%%%%%%%%%%%%%%%%%%%%%%%%%%%%%%%%%%%%%%%%%%%%%%%%%%%%
%%%%%%%%%%%%%%%%%%%%%%%%%%%%%%%%%%%%%%%%%%%%%%%%%%%%%%%%%%%%%%%%%%%%%%%%%%%%%%%%
\subsection{Proof of Proposition~\ref{prop:active_necessary}}
Suppose vector $\hat{\md x}$ is a solution to (\ref{eq:problem_norm2}). Then,
(\ref{eq:active_nec_2}) has already been proven as a part of
Lemma~\ref{lemma:active_basic} and we only need to focus on
(\ref{eq:active_nec_1}). The approach follows the same lines as the proof of
Lemma~\ref{lemma:active_basic}: Starting from (\ref{eq:limit_equiv}), which is
necessary and sufficient for vector $\hat{\md x}$ with support $\support$ to be
a solution to (\ref{eq:problem_norm2}), we project the right-hand side onto
an accurately chosen subspace to derive the desired necessary condition.

More specifically, we focus on vectors with support in
$\supportnext\in\Pi(\support)$ and project onto the set of indices $\supportnext
\setminus \support$, for any $\supportnext \in \Pi(\support)$. Then,
(\ref{eq:limit_equiv}) takes the form
$$
\md w_{\supportnext\setminus\support}^\tT[\nabla
L(\hat{\md x})]_{\supportnext\setminus\support} + \mu
\sum_{G\in\partition_{\support}\cap\partition_{\supportnext\setminus\support}}
\norm{\hat{\md x}_{G\cap\support}}_1
\norm{\md w_{G\cap(\supportnext\setminus\support)}}_1 \ge 0.
$$
Since, as already proven, $[\nabla L(\hat{\md x})]_i = 0$ for all $i\in
\support^\tinact$, we can further simplify the last inequality and write
$$
\sum_{G\in\partition_{\support}\cap\partition_{\supportnext\setminus\support}}
\md w_{G\cap(\supportnext\setminus\support)}^\tT[\nabla
L(\hat{\md x})]_{G\cap(\supportnext\setminus\support)} + \mu
\sum_{G\in\partition_{\support}\cap\partition_{\supportnext\setminus\support}}
\norm{\hat{\md x}_{G\cap\support}}_1
\norm{\md w_{G\cap(\supportnext\setminus\support)}}_1 \ge 0.
$$
By recalling that the inequality must hold for all $\md w$ and exploiting the
fact that the index sets $G\in\partition$ are disjoint, we obtain the equivalent
form
$$
\sum_{G\in\partition_{\support}\cap\partition_{\supportnext\setminus\support}}
\inf_{\md w_{G\cap(\supportnext\setminus\support)}}\Bigl\{
\md w_{G\cap(\supportnext\setminus\support)}^\tT[\nabla
L(\hat{\md x})]_{G\cap(\supportnext\setminus\support)} + \mu
\norm{\hat{\md x}_{G\cap\support}}_1
\norm{\md w_{G\cap(\supportnext\setminus\support)}}_1\Bigr\} \ge 0
$$
which implies
$$
\max_{G\in\partition_{\support}\cap\partition_{\supportnext\setminus\support}}
\frac{\Bignorm{[\nabla L(\hat{\md x})]_{G\cap(\supportnext\setminus\support)}}_{\infty}}
{\norm{\hat{\md x}_{G\cap\support}}_1} \le \mu.
$$
Being $\supportnext \in \Pi(\support)$ arbitrarily chosen,
Proposition~\ref{prop:active_necessary} is proven.

%%%%%%%%%%%%%%%%%%%%%%%%%%%%%%%%%%%%%%%%%%%%%%%%%%%%%%%%%%%%%%%%%%%%%%%%%%%%%%%%
%%%%%%%%%%%%%%%%%%%%%%%%%%%%%%%%%%%%%%%%%%%%%%%%%%%%%%%%%%%%%%%%%%%%%%%%%%%%%%%%
\subsection{Proof of Proposition~\ref{prop:active_sufficient}}
Let $\hat{\md u} = \nabla L(\hat{\md x})$ and $\hat{\md u}_\support = \nabla
L_\support(\hat{\md x}_\support)$ be the feasible dual points associated to
$\hat{\md x}$ (and, in turn, to $\hat{\md x}_\support$) as a solution to the
full and restricted problems, respectively. Note that the notation is consistent
since, indeed, $[\nabla L(\hat{\md x})]_j = [\nabla L_\support(\hat{\md x}_\support)]_j$
for all $j\in \support$. However, in general, $[\nabla L(\hat{\md x})]_j \ne 0$
for $j\notin \support$ and, equivalently, $\hat{\md u} \ne \hat{\md
u}_{\{\support\}}$, that is $\hat{\md u}$ may be non-zero outside the support of
$\hat{\md x}$.

As proven in Section~\ref{apdx:preliminary}, the duality gap obtained by
considering $\hat{\md x}$ as a solution to the full problem is given by
(\ref{eq:gap}). Then, by Lemma~\ref{lemma:dual_upper_bound}, this duality gap can
be majorized by
\begin{multline*}
\frac{1}{2\mu}[\ournorm^*(\hat{\md u})]^2
  - \frac{1}{2\mu} [\ournormrest^*(\hat{\md u}_\support)]^2\\
\le \frac{1}{2\mu}\max\Bigl\{0,
  [\ournormrest^*(\hat{\md u}_{\support^\toff})]^2
    - [\ournormrest^*(\hat{\md u}_\support)]^2,
  [\ournormrestinact^*(\hat{\md u}_{\support^\tinact})]^2
    - [\ournormrest^*(\hat{\md u}_\support)]^2\Bigr\}
\end{multline*}

Then, to achieve a duality gap smaller than $\epsilon$, it suffices that
$$
\begin{cases}
  [\ournormrestoff^*(\hat{\md u}_{\support^\toff})]^2
    \le 2\mu\epsilon + [\ournormrest^*(\hat{\md u}_\support)]^2 \\
  [\ournormrestinact^*(\hat{\md u}_{\support^\tinact})]^2
    \le 2\mu\epsilon + [\ournormrest^*(\hat{\md u}_\support)]^2.
\end{cases}
$$

The lemma comes from the definition of the (restricted) dual norm, the fact
that we set $\hat{\md u}=\nabla L(\hat{\md x})$ and (\ref{eq:primal-dual}).

%%%%%%%%%%%%%%%%%%%%%%%%%%%%%%%%%%%%%%%%%%%%%%%%%%%%%%%%%%%%%%%%%%%%%%%%%%%%%%%%
%%%%%%%%%%%%%%%%%%%%%%%%%%%%%%%%%%%%%%%%%%%%%%%%%%%%%%%%%%%%%%%%%%%%%%%%%%%%%%%%
%%%%%%%%%%%%%%%%%%%%%%%%%%%%%%%%%%%%%%%%%%%%%%%%%%%%%%%%%%%%%%%%%%%%%%%%%%%%%%%%
\section{Proof of Theorem \ref{thm:consist_suffic}} \label{apdx:proof_consistency}

The proof capitalizes on the results of \cite{Wainwright09} and
references therein. We summarize next the salient points and refer to those
papers for more specific technical results.

We first point out that a certain $\check{\md x}_n \in \mathbb{R}^p$ is an optimal solution of the problem in (\ref{eq:regression}) if and only if there exists a subgradient $\check{\md u}_n \in \partial \ournorm(\md x)$ at $\md x = \check{\md x}_n$ such that 
\begin{equation} \label{eq:globaloptimality}
\frac{1}{n} \md A_n^\tT \md A_n \left( \check{\md x}_n - \md x_n^\ast \right) - \frac{1}{n} \md A_n^\tT \md w_n + \lambda_n \check{\md u}_n = \md 0
\end{equation}
where we have used the fact that $\md y_n = \md A_n \md x_n^\ast +\md w_n$. Following the steps in \cite{Wainwright09}, we will now construct a sequence of pairs $(\check{\md x}_n,\check{\md u}_n) \in \mathbb{R}^p \times \mathbb{R}^p$ and check under which asymptotic conditions they are solutions to (\ref{eq:globaloptimality}). It is worth remarking that, even though at a first glance it may seem that
the true support $\support=\supp(\md x_n^*)$ is assumed to be known, the
argument below provides a set of equivalent conditions that, if met, ensure that
$\support$ is the correct support and, more importantly, that
$\signsupp(\check{\md x}) = \signsupp(\md x^*)$. Therefore, any efficient method
for solving (\ref{eq:regression}) will converge towards the correct solution
when $\lambda_n$ is properly set.

On the one hand, we consider first a vector $\check{\md x}_n \in \mathbb{R}^p$ that is built as follows. Take $\check{\md x}_\support \in \mathbb{R}^{|\support|}$ as the unique solution to the following \emph{restricted} problem:
\begin{equation}
    \check{\md x}_\support = \arg\min_{\md x_\support \in \mathbb{R}^{|\support|}} \left\{\frac{1}{2n}\norm{\md y_n - \md A_\support \md x_\support}_2^2 + \lambda_n \ournormrest(\md x_\support) \right\}
\end{equation}
and force $\check{\md x}_{\support^\tc}= 0$. On the other hand, we choose $\check{\md u}_n \in \mathbb{R}^p$ by first taking $\check{\md{u}}_\support$ to be any subgradient of $\ournormrest(\md x_\support)$ at the point $\check{\md x}_\support$. Then, $\check{\md{u}}_{\support^\tc}$ is fixed so that the complete vector $\check{\md{u}}_n$ is a solution to the optimality condition of the global problem in (\ref{eq:globaloptimality}). This is possible because we can express this condition as
$$
\frac{1}{n} 
\begin{bmatrix} \md A_\support^\tT \md A_\support & \md A_\support^\tT \md A_{\support^\tc} \\ 
\md A_{\support^\tc}^\tT \md A_\support & \md A_{\support^\tc}^\tT \md A_{\support^\tc} \end{bmatrix} 
\begin{bmatrix}
\check{\md{x}}_\support - \md x_\support^\ast \\ \md 0 
\end{bmatrix}
- \frac{1}{n} \begin{bmatrix} \md A_\support^\tT \\ \md A^\tT_{\support^\tc} \end{bmatrix} \md w_n + \lambda_n \begin{bmatrix}
\check{\md u}_\support \\ \check{\md u}_{\support^\tc}
\end{bmatrix} = \md 0
$$
so that we only need to choose
\begin{equation}
\check{\md u}_{\support^\tc}  =  \md A_{\support^\tc}^\tT 
\left[ \md A_\support \left( \md A_\support^\tT \md A_\support \right)^{-1} \check{\md u}_\support +  \frac{1}{n\lambda_n} \Pi_{\md A_\support^\perp} \md w_n \right]
\end{equation}
where $\Pi_{\md A_\support^\perp} = \md I_p -\md A_\support \left( \md A_\support^\tT \md A_\support \right)^{-1} \md A_\support^\tT $ is the projection matrix onto the space orthogonal to the span of $\md A_\support$. From the global optimality condition we can also write 
\begin{equation} \label{eq:xsuppminusxtrue}
    \check{\md{x}}_\support - \md x_\support^\ast = \left(\frac{1}{n} \md A_\support^\tT \md A_\support \right)^{-1} \left[\frac{1}{n} \md A_\support^\tT \md w_n - \lambda_n \check{\md u}_\support \right].
\end{equation}
Now, following the steps of \cite{Wainwright09}, if we are able to check that $\check{\md u}_n$ is a subgradient of $\ournorm$ at $\check{\md x}_n$ we will have established that $\check{\md x}_n$ is indeed a solution to the original problem. This means that we need to check the dual feasibility condition (from Lemma~\ref{lemma:norm_dual}, with the conditions on $\check{\md u}_{\support}$ following directly from its definition)
$$
\norm{\check{\md u}_{G \cap \support^\tc}}_\infty \leq \frac{\norm{\check{\md x}_{G}}_1}{\ournorm(\check{\md x}_n)},\quad \forall G \in \partition.
$$
Furthermore, if we are able to prove that the inequality above is strict, we will have shown that $\supp(\check{\md x}) = \supp(\md x^*) = \support$ and that the solution is unique. Finally, it is also shown in \cite{Wainwright09} that, if we can check that $\sign(\check{\md u}_\support) = \sign(\md x^\ast_\support)$, we will have established that the original problem has a unique solution $\check{\md x}$ with the correct signed support, that is $\mathbb{S}_\pm(\check{\md x}_n) = \mathbb{S}_\pm(\md x^\ast_n)$.

In summary, we only need to study the condition
\begin{equation} \label{eq:conditionsubg}
    \left\|
\md A_{G \cap \support^\tc}^\tT \md A_\support \left( \md A_\support^\tT \md A_\support \right)^{-1} \check{\md u}_\support +  \frac{1}{n\lambda_n} \md A_{G \cap \support^\tc}^\tT \Pi_{\md A_\support^\perp} \md w_n \right\|_\infty < \frac{\norm{\check{\md x}_{G}}_1}{\ournorm(\check{\md x}_n)},\quad \forall G \in \partition 
\end{equation}
together with, from (\ref{eq:xsuppminusxtrue}) and since $\sign(\check{\md u}_\support) = \sign(\check{\md x}_\support)$,
\begin{equation} \label{eq:conditionSign}
    \sign\left(\check{\md{x}}_\support\right) = \sign\left(\md x_\support^\ast + \left(\frac{1}{n} \md A_\support^\tT \md A_\support \right)^{-1} \left(\frac{1}{n} \md A_\support^\tT \md w_n - \lambda_n \check{\md u}_\support \right)\right).
\end{equation}
In order to establish (\ref{eq:conditionSign}), it is sufficient to establish that
\begin{equation}\label{eq:conditionSign2}
    \norm{\check{\md{x}}_\support - \md x_\support^\ast}_\infty  = 
    \biggnorm{\md A_\support^\# \md w_n - \lambda_n \left(\frac{1}{n} \md A_\support^\tT \md A_\support \right)^{-1}  \check{\md u}_\support }_\infty < |x^\ast|_{\min}
\end{equation}
where we have defined $\md A_\support^\# = \left(  \md A_\support^\tT \md A_\support \right)^{-1}  \md A_\support^\tT $.

Let $\Theta_{n}$ and $\widetilde{\Theta}_{n}$ respectively denote the events $\Theta_{n} =\{\text{(\ref{eq:conditionsubg}) holds true} \}$ and $\widetilde{\Theta}_{n}=\{\text{(\ref{eq:conditionSign2}) holds true} \} $. We will now show that, under the theorem assumptions, we can easily control the probability of the event $\Theta_{n} \cap \widetilde{\Theta}_{n}$ (or, equivalently, the event $\Theta_{n}^\tc \cup \widetilde{\Theta}_{n}^\tc$) for all $n$ large enough. By virtue of the union bound, it is sufficient to find an upper bound on the two probabilities $\Pr(\Theta_{n}^\tc)$ and $\Pr(\widetilde{\Theta}_{n}^\tc)$ separately. 

%%%%%%%%%%%%%%%%%%%%%%%%%%%%%%%%%%%%%%%%%%%%%%%%%%%%%%%%%%%%%%%%%%%%%%%%%%%%%%%%
%%%%%%%%%%%%%%%%%%%%%%%%%%%%%%%%%%%%%%%%%%%%%%%%%%%%%%%%%%%%%%%%%%%%%%%%%%%%%%%%
\subsection{Upper bound on $\Pr(\widetilde{\Theta}_{n}^\tc)$} \label{sec:tildetheta}

Using the triangle inequality in (\ref{eq:conditionSign2}) and the fact that $\ournormrest^\ast(\check{\md u}_\support) = 1$ we can readily observe that
\begin{align*}
    \norm{\check{\md{x}}_\support - \md x_\support^\ast}_\infty & \leq 
    \bignorm{\md A_\support^\# \md w_n}_\infty + \lambda_n 
    \biggnorm{\left(\frac{1}{n} \md A_\support^\tT \md A_\support \right)^{-1}  \check{\md u}_\support }_\infty \\
   & \leq \bignorm{\md A_\support^\# \md w_n}_\infty + \lambda_n 
    \biggnorm{\left(\frac{1}{n} \md A_\support^\tT \md A_\support \right)^{-1}}_{\ournormrest^\ast \to \infty} \leq \bignorm{\md A_\support^\# \md w_n}_\infty + \lambda_n C_\infty.
\end{align*}
Therefore, using \cite{Wainwright09}, we can see that each entry of $\md A_\support^\# \md w_n$ is zero-mean and sub-Gaussian with parameter at most
$$
\frac{\sigma^2}{n} \biggnorm{\Bigl(\frac{1}{n}\md A_\support^\tT \md A_\support\Bigr)^{-1}}_{2\to 2}
\le \frac{\sigma^2}{n C_{\min}}
$$
because of (A2) and, for any fixed $\upsilon \in (0,1)$ independent of $n$, we can write
\begin{align*}
    \Pr \left( \widetilde{\Theta}^\tc_n \right) & \leq \Pr \left( \bignorm{\md A_\support^\# \md w_n}_\infty \geq \upsilon |x^\ast|_{\min} - \lambda_n  C_\infty  \right) \\
    & \leq 2 \exp \left(- \frac{n C_{\min}}{2 \sigma^2} \left( \upsilon |x^\ast|_{\min} - \lambda_n  C_\infty \right)^2 +\log |\support|  \right)
\end{align*}
and therefore $\widetilde{\Theta}_{n}$ holds with an exponentially high probability. 
%%%%%%%%%%%%%%%%%%%%%%%%%%%%%%%%%%%%%%%%%%%%%%%%%%%%%%%%%%%%%%%%%%%%%%%%%%%%%%%%
%%%%%%%%%%%%%%%%%%%%%%%%%%%%%%%%%%%%%%%%%%%%%%%%%%%%%%%%%%%%%%%%%%%%%%%%%%%%%%%%
\subsection{Upper bound on $\Pr(\Theta_{n}^\tc)$}
Observing (\ref{eq:conditionsubg}), we can write
\begin{equation} 
\Pr(\Theta_{n}^\tc) = 
\Pr\left( \max_{G \in \partition} %\quad \text{s.t.} \quad
\left\|
\md A_{G \cap \support^\tc}^\tT \left[ \md A_\support \left( \md A_\support^\tT \md A_\support \right)^{-1} \check{\md u}_\support +  \frac{1}{n\lambda_n} \Pi_{\md A_\support^\perp} \md w_n \right] \right\|_\infty - \frac{\norm{\check{\md x}_{G}}_1}{\ournorm(\check{\md x}_n)} \geq 0
\right).
\end{equation}
By the triangle inequality and the
definition of $\check{\md u}_\support$ (and, in particular, the fact that
$\ournormrest^*(\check{\md u}_\support) = 1$), we can write
\begin{multline*}
\biggnorm{\md A_{G\cap\support^\tc}^\tT \biggl[\frac{1}{n} \md A_\support
\Bigl(\frac{1}{n}\md A_\support^\tT \md A_\support \Bigr)^{-1} \check{\md u}_\support
+ \frac{1}{n\lambda_n} \md \Pi_{\md A_\support^\perp} \md w_n \biggr]}_\infty \\
\le \biggnorm{\frac{1}{n} \md A_{G\cap\support^\tc}^\tT \md A_\support
\Bigl(\frac{1}{n}\md A_\support^\tT \md A_\support
\Bigr)^{-1}}_{\ournormrest^* \to \infty} + \biggnorm{\frac{1}{n\lambda_n}
\md A_{G\cap\support^\tc}^\tT \md \Pi_{\md A_\support^\perp} \md w_n}_\infty \\
< \frac{\norm{\md x^\ast_G}_1}{\ournorm(\md x_n^\ast)} -\gamma_n + \biggnorm{\frac{1}{n\lambda_n}
\md A_{G\cap\support^\tc}^\tT \md \Pi_{\md A_\support^\perp} \md w_n}_\infty
\end{multline*}
where we have used (\ref{eq:incoherence}). 
This implies that we can write  
\begin{multline*}
\Pr(\Theta_{n}^\tc) \leq \Pr \left(
\max_{G \in \partition} \biggnorm{\frac{1}{n\lambda_n}
\md A_{G\cap\support^\tc}^\tT \md \Pi_{\md A_\support^\perp} \md w_n}_\infty
+ \frac{\norm{{\md x}^\ast_{G}}_1}{\ournorm({\md x}^\ast_n)} - \frac{\norm{\check{\md x}_{G}}_1}{\ournorm(\check{\md x}_n)} > \gamma_n
\right) \\
\leq \Pr \left(
\biggnorm{\frac{1}{n\lambda_n}
\md A_{\support^\tc}^\tT \md \Pi_{\md A_\support^\perp} \md w_n}_\infty > \frac{\gamma_n}{2}
\right) +
\Pr \left(
\max_{G \in \partition} \left| \frac{\norm{{\md x}^\ast_{G}}_1}{\ournorm({\md x}^\ast_n)} - \frac{\norm{\check{\md x}_{G}}_1}{\ournorm(\check{\md x}_n)} \right| > \frac{\gamma_n}{2}
\right) 
\end{multline*}
where we have used the fact that 
$$
\max_{G\in \partition} \biggnorm{\frac{1}{n\lambda_n}
\md A_{G\cap\support^\tc}^\tT \md \Pi_{\md A_\support^\perp} \md w_n}_\infty
= \biggnorm{\frac{1}{n\lambda_n}
\md A_{\support^\tc}^\tT \md \Pi_{\md A_\support^\perp} \md w_n}_\infty.
$$
Now, it is proven in full detail in \cite{Wainwright09} that, under our assumptions on the
noise vector $\md w$, we have
$$
\Pr \biggl(\biggnorm{\frac{1}{n\lambda_n}
\md A_{\support^\tc}^\tT \md \Pi_{\md A_\support^\perp} \md w_n}_\infty >
\frac{\gamma_n}{2}\biggr) \le 2 \exp\Bigl\{-\frac{\lambda_n^2 n \gamma_n^2}{8
\sigma^2} + \log(p-\abs{\support})\Bigr\}.
$$

On the other hand, for any fixed $G \in \partition$, by the triangle inequality,
$$
\left| \norm{\check{\md x}_{G \cap \support}}_1 - \norm{\md x^\ast_{G \cap \support}}_1  \right| \leq \norm{\check{\md x}_{G \cap \support} - \md x^\ast_{G \cap \support}}_1 
$$
and also
$$
\left| \ournormrest \left(\check{\md x}_{ \support} \right) -\ournormrest \left( \md x^\ast_{\support}\right) \right|
\leq \ournormrest \left(\check{\md x}_{ \support} - \md x^\ast_{\support}\right).
$$
Using this, we can immediately express 
\begin{multline*}
\left| \frac{\norm{\check{\md x}_{G \cap \support}}_1}{\ournormrest (\check{\md x}_\support )} - 
\frac{\norm{\md x^\ast_{G \cap \support}}_1}{\ournormrest (\md x^\ast_\support )} \right| \leq \\
\leq
\frac{\norm{\check{\md x}_{G \cap \support}}_1 \left| \ournormrest ({\md x}^\ast_\support ) - \ournormrest (\check{\md x}_\support ) \right|}{\ournormrest (\check{\md x}_\support )\ournormrest ({\md x}^\ast_\support )}
+ \frac{|\norm{\md x^\ast_{G \cap \support}}_1 - \norm{\check{\md x}_{G \cap \support}}_1|}{\ournormrest (\md x^\ast_\support )} \leq \\
\leq \left| \frac{\norm{\check{\md x}_{G \cap \support}}_1}{\ournormrest (\check{\md x}_\support )} - 
\frac{\norm{\md x^\ast_{G \cap \support}}_1}{\ournormrest (\md x^\ast_\support )} \right|
\frac{ \ournormrest ({\md x}^\ast_\support - \check{\md x}_\support ) }{\ournormrest ({\md x}^\ast_\support )} +\\
+\frac{\norm{\md x^\ast_{G \cap \support}}_1}{\ournormrest (\md x^\ast_\support )} 
\frac{\ournormrest ({\md x}^\ast_\support - \check{\md x}_\support ) }{\ournormrest ({\md x}^\ast_\support )} + \frac{\norm{\md x^\ast_{G \cap \support} - \check{\md x}_{G \cap \support}}_1}{\ournormrest (\md x^\ast_\support )}.
\end{multline*}
Next, observe that we can write, for any given $\md x_\support$,
$$
\Phi_\support(\partition) \min_{j \in \support}|x_{n,j}| \leq \ournormrest(\md x_\support) \leq \Phi_\support(\partition) \max_{j \in \support}|x_{n,j}|
$$
where $ \Phi_\support(\partition) $ is defined in (\ref{eq:defPhi}).
Hence, using the upper and lower bounds on the nonzero entries of $\md x^\ast_n$ we are able to refine the above inequality as 
\begin{align*}
\left| \frac{\norm{\check{\md x}_{G \cap \support}}_1}{\ournormrest (\check{\md x}_\support )} - 
\frac{\norm{\md x^\ast_{G \cap \support}}_1}{\ournormrest (\md x^\ast_\support )} \right| & \leq \left| \frac{\norm{\check{\md x}_{G \cap \support}}_1}{\ournormrest (\check{\md x}_\support )} - 
\frac{\norm{\md x^\ast_{G \cap \support}}_1}{\ournormrest (\md x^\ast_\support )} \right|
\frac{\norm{{\md x}^\ast_\support - \check{\md x}_\support}_\infty}{|x^\ast|_{\min}} + \\
& +\frac{|G \cap \support|}{\Phi_\support(\partition)} \left(1 + \frac{|x^\ast|_{\max}}{|x^\ast|_{\min}} \right) \frac{\norm{{\md x}^\ast_\support - \check{\md x}_\support}_\infty}{|x^\ast|_{\min}}.
\end{align*}
Now, let us consider the error term
$$
\Delta_n = \frac{\norm{{\md x}^\ast_\support - \check{\md x}_\support}_\infty}{|x^\ast|_{\min}}
$$ 
and observe that, according to the above characterization of the event $\widetilde{\Theta}_n$, we have $\Delta_n < \upsilon$ for any $0<\upsilon<1$, independently of $n$ and with exponentially high
probability. We can therefore state that 
\begin{multline*}
\Pr \left(
\max_{G \in \partition} \left| \frac{\norm{{\md x}^\ast_{G}}_1}{\ournorm({\md x}^\ast_n)} - \frac{\norm{\check{\md x}_{G}}_1}{\ournorm(\check{\md x}_n)} \right| > \frac{\gamma_n}{2}
\right) \\
\leq
\Pr \left( \left.
\max_{G \in \partition} \left| \frac{\norm{{\md x}^\ast_{G}}_1}{\ournorm({\md x}^\ast_n)} - \frac{\norm{\check{\md x}_{G}}_1}{\ournorm(\check{\md x}_n)} \right| > \frac{\gamma_n}{2}
\right|_{\Delta_n <\upsilon}\right) + \Pr(\Delta_n \geq \upsilon) \\
\leq \Pr \left( \left.
\frac{\upsilon}{1-\upsilon} \frac{\max_{G \in \partition} |G \cap \support|}{\Phi_\support(\partition)} \left(1+ \frac{|x^\ast|_{\max}}{|x^\ast|_{\min}} \right)  > \frac{\gamma_n}{2}
\right|_{\Delta_n < \upsilon} \right) + \Pr( \Delta_n \geq \upsilon).
\end{multline*}
Now, we observe from (\ref{eq:incoherence}) that, since
$$
\ournormrest(\md x^\ast_n) > \min_{G \in \partition} \norm{\md x^\ast_G}_1 \sqrt{|\partition|}
$$
we have
$$
\gamma_n < \min_{G \in \partition} \frac{\norm{\md x_G^\ast}_1}{\ournormrest(\md x_n^\ast)}  < \frac{1}{\sqrt{|\partition|}}
$$
and therefore, since $\Phi_\support(\partition) \le \max_{G \in \partition}
\abs{G \cap \support}\sqrt{\abs{\partition}}$ from (\ref{eq:defPhi}),
$$
\frac{\gamma_n \Phi_\support(\partition)}{2 \max_{G \in \partition} |G \cap \support|} \left( 1 + \frac{|x^\ast|_{\max}}{|x^\ast|_{\min}} \right)^{-1} 
< 
\frac{1}{2}  \left( 1 + \frac{|x^\ast|_{\max}}{|x^\ast|_{\min}} \right)^{-1} 
< 
\frac{1}{4}.
$$
Therefore, if we choose $\upsilon > \frac{4}{5}$
we can guarantee that 
\begin{align*}
\Pr \left(
\max_{G \in \partition} \left| \frac{\norm{{\md x}^\ast_{G}}_1}{\ournorm({\md x}^\ast_n)} - \frac{\norm{\check{\md x}_{G}}_1}{\ournorm(\check{\md x}_n)} \right| > \frac{\gamma_n}{2}
\right) & \leq \Pr(\Delta_n \geq \upsilon) \\
& \leq 2 \exp \left(- \frac{n C_{\min}}{2 \sigma^2} \left( \upsilon |x^\ast|_{\min} - \lambda_n  C_\infty \right)^2 +\log |\support|  \right)
\end{align*}
for all $n$ sufficiently large, where the last inequality follows from Section \ref{sec:tildetheta}. In conclusion, we have shown that 
\begin{align*}
\Pr (\widetilde{\Theta}_n^\tc \cup \Theta_n^\tc ) & \leq 
4 \exp \left(- \frac{n C_{\min}}{2 \sigma^2} \left( \upsilon |x^\ast|_{\min} - \lambda_n  C_\infty \right)^2 +\log |\support|  \right) \\
& + 2 \exp\Bigl\{-\frac{\lambda_n^2 n \gamma_n^2}{8
\sigma^2} + \log(p-\abs{\support})\Bigr\}
\end{align*}
for all $n$ sufficiently large. 
With the choice of $\lambda_n$ in the statement of the theorem, the above probability 
converges to zero at an exponential rate, thus completing the proof.


\begin{thebibliography}{38}
\providecommand{\natexlab}[1]{#1}
\providecommand{\url}[1]{\texttt{#1}}
\expandafter\ifx\csname urlstyle\endcsname\relax
  \providecommand{\doi}[1]{doi: #1}\else
  \providecommand{\doi}{doi: \begingroup \urlstyle{rm}\Url}\fi

\bibitem[Adcock et~al.(2017)Adcock, Hansen, Poon, and Roman]{adcock17}
B.~Adcock, A.~C. Hansen, C.~Poon, and B.~Roman.
\newblock Breaking the coherence barrier: a new theory for compressed sensing.
\newblock \emph{Forum of Mathematics, Sigma}, 5, 2017.

\bibitem[Bach et~al.(2012)Bach, Jenatton, Mairal, and Obozinski]{Bach12}
F.~Bach, R.~Jenatton, J.~Mairal, and G.~Obozinski.
\newblock Optimization with sparsity-inducing penalties.
\newblock \emph{Found. Trends Mach. Learn.}, 4\penalty0 (1):\penalty0 1--106,
  Aug. 2012.

\bibitem[Bastounis and Hansen(2017)]{Bastounis17}
A.~Bastounis and A.~C. Hansen.
\newblock On the absence of uniform recovery in many real-world applications of
  compressed sensing and the restricted isometry property and nullspace
  property in levels.
\newblock \emph{SIAM Journal on Imaging Sciences}, 10\penalty0 (1):\penalty0
  335--371, 2017.

\bibitem[Bayram and Bulek(2017)]{Bayram17}
{\. I}.~Bayram and S.~Bulek.
\newblock A penalty function promoting sparsity within and across groups.
\newblock \emph{{IEEE} Trans. Signal Process.}, 65\penalty0 (16):\penalty0
  4238--4251, Aug. 2017.

\bibitem[Beck and Teboulle(2009)]{Beck09}
A.~Beck and M.~Teboulle.
\newblock A fast iterative shrinkage-thresholding algorithm for linear inverse
  problems.
\newblock \emph{SIAM J. Imaging Sci.}, 2\penalty0 (1):\penalty0 183--202, Jan.
  2009.

\bibitem[Bertsekas(2003)]{Bertsekas_Convex}
D.~P. Bertsekas.
\newblock \emph{Convex Analysis and Optimization}.
\newblock Athena Scientific, Nashua, NH, USA, 2003.
\newblock ISBN 978-1886529458.

\bibitem[Boyd and Vandenberghe(2004)]{Boyd04}
S.~Boyd and L.~Vandenberghe.
\newblock \emph{Convex Optimization}.
\newblock Cambridge University Press, 2004.

\bibitem[Campbell and Allen(2017)]{Cambell17}
F.~Campbell and G.~I. Allen.
\newblock {Within group variable selection through the Exclusive Lasso}.
\newblock \emph{Electronic Journal of Statistics}, 11\penalty0 (2):\penalty0
  4220 -- 4257, 2017.

\bibitem[Cand\`es et~al.(2006)Cand\`es, Romberg, and Tao]{Candes06}
E.~J. Cand\`es, J.~K. Romberg, and T.~Tao.
\newblock Stable signal recovery from incomplete and inaccurate measurements.
\newblock \emph{Communications on Pure and Applied Mathematics}, 59\penalty0
  (8):\penalty0 1207--1223, 2006.

\bibitem[Chandrasekaran et~al.(2010)Chandrasekaran, Recht, Parrilo, and
  Willsky]{Chandra2010}
V.~Chandrasekaran, B.~Recht, P.~A. Parrilo, and A.~S. Willsky.
\newblock The convex geometry of linear inverse problems.
\newblock \emph{Found. Comput. Math.}, 12\penalty0 (6):\penalty0 805--849, Dec.
  2010.

\bibitem[Chen et~al.(2011)Chen, Yuan, Chen, Yan, and Chua]{Chen11}
X.~Chen, X.-T. Yuan, Q.~Chen, S.~Yan, and T.-S. Chua.
\newblock Multi-label visual classification with label exclusive context.
\newblock In \emph{2011 International Conference on Computer Vision}, pages
  834--841, 2011.

\bibitem[Chun and Adcock(2017)]{Chun17}
I.~Y. Chun and B.~Adcock.
\newblock Compressed sensing and parallel acquisition.
\newblock \emph{IEEE Transactions on Information Theory}, 63\penalty0
  (8):\penalty0 4860--4882, 2017.

\bibitem[Combettes and Pesquet(2011)]{Combettes11}
P.~L. Combettes and J.-C. Pesquet.
\newblock Proximal splitting methods in signal processing.
\newblock In H.~H. Bauschke, R.~S. Burachik, P.~L. Combettes, V.~Elser, D.~R.
  Luke, and H.~Wolkowicz, editors, \emph{Fixed-Point Algorithms for Inverse
  Problems in Science and Engineering}, volume~49 of \emph{Springer
  Optimization and Its Applications}. Springer, 2011.

\bibitem[Donoho(2006)]{Donoho06}
D.~Donoho.
\newblock Compressed sensing.
\newblock \emph{IEEE Transactions on Information Theory}, 52\penalty0
  (4):\penalty0 1289--1306, 2006.

\bibitem[Dorsch and Rauhut(2017)]{Dorsch17}
D.~Dorsch and H.~Rauhut.
\newblock Refined analysis of sparse {MIMO} radar.
\newblock \emph{Journal of Fourier Analysis and Applications}, 23\penalty0
  (3):\penalty0 485--529, 2017.

\bibitem[Huang and Zhang(2010)]{Huang10}
J.~Huang and T.~Zhang.
\newblock {The benefit of group sparsity}.
\newblock \emph{The Annals of Statistics}, 38\penalty0 (4):\penalty0 1978 --
  2004, 2010.

\bibitem[Huang et~al.(2011)Huang, Zhang, and Metaxas]{Huang11}
J.~Huang, T.~Zhang, and D.~Metaxas.
\newblock Learning with structured sparsity.
\newblock \emph{J. Mach. Learn. Res.}, 12:\penalty0 3371--3412, Nov. 2011.

\bibitem[Jenatton et~al.(2011{\natexlab{a}})Jenatton, Audibert, and
  Bach]{Jenatton11b}
R.~Jenatton, J.-Y. Audibert, and F.~Bach.
\newblock Structured variable selection with sparsity-inducing norms.
\newblock \emph{J. Mach. Learn. Res.}, 12:\penalty0 2777--2824, July
  2011{\natexlab{a}}.

\bibitem[Jenatton et~al.(2011{\natexlab{b}})Jenatton, Mairal, Obozinski, and
  Bach]{Jenatton11}
R.~Jenatton, J.~Mairal, G.~Obozinski, and F.~Bach.
\newblock Proximal methods for hierarchical sparse coding.
\newblock \emph{J. Mach. Learn. Res.}, 12:\penalty0 2297--2334, July
  2011{\natexlab{b}}.

\bibitem[Kok et~al.(2019)Kok, Choi, Oh, and Choi]{Kok19}
B.~C. Kok, J.~S. Choi, H.~Oh, and J.~Y. Choi.
\newblock Sparse extended redundancy analysis: Variable selection via the
  exclusive lasso.
\newblock \emph{Multivariate Behavioral Research}, pages 1--21, 2019.
\newblock URL \url{https://doi.org/10.1080/00273171.2019.1694477}.

\bibitem[Kong et~al.(2014)Kong, Fujimaki, Liu, Nie, and Ding]{Kong14}
D.~Kong, R.~Fujimaki, J.~Liu, F.~Nie, and C.~Ding.
\newblock Exclusive feature learning on arbitrary structures via
  $\ell_{1,2}$-norm.
\newblock In Z.~Ghahramani, M.~Welling, C.~Cortes, N.~Lawrence, and K.~Q.
  Weinberger, editors, \emph{Advances in Neural Information Processing
  Systems}, volume~27. Curran Associates, Inc., 2014.

\bibitem[Kowalski(2009)]{Kowalski2009}
M.~Kowalski.
\newblock Sparse regression using mixed norms.
\newblock \emph{Applied and Computational Harmonic Analysis}, 27\penalty0
  (3):\penalty0 303--324, 2009.

\bibitem[Li and Adcock(2019)]{Li19}
C.~Li and B.~Adcock.
\newblock Compressed sensing with local structure: Uniform recovery guarantees
  for the sparsity in levels class.
\newblock \emph{Applied and Computational Harmonic Analysis}, 46\penalty0
  (3):\penalty0 453--477, 2019.
\newblock URL
  \url{https://www.sciencedirect.com/science/article/pii/S1063520317300490}.

\bibitem[Lin et~al.(2019)Lin, Sun, Toh, and Yuan]{lin2019dual}
M.~Lin, D.~Sun, K.-C. Toh, and Y.~Yuan.
\newblock A dual {N}ewton based preconditioned proximal point algorithm for
  exclusive lasso models, 2019.

\bibitem[Moreau(1962)]{Moreau62}
J.~J. Moreau.
\newblock Fonctions convexes duales et points proximaux dans un espace
  {H}ilbertien.
\newblock \emph{C. R. Acad. Sci. Paris}, 255:\penalty0 2897--2899, 1962.

\bibitem[Nesterov(1983)]{Nesterov83}
Y.~Nesterov.
\newblock A method for unconstrained convex minimization problem with the rate
  of convergence {$O(1/k^2)$}.
\newblock \emph{Soviet Math. Dokl.}, 27\penalty0 (2):\penalty0 372--376, 1983.

\bibitem[Nesterov(2007)]{Nesterov2007}
Y.~Nesterov.
\newblock Gradient methods for minimizing composite objective functions.
\newblock CORE Discussion Paper 2007/76, Universit\'e Catholique de Louvain
  (UCL), 2007.
\newblock URL
  \url{http://www.optimization-online.org/DB_FILE/2007/09/1784.pdf}.

\bibitem[Obozinski et~al.(2011)Obozinski, Jacob, and Vert]{Obozinski2011}
G.~Obozinski, L.~Jacob, and J.-P. Vert.
\newblock Group lasso with overlaps: the latent group approach.
\newblock Technical Report inria-00628498, Inria, 2011.
\newblock URL \url{https://arxiv.org/abs/1110.0413}.

\bibitem[Oymak et~al.(2013)Oymak, Thrampoulidis, and Hassibi]{Oymak13}
S.~Oymak, C.~Thrampoulidis, and B.~Hassibi.
\newblock The squared-error of generalized lasso: A precise analysis.
\newblock In \emph{2013 51st Annual Allerton Conference on Communication,
  Control, and Computing (Allerton)}, pages 1002--1009, 2013.

\bibitem[Rao et~al.(2016)Rao, Nowak, Cox, and Rogers]{Rao16}
N.~Rao, R.~Nowak, C.~Cox, and T.~Rogers.
\newblock Classification with the sparse group lasso.
\newblock \emph{{IEEE} Trans. Signal Process.}, 64\penalty0 (2):\penalty0
  448--463, Jan. 2016.

\bibitem[Shervashidze and Bach(2015)]{Shervashidze15}
N.~Shervashidze and F.~Bach.
\newblock Learning the structure for structured sparsity.
\newblock \emph{{IEEE} Trans. Signal Process.}, 63\penalty0 (18):\penalty0
  4894--4902, Sept. 2015.

\bibitem[Tibshirani(1996)]{Tibshirani96}
R.~Tibshirani.
\newblock Regression shrinkage and selection via the lasso.
\newblock \emph{Journal of the Royal Statistical Society. Series B
  (Methodological)}, 58\penalty0 (1):\penalty0 267--288, 1996.

\bibitem[Villa et~al.(2014)Villa, Rosasco, Mosci, and Verri]{Villa2014}
S.~Villa, L.~Rosasco, S.~Mosci, and A.~Verri.
\newblock Proximal methods for the latent group lasso penalty.
\newblock \emph{Comput. Optim. Appl.}, 50\penalty0 (2):\penalty0 381--487, June
  2014.

\bibitem[Wainwright(2009)]{Wainwright09}
M.~J. Wainwright.
\newblock Sharp thresholds for high-dimensional and noisy sparsity recovery
  using {$\ell_1$}-constrained quadratic programming ({L}asso).
\newblock \emph{{IEEE} Trans. Inf. Theory}, 55\penalty0 (5):\penalty0
  2183--2202, May 2009.

\bibitem[Yamada et~al.(2017)Yamada, Koh, Iwata, Shawe-Taylor, and
  Kaski]{yamada17a}
M.~Yamada, T.~Koh, T.~Iwata, J.~Shawe-Taylor, and S.~Kaski.
\newblock {Localized Lasso for High-Dimensional Regression}.
\newblock In A.~Singh and J.~Zhu, editors, \emph{Proceedings of the 20th
  International Conference on Artificial Intelligence and Statistics},
  volume~54 of \emph{Proceedings of Machine Learning Research}, pages 325--333,
  Fort Lauderdale, FL, USA, 20--22 Apr 2017. PMLR.
\newblock URL \url{http://proceedings.mlr.press/v54/yamada17a.html}.

\bibitem[Yuan and Lin(2006)]{Yuan2006}
M.~Yuan and Y.~Lin.
\newblock Model selection and estimation in regression with grouped variables.
\newblock \emph{J. R. Statist. Soc. Ser. B}, 68\penalty0 (1):\penalty0 49--67,
  2006.

\bibitem[Zhang et~al.(2016)Zhang, Ghanem, Liu, Xu, and Ahuja]{Zhang16}
T.~Zhang, B.~Ghanem, S.~Liu, C.~Xu, and N.~Ahuja.
\newblock Robust visual tracking via exclusive context modeling.
\newblock \emph{IEEE Transactions on Cybernetics}, 46\penalty0 (1):\penalty0
  51--63, 2016.

\bibitem[Zhou et~al.(2010)Zhou, Jin, and Hoi]{zhou10a}
Y.~Zhou, R.~Jin, and S.~C. Hoi.
\newblock Exclusive lasso for multi-task feature selection.
\newblock In Y.~W. Teh and M.~Titterington, editors, \emph{Proceedings of the
  Thirteenth International Conference on Artificial Intelligence and
  Statistics}, volume~9 of \emph{Proceedings of Machine Learning Research},
  pages 988--995, Chia Laguna Resort, Sardinia, Italy, 13--15 May 2010. PMLR.
\newblock URL \url{http://proceedings.mlr.press/v9/zhou10a.html}.

\end{thebibliography}
\end{document}